\crefname{hypothesis}{Hypothesis}{Hypotheses}
\crefname{fact}{Fact}{Facts}
\title{Data-Driven Model Reduction Using WELDNet: Windowed autoEncoders for Learning Dynamics\thanks{
Preprint
\funding{Biraj Dahal, Jiahui Cheng and Wenjing Liao are partially supported by National Science Foundation under the
NSF DMS 2145167 and the U.S. Department of Energy under the DOE SC0024348. Hao Liu was partially supported by HKRGC ECS 22302123, HKRGC GRF 12301925 and Guangdong and Hong Kong Universities ``1+1+1'' Joint Research Collaboration Scheme UICR0800008-24. Rongjie Lai's research is supported in party by NSF DMS-2401297. }}
}
\author{Biraj Dahal\thanks{School of Mathematics, Georgia Institute of Technology, Atlanta, GA, USA \email{bdahal6@gatech.edu}.}
\and Jiahui Cheng\thanks{Meta, USA.} 
\and Hao Liu\thanks{Department of Mathematics, Hong Kong Baptist University, Hong Kong, China.} 
\and Rongjie Lai\thanks{Department of Mathematics, Purdue University, West Lafayette, IN, USA.} 
\and Wenjing Liao\thanks{School of Mathematics, Georgia Institute of Technology, Atlanta, GA, USA.}}
\newcommand{\Lip}{\mathrm{Lip}}
\newcommand{\NN}{{\rm NN}}
\DeclareMathOperator*{\argmin}{arg\,min}
\begin{document}

\maketitle

\begin{abstract}
Many problems in science and engineering involve time-dependent, high dimensional datasets arising from complex physical processes, which are costly to simulate. In this work, we propose \textbf{WeldNet}: \textbf{W}indowed \textbf{E}ncoders for \textbf{L}earning \textbf{D}ynamics, a data-driven nonlinear model reduction framework to build a low-dimensional surrogate model for complex  evolution systems. Given time-dependent training data, we split the time domain into multiple overlapping windows, within which nonlinear dimension reduction  is performed by auto-encoders to capture latent codes. Once a low-dimensional representation of the data is learned, a propagator network is trained to capture the evolution of the latent codes in each window, and a transcoder is trained to connect the latent codes between adjacent windows. 
The proposed windowed decomposition significantly simplifies propagator training by breaking long-horizon dynamics into multiple short, manageable segments, while the transcoders ensure consistency across windows.
In addition to the algorithmic framework, we develop a mathematical theory establishing the representation power of WeldNet under the manifold hypothesis, justifying the success of nonlinear model reduction via deep autoencoder-based architectures. 
Our numerical experiments on various differential equations indicate that WeldNet can capture nonlinear latent  structures and their underlying dynamics, outperforming both traditional projection-based approaches and recently developed nonlinear model reduction methods.
\end{abstract}

\begin{keywords}
model reduction, manifold learning, scientific machine learning, dynamical systems
\end{keywords}
\begin{MSCcodes}
68Q25, 68R10, 68U05
\end{MSCcodes}

\section{Introduction}
\label{sec:introduction}
Many real-world applications in science and engineering involve large-scale, complex, and costly data simulations or inversions of physical processes. However, the high-dimensional nature of these models often creates overwhelming demands on computational resources. Model reduction plays a crucial role in addressing this challenge, which helps to reduce the data dimension and problem size \cite{hesthaven2016certified,benner2017model,benner2015survey,volkwein2011model,rozza2008reduced,brunton2022data,quarteroni2015reduced}. 

Linear model reduction techniques for differential equations and dynamical systems have been well-established in  literature \cite{benner2015survey,benner2017model,volkwein2011model}. Classical projection-based model reduction approaches have demonstrated great success when
the underlying model is linear and low-dimensional. 
Representative projection-based model reduction methods include Proper Orthogonal Decomposition (POD) \cite{berkooz1993proper,pinnau2008model,volkwein2011model},  reduced-basis techniques \cite{prud2002reliable,rozza2008reduced}, the Principal Component Analysis (PCA) approach \cite{moore1981principal}, rational interpolation \cite{gugercin2008h_2,baur2011interpolatory}, Galerkin projection \cite{holmes2012turbulence,rowley2004model}, etc. Most projection-based model reduction methods rely on projecting high-dimensional models onto a low-dimensional linear subspace.

In real-world applications, many objects exhibit low-dimensional nonlinear structures \cite{tenenbaum2000global,roweis2000nonlinear}. Simple transformations, such as translations or rotations, place these objects on low-dimensional nonlinear manifolds, which cannot be efficiently captured by linear subspaces.
 Meanwhile, many physical processes are inherently nonlinear, including fluid dynamics, nonlinear optical phenomena, and shallow water wave propagation.
 When linear model reduction methods are applied to reduce evolutionary equations, their optimal performance is quantified by the Kolmogorov $n$-width \cite{pinkus2012n}. 
 For diffusion-dominated problems, the Kolmogorov $n$-width decays rapidly as the subspace dimension increases, enabling linear methods to achieve significant success with well-established justification \cite{bachmayr2017kolmogorov,binev2011convergence}. Unfortunately, many differential equations, particularly advection-dominated problems, display a slow decay of the Kolmogorov $n$-width. In these cases, linear model reduction methods require a sufficiently large reduced dimensionality to achieve acceptable accuracy \cite{ohlberger2015reduced}, which demonstrates the limitations of linear model reduction methods in handling  nonlinear structures.

The limitations of linear methods drive the development of nonlinear model reduction techniques, which aim to leverage low-dimensional nonlinear structures in data and models \cite{coifman2008diffusion,crosskey2017atlas,ye2024nonlinear}.   As deep learning has gained popularity in recent years, deep learning methods have been increasingly applied to address nonlinear model reduction \cite{otto2019linearly,wang2018model,lee2020model,fresca2021comprehensive,gonzalez2018deep,fresca2021comprehensive,si2024latent,xiao2024ld,zhang2025coefficient}. 
%
Many works  utilize  autoencoder \cite{belkin2003laplacian} for dimension reduction, and then learn the unknown physical process on latent variables by a neural network \cite{hesthaven2018non,o2022derivative,wang2019non,otto2019linearly,wang2018model,lee2020model,fresca2021comprehensive,gonzalez2018deep,fresca2021comprehensive,franco2023deep,liu2025generalization}. However, when the solution manifold of evolutionary equations has evolved for a long time, a global  dimension reduction may not well capture the effective latent parameters at all time points. 
 
In this work, we propose Windowed Encoders for Learning Dynamics with Neural Networks (WeldNet) for nonlinear model reduction. WeldNet uses encoder networks to perform nonlinear dimension reduction and learn the evolution operation in the encoded domain. The time domain is divided into several sequentially overlapping subintervals called ``windows''. In each window, we train an autoencoder to learn the latent codes and a propagator to evolve the latent codes in time. Transcoder networks are trained to connect two adjacent windows on their overlap, allowing an initial condition to be encoded, evolved to the terminal time, and then decoded. After training, WeldNet can be used to approximate the trajectory at any time given an initial condition. Figure \ref{fig:welddiagram} diagrams our WeldNet method of trajectory learning for time-dependent data.

\begin{figure}
    \centering
    \includegraphics[width=0.9\linewidth]{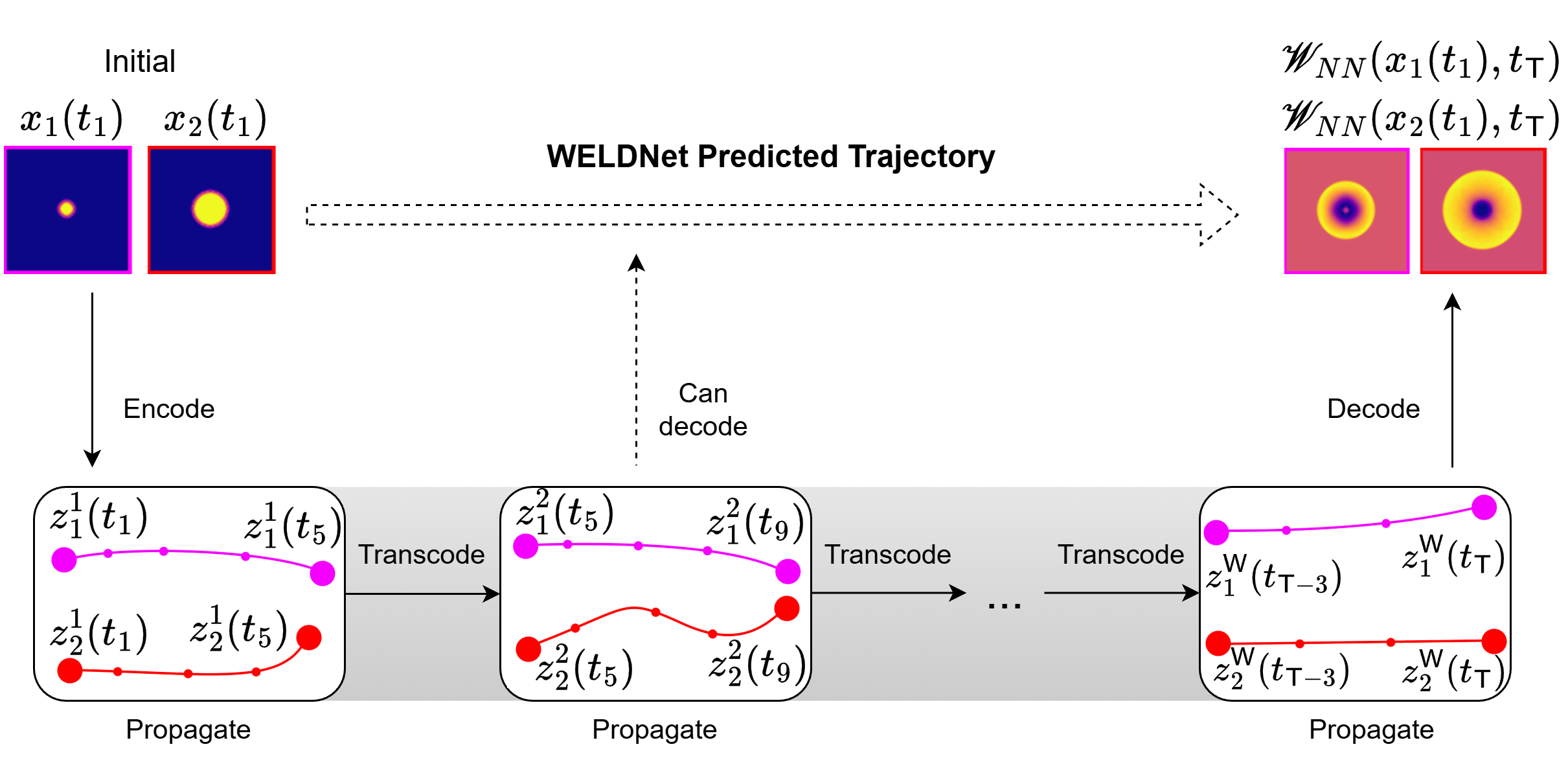}
   \caption{WeldNet: Initial conditions are encoded, propagated within windows, transcoded between windows, and decoded. In this example, there are $\sfW$ windows and $\sfT$ time steps. $z^i_j(t_k)$ denotes the latent space representation of $x_j(t_k)$ according to window $i$.}
     \label{fig:welddiagram}
\end{figure}
     
\begin{figure}
    \centering
    \includegraphics[width=0.7\linewidth]{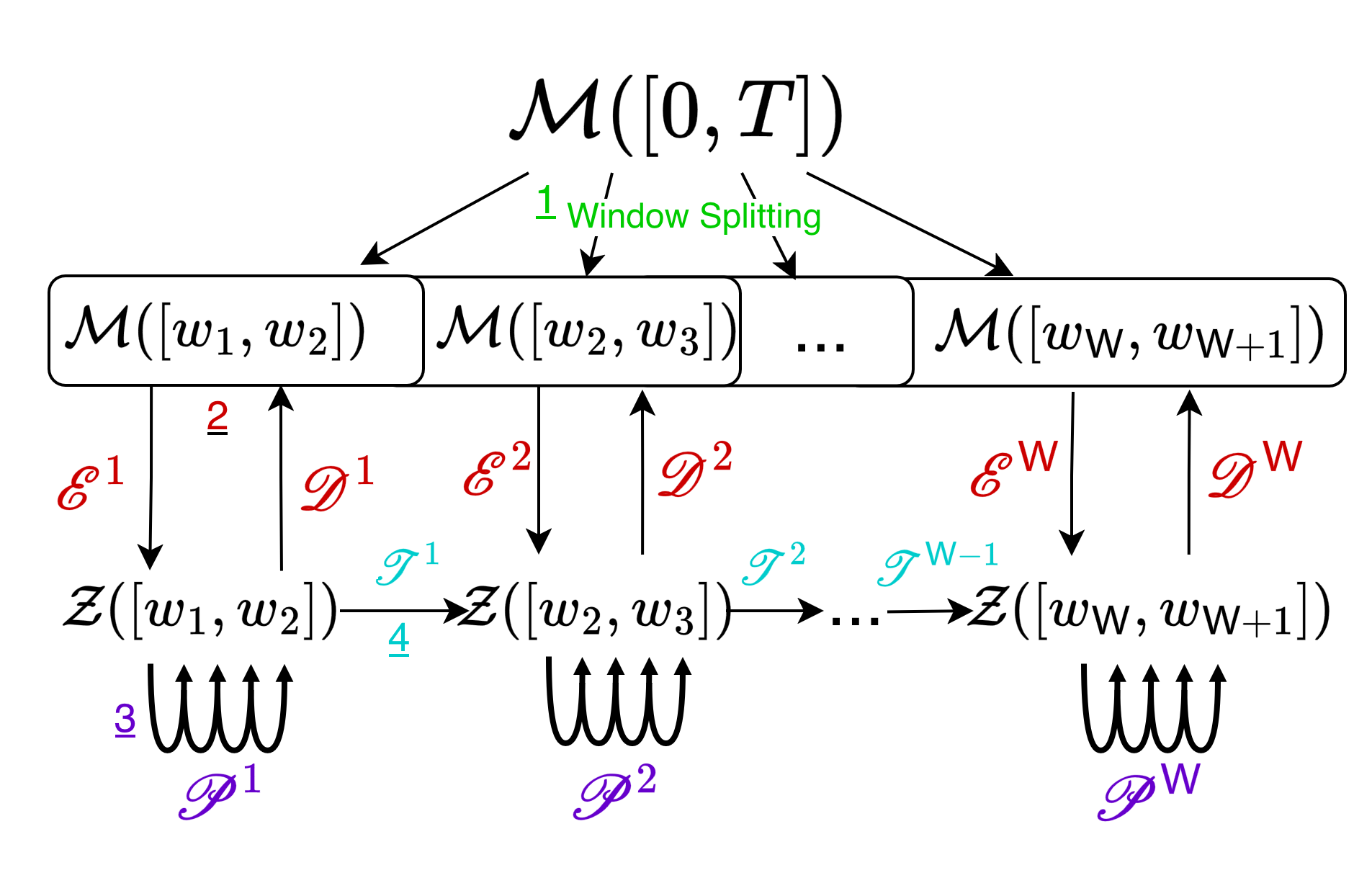}
 \caption{Components of WeldNet: 1) Window splitting, 2) Autoencoder, 3) Propagator, 4) Transcoder.}
 \label{fig:weldnotation}
\end{figure}

The separation between dimension reduction and trajectory learning allows the exploitation of existing nonlinear low-dimensional evolutionary structures in the data. 
Figure \ref{fig:weldnotation} indicates the notation for each component of WeldNet. We consider the trajectory manifold of an evolutionary  process, denoted by  $\cM([0, T])$, which collects all evolution trajectories in the time interval $[0,T]$. We split this time interval into $\sfW$ windows. For each window $i \in \{1, 2, \dots, \sfW\}$, $\sE^i$ and $\sD^i$ denote the encoder and decoder networks, $\sP^i$ denotes the propagator network for the evolution of latent codes, and $\sT_i$ (for $i < \sfW$) denotes the transcoder network from the $i$th window to the $(i+1)$th window. 

We validate WeldNet on a few examples, including the Burgers' equation, transport equation,  Korteweg–De Vries (KdV) equation and 2D shallow-water equation. 
Furthermore, this paper provides a mathematical theory on the representation power of WeldNet to justify the success of nonlinear model reduction by deep learning under manifold hypothesis. 
Suppose the evolution operator satisfies a regularity assumption and all trajectories of this evolutionary process lie on a low-dimensional manifold. We prove that, WeldNet can approximate the trajectories of this evolutionary process up to arbitrary accuracy, if the encoder, decoder and propagator network architectures are properly set up. Our theory justifies the representation power of WeldNet for a large class of evolutionary processes, and provides a theoretical foundation for nonlinear model reduction using auto-encoder-based methods.


The main contributions of this paper are: 
\begin{enumerate}
    \item \textbf{WeldNet framework.} We propose a windowed autoencoder--propagator architecture for nonlinear model reduction. The method performs dimension reduction and latent-space trajectory learning by dividing the time domain into overlapping windows connected via transcoder networks.
    
    \item \textbf{Efficient trajectory prediction.} WeldNet enables end-to-end approximation of system trajectories by encoding an initial condition, evolving the latent representations across windows, and decoding the result at any desired time.

    \item \textbf{Theoretical foundation.} Under the manifold hypothesis, we establish a representation theory to show that WeldNet can express  evolutionary processes with low-dimensional structures to arbitrarily high accuracy when the encoder, decoder, and propagator networks are appropriately designed.
    
    \item \textbf{Empirical validation.} The effectiveness of WeldNet is demonstrated on several nonlinear PDEs, including the Burgers', transport, KdV, and 2D shallow-water equations. 
\end{enumerate}









\textbf{Organization.}
We first introduce our WeldNet model in Section \ref{sec:method}. A representation theory of WeldNet is presented in Section \ref{sec:theory}, and comprehensive numerical experiments are given in Section \ref{sec:numerical}. Finally, we conclude in Section \ref{sec:conclusion}.

\textbf{Notations.} For any $n \in \mathbb{N}$, we denote $[n] = \{1, \dots, n\}$. For $\bfx\in \bR^n$, we denote $\|\bfx\|_{\bR^n}=\|\bfx\|_2$ and $\| \bfx \|_\infty = \max_{i \in [n]} x_i$ where $x_i$ is the $i$th component of $\bfx$. For any $\bfx \in \bR^n$, we denote the ReLU function $\sigma: \bR^n \rightarrow \bR^n$ by $\sigma(\bfx) = (\max(x_i, 0))_{i=1}^n$. For any function $f : A \rightarrow B$ defined on sets $A$, $B$, we denote $\|f\|_{L^{\infty}(A;\, B)}= \sup_{\bfx\in A}\|f(\bfx)\|_B$ where $\|\cdot\|_B$ is a norm on $B$. Given a finite set $A$, we denote the cardinality of $A$ by $|A|$.
We use $\bigcirc_{k=1}^{K } f_{k}$ to denote the composition $f_K\circ f_{K-1} \circ \cdots \circ f_1$. In particular, $\bigcirc_{k=1}^{K } f$  denotes the composition of $f$ for $K$ times.

\section{WeldNet for Model Reduction}
\label{sec:method}
In this section, we present our WeldNet model, which operates on time-dependent trajectory data collected from an evolutionary  process.
In science and engineering applications, the solution trajectory in an evolutionary process often depends on few parameters in the initial condition or in the evolution equation \cite{zeng2024autoencoders,lee2020model}.
When the trajectories of this evolutionary  process lie on a low-dimensional manifold, WeldNet parameterizes the trajectory manifold by a low-dimensional latent code and builds a surrogate evolutionary model in the latent space.

We consider an evolutionary  process in $\bR^D$, whose initial states can be (locally) parameterized by a small number of parameters. Suppose  the initial states are supported on a set $\cM(0) \subseteq \bR^D$ We are interested in learning an evolutionary process driven by an unknown continuous \textbf{time-evolution operator} $\sF : \cM(0) \times [0, T] \rightarrow \bR^D$.
For simplicity, we denote 
$$\bfx(t) = \sF(\bfx(0),t).$$

We denote $\cM(t) = \sF(\cM(0), t) = \{\sF(\bfx(0), t):\ \bfx(0) \in \cM(0)\}$ as the initial data evolved by $t$ time units, and $ \cM([a, b]) = \{\cM(t): \ t\in[a,b]\}$ denotes the collection of state manifold in the time interval $[a,b]$. We call the set $\cM(0)$ the initial manifold and the set $\cM([0, T])$ the \textbf{trajectory manifold}. We will assume that $\cM([0, T])$ is a $d$-dimensional Riemannian manifold embedded in $\bR^D$. For convenience, we extend the domain of the time-evolution operator $\sF$ to cover all of $\cM([0, T])$ and all relevant times, such that for all $\bfx(s) = \sF(\bfx(0), s) \in \cM(s)$, we assume that the dynamic satisfies  $\sF(\bfx(s), t) = \sF(\bfx(0), s+t) = \bfx(s+t)$ for $t \in [0, T-s]$.

In addition, we extend the domain of $\sF$ to all of $\cM([0, T]) \times [0, T]$ in the following way: if $\bfx \in \cM(s)$ then define $\sF(\bfx, t) = \sF(\bfx, \max(T-s, t))$. Note that this extension preserves the Lipschitz constant.

\subsection{Data Collection}
\label{sec:setup}
In applications, one can measure the trajectory of the above evolutionary process and collect trajectory data for multiple initial states. Suppose $N$ initial states are randomly sampled from a probability measure $\rho(0)$ supported  on $\cM(0)$: 
\[ \{\bfx_1(0), \dots, \bfx_N(0)\} \iid \rho(0). \]
Let $0 = t_1 < t_2 < \dots < t_\sfT = T$ be a time grid, denoted $\bT = \{t_1, \dots, t_\sfT\}$. {In this paper, we consider an equally spaced time grid with time spacing $\Delta t$.}  After measuring the trajectory of these $N$ initial states at this time grid,  we can collect the dataset
\begin{align*} 
&\{\bfx_n\}_{n=1}^N := \{(\bfx_n(t_k))_{k=1}^\sfT\}_{n=1}^N \in \bR^{N \times \sfT \times D}, \text{ where } \bfx_n(t_k) = \sF(\bfx_n(0), t_k). 
\end{align*}
Given this dataset, our goal is to construct a low-dimensional surrogate model $\sW_{model} : \cM(0) \times \bT \rightarrow \bR^D$  such that 
\[ \sW_{model}(\bfx(0), t_k) \approx \sF(\bfx(0), t_k), \ \forall \bfx(0) \in \cM(0), \, t_k\in \bT. \]
After training, one can predict the solution trajectory for a new initial condition sampled from $\rho(0)$ by evolving the surrogate model in the low-dimensional latent space.

\subsection{WeldNet Training}
\label{sec:weldnettraining}
Training a WeldNet model involves four stages, shown in Figure \ref{fig:weldnotation}: 1) Window splitting, 2) Autoencoder training, 3) Propagator training, 4) Transcoder training. We provide details of each stages below. We also provide an overview of notations is provided in Table \ref{table:thmnotations}.

\setlength{\tabcolsep}{4pt} 
\begin{table}[]
\centering
\begin{tabular}{|l|l|}
\hline
\textbf{Notation} & \textbf{Explanation}                         \\ \hline
$[w_i, w_{i+1}]$  & Time range in window $i$                     \\ \hline
$\bT$           &  Time grid: $ \{t_1, \dots, t_\sfT\}$ \\
\hline
$\bT^i$ & $ \bT \cap [w_i, w_{i+1}]$
\\
\hline
$\bm{T}_i$        & Size of $\bT \cap (w_i, w_{i+1}]$            \\ \hline
$\sF$             & Evolution operator                           \\ \hline
  
$\sE^i_*/\sD^i_*$ & Window $i$'s oracle encoder/decoder         \\ \hline
$\sP^i_*$ & Window $i$'s oracle propagator $=\sE^i_* \circ \sF(\cdot, \Delta t) \circ \sD^i_*$         \\ \hline
$\sT^i_*$ & Window $i$'s oracle transcoder $=\sE^{i+1}_* \circ \sE^i_*$         \\ \hline
$\sE^i_\NN/\sD^i_\NN/\sP^i_\NN/\sT^i_\NN$ & Window $i$'s oracle encoder/decoder/propagator/transcoder       \\ \hline
$\cM([a, b])$     & Trajectory manifold from time $a$ to 
$b$     \\ \hline
$\cZ([a, b])$     & Latent space from time $a$ to $b$            \\ \hline
\end{tabular}
\caption{Notation overview.}
\label{table:thmnotations}
\end{table}

{\bf 1)  Window splitting.} A WeldNet model with $\sfW$ windows divides the time domain into $\sfW$ sequentially overlapping windows, with the $i$-th window denoted by $[w_i, w_{i+1}]$ and $\bigcup_{i=1}^{\sfW} [w_i, w_{i+1}] = [0, T]$. Suppose $\{w_i\}_{i=1}^{\sfW + 1}\subset \{t_k\}_{k=1}^{\sfT}$, i.e., the endpoints of each window are on the time grid. 

{\bf 2) Autoencoder and Propagator training.}  WeldNet first trains $\sfW$ autoencoders and $\sfW$ propagators, where the $i$th autoencoder aims to learn a representation of the manifold $\cM([w_i, w_{i+1}])$, and the $i$th propagator will predict the displacement required to evolve the latent code from one time discretization point to the subsequent one in latent manifold. 

Fixing a latent space dimension $d \in \mathbb{N}$, We denote the domain of the decoders and propagators, also known as the \textbf{latent space}, by $\cZ([0, T]) = [0,1]^d \times [0, T]$. We also use the notation $\cZ(t) = [0,1]^d \times \{t\}$ and notation $\cZ([a,b]) = [0,1]^d \times [a,b]$. Note that we always incorporate time in our latent codes to track the time information. 

The autoencoder reconstruction loss for the $i$th window is
\begin{equation}
\textstyle
\textbf{L}^i_\text{ae}(\sE^i, \sD^i) = \frac{1}{N|\bT \cap [w_i, w_{i+1}]|} \sum_{n=1}^N \sum_{t_k \in \bT \cap [w_i, w_{i+1}]} \left\| \sD^i(\sE^i(\bfx_n(t_k))) - \bfx_n(t_k) \right \|_{\bR^D}^2,
\label{eq:optae}
\end{equation}
where  $(\sE^i, \sD^i)$ is an encoder-decoder pair in the $i$th window. The propagator loss for the $i$th window (denoted $\textstyle\textbf{L}^i_\text{prop}(\sE^i, \sP^i)$) is 
\begin{equation}
\frac{1}{N|\bT \cap [w_i, w_{i+1})|} \sum_{n=1}^N \sum_{t_k \in \bT \cap [w_i, w_{i+1})}\left|\sP^i (\sE^i\left(\bfx_n(t_k)\right)) - \sE^i\left(\bfx_n(t_{k+1})\right) \right|^2_{\bR^{d+1}},
\label{eq:optprop}
\end{equation}
where $\sP^i$ denotes an one-step propagator for the latent code in the $i$th window.

We train the encoder, decoder, and propagator together using the following objective function:
\begin{equation}
\textstyle
(\sE^i_\NN, \sD^i_\NN, \sP^i_\NN) \in \underset{\sE \in \mathcal{F}^{\sE}_{\rm NN}, \sD \in \mathcal{F}^{\sD}_{\rm NN}, \sP \in \mathcal{F}^{\sP}_{\rm NN}}{\argmin} \textbf{L}^i_\text{ae}(\sE^i, \sD^i) + \lambda \textbf{L}^i_\text{prop}(\sE^i, \sP^i),
\label{eq:aeproploss}
\end{equation}
 where $\mathcal{F}^{\sD}_{\rm NN}$, $\mathcal{F}^{\sE}_{\rm NN}$, and $\cF^\sP_{\NN}$ are network classes for decoder, encoder, and propagator, respectively, and $\lambda > 0$ is a hyperparameter. In this work, we use $\lambda = 0.1$ for all experiments. Figure \ref{fig:step2loss} diagrams the autoencoder and propagator losses (the loss is the mean squared error between applying the functions in the red path and applying the functions in the blue path).

\begin{figure}
  \begin{minipage}[t]{0.28\textwidth}
    \centering
    \fbox{\includegraphics[width=0.7\textwidth]{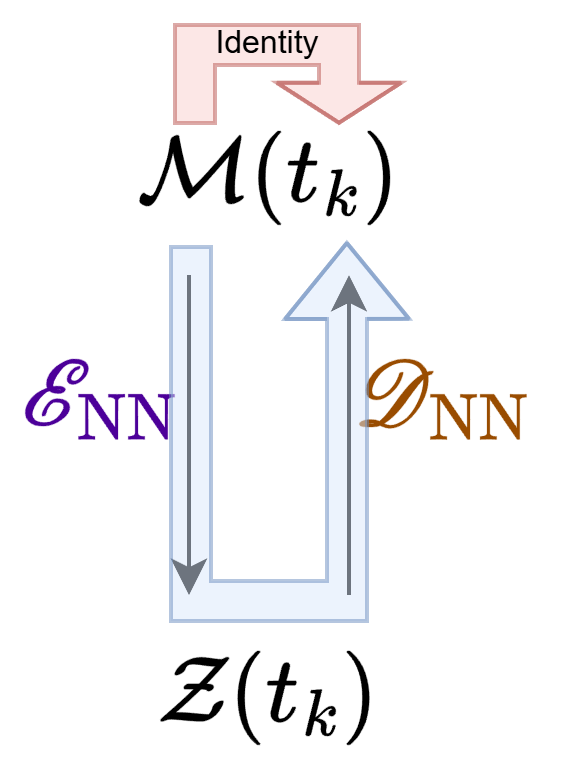}}
  \end{minipage}
  \hfill
  \begin{minipage}[t]{0.68\textwidth}
    \centering
    \fbox{\includegraphics[width=0.94\textwidth]{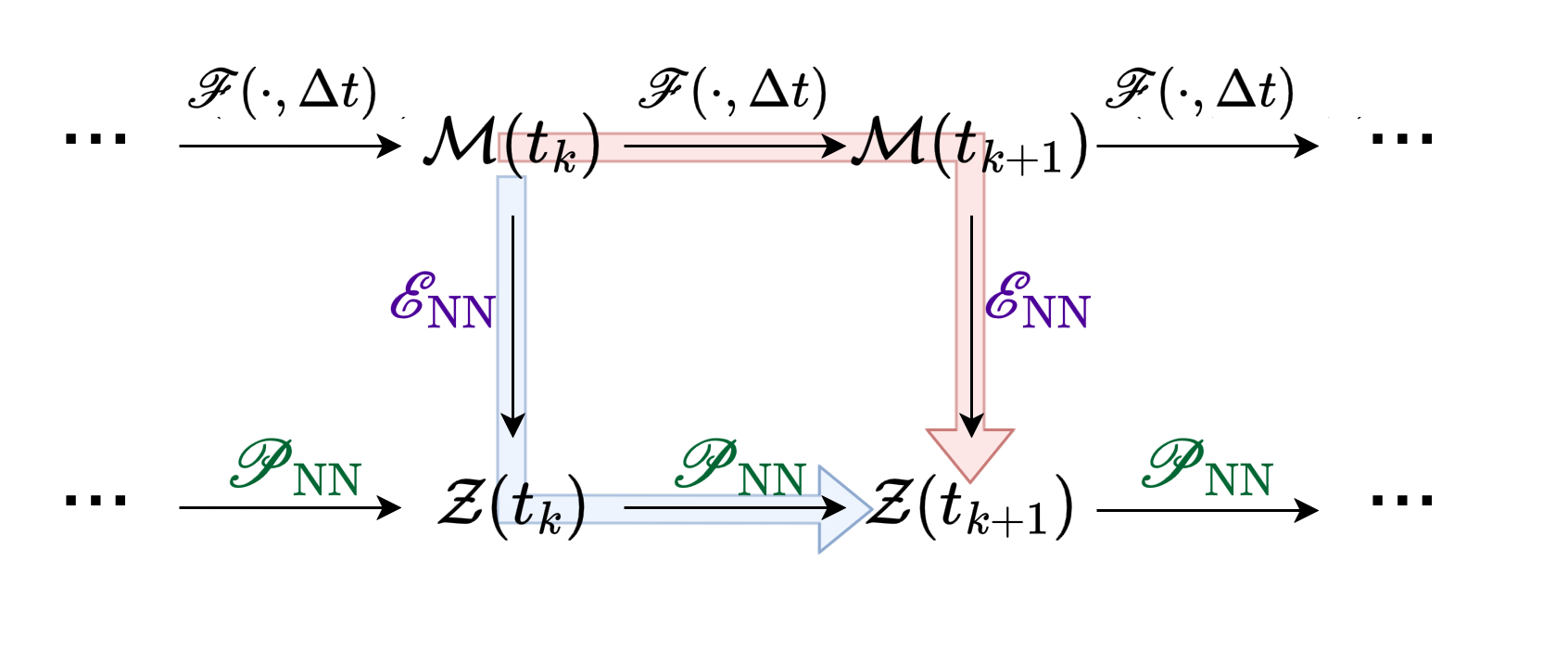}}
  \end{minipage}
  \caption{Diagram of the autoencoder loss (left) and propagator loss (right), where the loss is the MSE between the output of the red arrow and the blue arrow.}
  \label{fig:step2loss}
\end{figure}


{\bf 3) Propagator finetuning.} After we train the autoencoder and propagator together, we then finetune the propagator in order to reduce the accumulation of error that occurs when applying $\sP^i$ to propagate a latent code over multiple time steps. Specifically, we freeze the encoder and train the propagator with the objective (where $k(w_i)$ is the time index of $w_i$, i.e. the first time step in window $i$): 
 
\begin{equation}
\sP^i_\NN = \underset{\sP^i \in \mathcal{F}^{\sP}_{\rm NN}}{\argmin} \frac{1}{N\bm{T}_i} \sum_{n=1}^N \sum_{s=1}^{\bm{T}_i}\left\|\left(\bigcirc_{k=1}^s \sP^i\right) (\sE^i\left(\bfx_n(t_{k(w_i)})\right)) - \sE^i\left(\bfx_n(t_{k(w_i) + s})\right) \right\|^2_{\bR^{d+1}} ,
\label{eq:propfinetune}
\end{equation}
where we denote $\bm{T}_i = |\bT \cap(w_i, w_{i+1}]|$.

The loss for the finetuning of propagator network is illustrated in the Figure \ref{fig:propaccum}.

\begin{figure}[h!]
    \centering
    \includegraphics[width=0.91\textwidth]{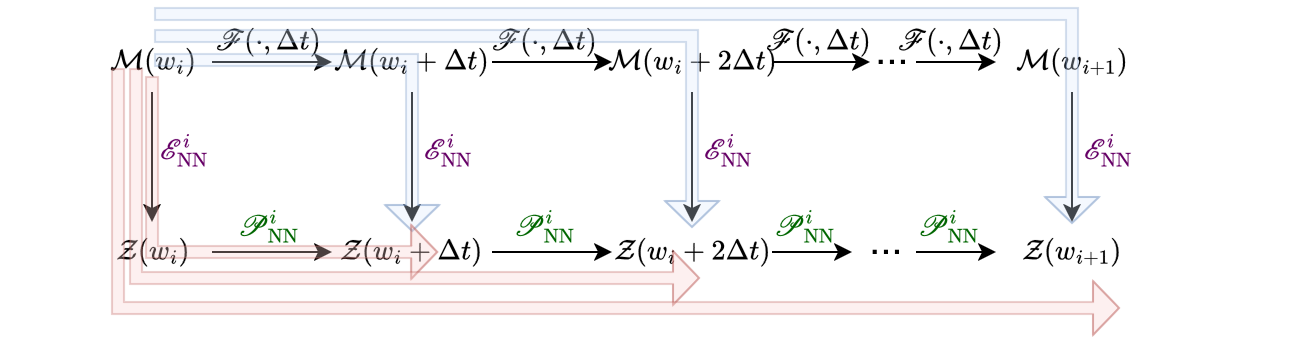}
    \caption{Diagram for propagator accumulation loss. The loss is the average of the mean squared error between the blue and red arrows that point to the same symbol.}
    \label{fig:propaccum}
\end{figure}

{\bf 4) Transcoder training.} We have trained autoencoders and propagators on each time window separately. In order to connect the windows, we train transcoder networks on the overlap between the windows. The goal of the $i$th transcoder is to connect the codes at the end of window $i$ with the codes at the beginning of window $i+1$. To train the transcoder, we will propagate codes from the beginning of window $i$ to the end. Specifically, we use the objective $\sT^i_\NN =$
\begin{equation}
 \argmin_{\sT^i \in \cF^\sT_{\NN}} \frac{1}{N} \sum_{n=1}^N \left\| \sT^i \left(\left(\bigcirc_{s=1}^{|\bT \cap (w_i, w_{i+1}]|} \sP_\NN^i\right)(\sE^i_{\NN}(\bfx_n(w_i)))\right) - \sE^{i+1}_{\NN}\left(\bfx_n(w_{i+1})\right)\right\|_{\bR^{d+1}}^2
\label{eq:opttrans}
\end{equation}
where $\cF^\sT_{\NN}$ denotes the network class for transcoders. Figure \ref{fig:transcoderloss} diagrams the loss function for the transcoder.

\begin{figure}
    \centering
    \includegraphics[width=0.755\textwidth]{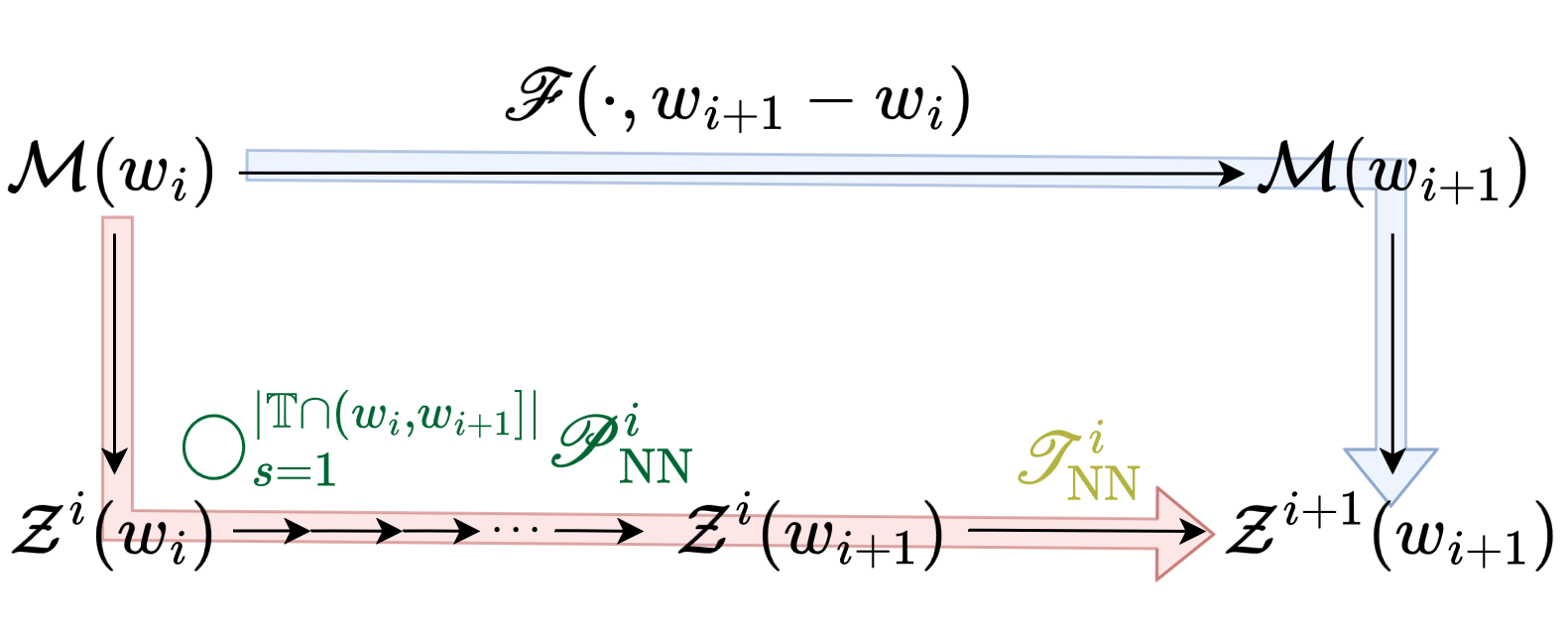}
  \caption{Diagram of the transcoder loss, where the loss is the MSE between the output of the red arrow and the blue arrow.}
  \label{fig:transcoderloss}
\end{figure}

\subsection{WeldNet Inference}
After WeldNet is trained, one can predict the trajectory given a new initial state $\bfx(0)$. The inference involves encoding the initial state to the latent code, evolving the latent code in time, and decoding at the final time.  This inference procedure can be represented by our WeldNet model, denoted by $\sW_{\NN}$, such that, for any $\bfx(0) \in \cM(0)$ and $t_k \in \bT \cap (w_i, w_{i+1}]$, WeldNet gives rise to (where $k(w_i)$ is the time grid index of the time $w_i$)
\begin{align}
\begin{split}
  \sW_{\NN}&(\bfx(0), t_k) := \\
  &\sD^i_{\NN} \circ \bigcirc_{\ell=1}^{k - k(w_i)} \sP_\NN^i 
  \circ \bigcirc_{j=1}^{i-1} (\sT^j_{\NN} \circ \bigcirc_{\ell=1}^{|\bT \cap (w_j, w_{j+1}]|} \sP^j_\NN)
  \circ  \sE^1_{\NN}(\bfx(0)). \label{eq:weldmultiwindow}
\end{split}
\end{align}

Figure \ref{fig:welddiagram} shows the process of encoding, propagating, and decoding for inference with WeldNet. 

\section{Approximation Theory}
\label{sec:theory}
In this section, we prove the approximation ability of WeldNet for a large class of evolutionary operators. The low-dimensional  structure in the trajectory manifold results in the existence of oracle maps for charts, time-evolution operator, and transition maps which can be approximated by the autoencoder, propagator, and transcoder networks respectively.

\subsection{Preliminaries}
\subsubsection{Neural Networks}
In this work, we consider ReLU networks as follows: 

\begin{definition}[Feedforward Neural Network (FNN)]
Let $L \in \mathbb{N}$ and suppose $\mathfrak{W}_1, \dots, \mathfrak{W}_{L+1}$ are weight matrices and $\mathfrak{b}_1, \dots, \mathfrak{b}_{L+1}$ are bias vectors with $\mathfrak{W}_\ell \in \bR^{d_{\ell - 1} \times d_\ell}$ and $\mathfrak{b}_\ell \in \bR^{d_\ell}$ for all $\ell \in [L+1]$. Let $\theta = (\mathfrak{W}_1, \mathfrak{b}_1, \dots, \mathfrak{W}_{L+1}, \mathfrak{b}_{L+1})$. We define a \textbf{feedforward neural network} (\textbf{FNN}) with weights $\theta$, denoted $f_{\NN}$, as a function of the form:
$$f_{\NN}(x) = \mathfrak{W}_{L+1} \sigma(\mathfrak{W}_L \sigma (\cdots \mathfrak{W}_1 x + \mathfrak{b}_1) \cdots + \mathfrak{b}_L) + \mathfrak{b}_{L+1}.$$
We say the \textbf{depth} and \textbf{width} of $f_{\NN}$ are $L$ and $W=\max_\ell d_\ell$ respectively.
\end{definition}
A FNN with depth $L$ and width $W$ is therefore a composition of $L+1$ functions, all but the last one of which involves an affine transform with output dimension at most $W$. With the (maximum) depth and width specified, we can define a function class of feedforward neural networks.
\begin{definition}[FNN Class]
Let $d_{in}, d_{out}, L, W \in \mathbb{N}$. The class of feedforward neural networks (\textbf{FNN class}) with $d_{in}$ inputs, $d_{out}$ outputs,  depth at most $L$, and  width at most $W$ is denoted $\cF_{\NN}(d_{in}, d_{out}, L, W)$. 
\end{definition}

\subsubsection{Manifolds}
\label{appmanifold}

\begin{definition}[Lipschitz Function]
Let $A, B$ be metric spaces with metrics $d_A$ and $d_B$ respectively, and let $L > 0$. A function $f: A \rightarrow B$ is called $L$-\textbf{Lipschitz} if 
\[
 \sup_{x \neq y \in A} \frac{\| f(x) - f(y) \|_{d_B}}{\|x - y \|_{d_A}} \leq L,\]
and we define the \textbf{Lipschitz constant} of $f$, denoted $\Lip_A(f)$, as the smallest $L$ for which $f: A \rightarrow B$ is $L$-Lipschitz.
\end{definition}

\begin{definition}[Manifold]
A $d$-dimensional \textbf{manifold} $\cM$ is a topological space endowed with a collection of \textbf{charts} $(\phi_i: U_i \subseteq \cM \rightarrow \tilde{U}_i \subseteq \bR^d)_{i \in \mathcal{I}}$ (for some index set $\mathcal{I}$) such that for all $i \in \mathcal{I}$, $\phi_i$ is a homeomorphism between the open sets $U_i$ and $\tilde{U}_i$, and we have $\bigcup_{i\in\mathcal{I}} U_i = \cM$. We say $\cM$ is a \textbf{smooth} manifold if for all charts $\phi_i$ and $\phi_j$ on $\cM$, the function 
\[\phi_j \circ \phi_i^{-1} : \tilde{U}_i \cap \tilde{U}_j \subseteq \bR^{d} \rightarrow \tilde{U}_i \cap \tilde{U}_j \subseteq \bR^{d} \]
is smooth in the usual Euclidean sense. Finally, for any $f: \cM \rightarrow \bR^k$, we say that $f$ has \textbf{injective derivative} if for each chart $\phi_i: U_i \rightarrow \tilde{U}_i$, we have the function $f \circ \phi_i^{-1}$ is differentiable  in the Euclidean sense, and its derivative is nonzero on its domain. 
\end{definition}

\begin{definition}[Embedded Riemannian Manifold]
We say that a $d$-manifold $\cM$ is a \textbf{Riemannian manifold embedded} in $\bR^D$ if $\cM \subseteq \bR^D$ and the inclusion function $\iota: \cM \rightarrow \bR^D$ is a smooth homeomorphism onto its image with injective derivative, and $\cM$ is a Riemannian manifold with respect to the induced metric from $\bR^D$. 
\end{definition}

More details on Riemannian manifolds can be found in standard texts on differential geometry \cite{lee2018introduction}. Next, we introduce a regularity/curvature parameter for embedded submanifolds of Euclidean space known as reach. Informally speaking, the reach of a manifold describes the curvature of the embedded manifold in the ambient Euclidean space. It is the radius of the largest ``ball'' (interior of a hypersphere) you can roll around the manifold without crossing it.

\begin{definition}[Reach \cite{federer1959curvature}]
Let $\cM \subseteq \bR^D$ be an embedded Riemannian manifold. The reach of $\cM$, denoted $\bm{\tau}(\cM)$, is
\begin{align*}
\inf&\{ r > 0 : \exists x \neq y \in \cM, v \in \bR^D \\
&\text{ such that } r = \|x-v\|=\|y-v\|=\text{dist}(v, \cM) \} .
\end{align*}
\end{definition}
Linear subspaces have reach $\infty$, and the reach of a hypersphere equals to its radius. It is well-known that compact embedded manifolds have positive reach \cite{thale200850}. The larger the reach, the easier the manifold is to be represented with neural networks, and that will affect the bounds in neural network approximation theory \cite{chen2019efficient,schonsheck2019chart,schonsheck2022semi,liu2024deep}.

\subsection{Main Theorems}
\label{sec:theorems}

According to Section \ref{sec:setup}, the time-evolution operator $\sF$ is defined on the set $\cM([0, T]) \times [0, T]$. We will assume that $\sF$ is Lipschitz for all times:

\begin{assumption}
\label{assum:evolve}
Assume $\sF$ is Lipschitz, i.e. ${\rm Lip}(\sF)  := $
\[
  \sup_{\substack{t \in [0, T], \\ s \in [0, T-t]}}\Lip_{\cM(t)}(\sF(\cdot, s))
= \sup_{\substack{t \in [0, T], \\ s \in [0, T-t]}}  \sup_{\substack{\bfx(t) \neq \bfy(t) \\ \in \cM(t)}} \frac{\|\sF(\bfx(t), s) - \sF(\bfy(t), s)\|}{\|\bfx(t) - \bfy(t)\|}< \infty.
\]
\end{assumption}
In this work, we will consider evolutionary processes whose initial conditions are sampled from an embedded $d$-dimensional manifold. This means that each initial condition can be described (locally) by $d$ parameters. More specifically, we will assume that the collection of trajectories at various segmented times form a $(d+1)$-dimensional manifold, with $d$-dimensions in parameter space and one dimension given by time. For convenience, we will use the notation $\cZ(t) = [0, 1]^d \times \{t\}$ and $\cZ([a, b]) = [0, 1]^d \times [a, b]$.

\begin{assumption}[Segmented Manifold]
\label{assum:mfd}
Suppose there exist $0 = s_1 < s_2 < \cdots < s_{\sfS+1} = T$ such that for all $i \in [\sfS]$, the subset $\cM([s_i, s_{i+1}])$ is a compact $(d+1)$-dimensional Riemannian manifold embedded in $\bR^D$ with reach $\bm{\tau}(\cM([s_i, s_{i+1}])) > 0$. 
In addition, for all $i \in [\sfS]$ we assume there is a smooth function $\sD_*^i$ that maps from a Euclidean space to the manifold with smooth inverse $\sE_*^i = (\sD_*^i)^{-1}$:

\[\sE^i_*: \cM([s_i, s_{i+1}]) \rightarrow \cZ([s_i, s_{i+1}]), \quad  \sD^i_*: \cZ([s_i, s_{i+1}])  \rightarrow \cM([s_i, s_{i+1}]), \]
such that for any $t \in [s_i, s_{i+1}]$, $\cM(t) = \sD^i_*(\cZ(t))$.
\end{assumption}

In Assumption \ref{assum:mfd}, $\sE^i_*$ and $\sD^i_*$ serve as the oracle encoder and decoder for the trajectory manifold for each segment $i \in [\sfS]$. We denote the Lipschitz constant of the oracle encoder on each time segment by ${\rm Lip}_{\sE^i} = \sup_{t \in [s_i, s_{i+1}]} \Lip_{\cM(t)}(\sE^i)$ and the oracle decoder in each time segment by ${\rm Lip}_{\sD^i} = \sup_{t \in [s_i, s_{i+1}]} \Lip_{\cZ(t)}(\sD^i)$.

We will first establish the result where we set the windows to be exactly aligned with the manifold segments in Assumption \ref{assum:mfd}. In other words, we will choose $\sfW = \sfS$ and define $0 < w_1 < \cdots < w_{\sfW+1} = T$ such that $w_i = s_i$ for all $i \in [\sfW]$.  We   construct encoder, decoder, and propagator networks over each window (which is the same as a segment). In order to translate between adjacent windows, we use transcoder networks. The inference by WeldNet for any initial state $\bfx(0) \in \cM(0)$ can be represented in \eqref{eq:weldmultiwindow}.

We introduce a latent time-evolution map, defined on the domain $\cZ([w_i, w_{i+1}])$ denoted as $\sP^i_*$ such that $\sP^i_*(\bfz, t) = \sE^i_*(\sF(\sD^i_*(\bfz), t))$ {for $t \le w_{i+1}-w_i$},  which gives the evolution of latent code for $\sF$ within window $i$:\ $\sD^i_* \circ \sP_* \circ \sE^i_* = \sF$. The evolution of $\sP^i_*$ operates in latent space, yet it matches the dynamics of the time-evolution operator $\sF$ on a section of the trajectory manifold. We will call $\sF$ the time-evolution operator, and $\sP^i_*$ the \textbf{oracle propagator} for the $i$th window. 

We are primarily interested in the case where the latent codes follow an ODE within each segment:
\begin{assumption}[Latent Dynamics]
\label{assum:latentdynamics}
Using the notation in Assumption \ref{assum:mfd}, suppose for all $i \in [\sfS]$, there is a Lipschitz function $g^i : \cZ([s_i, s_{i+1}]) \rightarrow \bR^{d+1}$ such that for all $\bfz(s_i) \in \cZ(s_i)$, the latent code $\bfz(t) = \sP^i_*(\bfz(s_i), t - s_i)$ for  $t \in [s_i, s_{i+1}]$ satisfies $\frac{\partial \bfz}{\partial t} (t) = g^i(\bfz(t))$.
\end{assumption}

The ODE in Assumption \ref{assum:latentdynamics} models the dynamics of the latent code. In particular, the $(d+1)$th coordinate of the latent code $\bfz(t)$ is time, i.e. $\bfz(t)_{d+1} = t$ and therefore the $(d+1)$th coordinate of $g^i$ is $1$, i.e. $g^i_{d+1} = 1$.
We denote the Lipschitz constant of $g^i$ (for the $i$th segment) as ${\rm Lip}(g^i) := \sup_{t \in [s_i, s_{i+1}]} \Lip_{\cZ(t)}(g^i)$ and the Lipschitz constant of $g=\{g^i\}_{i=1}^{\sfS}$ (among all segments) as ${\rm Lip}(g) = \sup_{i \in [\sfW]} {\rm Lip}(g^i)$.

Our first result (presented in Lemma \ref{lemma:weldlatentode} below) gives an approximation guarantee for WeldNet in the latent dynamics setting, assuming that the windows are set to be equal to the manifold segments. 

\begin{lemma}
\label{lemma:weldlatentode}
Suppose Assumptions \ref{assum:evolve}, \ref{assum:mfd}, and \ref{assum:latentdynamics} hold. Assume the time grid $\bT$ is uniform such that $t_{k+1} - t_k = \Delta t$ for all $k \in [\sfT-1]$.  Let $\epsilon > 0$, and suppose there are $\sfW = \sfS$ windows such that for all $i \in [\sfW]$, we have $w_i = s_i$ and each window has number of time steps $|\bT^i| > 1 + {{\rm Lip}(g) \|g\|_{L^{\infty}
} (w_{i+1} - w_i)^2 e^{{\rm Lip}(g) (w_{i+1} - w_i)}}{\epsilon}^{-1}$. 
Then for each $i \in [\sfW]$ there exist an encoder network $\sE_{\NN}^i \in \cF_{\NN}(D, d+1, L_{\sE^i}, W_{\sE^i})$, a decoder network $\sD_{\NN}^i \in \cF_{\NN}(d+1, D, L_{\sD^i}, W_{\sD^i})$, a propagator network $\sP_{\NN}^i \in \cF_{\NN}(d+1, d+1, L_{\sP^i}, W_{\sP^i})$, and a transcoder network (for $i < \sfW$) $\sT^i_{\NN} \in \cF_{\NN}(d+1, d+1, L_{\sT^i}, W_{\sT^i})$ such that for any $k \in [\sfT]$, the WeldNet $\sW_{\NN}$  given in \eqref{eq:weldmultiwindow} guarantees
\[ \sup_{\bfx(0) \in \cM(0)} \|\sW_{\NN}(\bfx(0), t_k) - \sF(\bfx(0), t_k) \|_{\bR^D} < \epsilon. \]
The network parameters are
\begin{align*}
\textstyle
& L_{\sE^i}=O\left(\log^2\left({\sfS}/{\epsilon}\right)\right), \ W_{\sE^i}=O\left(D\left({\sfS}/{\epsilon}\right)^{d+1} \right)
\\
& L_{\sD^i}=O\left(\log\left({\sfS}/{\epsilon}\right)\right), \
W_{\sD^i}=O\left(D\left({\sfS}/{\epsilon}\right)^{d+1} \right)
\\
& L_{\sP^i}=O\left(\log\left({\sfS}/{\epsilon} \right)\right), \ 
W_{\sP^i}=O\left(\left({\sfS}/{\epsilon}\right)^{d+1}\right)
\\
& L_{\sT^i}=O\left(\log\left({\sfS}/{\epsilon}\right)\right), \ 
W_{\sT^i}=O\left(\left({\sfS}/{\epsilon}\right)^{d} \right)
\end{align*}
where we hide constants depending on (for all $i \in [\sfW]$) $\|g_i\|_{L^{\infty}}$, ${\rm Lip}(g_i)$, $\log(D)$, $d$, $\bm{\tau}(\cM([s_i, s_{i+1}]))$, {$\max_i{\rm Lip}(\sE_*^i)$, $\max_i{\rm Lip}(\sD_*^i)$}, ${\rm Lip}(\sF)$, the volume of $\cM([s_i, s_{i+1}])$, and $\sup_{\bfx \in \cM([s_i, s_{i+1}])} \|\bfx\|_{\bR^D}$.
\end{lemma}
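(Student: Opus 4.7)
The plan is to approximate each building block by a neural network with controlled error, then propagate the errors through the composition in \eqref{eq:weldmultiwindow} using the Lipschitz estimates, and finally convert a bound on the latent one-step error into a global Euler-type bound whose discretization error is governed by the hypothesis on $|\bT^i|$.

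First, I would invoke the known manifold neural-network approximation results cited in the paper (e.g.\ \cite{chen2019efficient,schonsheck2019chart,liu2024deep}) separately on each segment $i\in[\sfW]$. Since $\cM([s_i,s_{i+1}])$ is a compact embedded $(d{+}1)$-manifold of positive reach and $\sE_*^i,\sD_*^i$ are smooth with Lipschitz constants $\Lip(\sE^i_*),\Lip(\sD^i_*)$, these theorems yield encoder networks $\sE^i_\NN$ and decoder networks $\sD^i_\NN$ with $\|\sE^i_\NN-\sE^i_*\|_{L^\infty(\cM([s_i,s_{i+1}]);\,\bR^{d+1})}\le\epsilon_\sE$ and $\|\sD^i_\NN-\sD^i_*\|_{L^\infty(\cZ([s_i,s_{i+1}]);\,\bR^D)}\le\epsilon_\sD$, with depth/width matching the quoted scalings $L=O(\log(\sfS/\epsilon))$ and $W=O(D(\sfS/\epsilon)^{d+1})$; the extra $\log^2$ factor on $L_{\sE^i}$ comes from the usual projection-plus-chart construction for encoders on manifolds. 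Analogously, the transcoder oracle $\sT^i_*:=\sE^{i+1}_*\circ\sD^i_*$ restricted to $\cZ(w_{i+1})$ is a smooth map between $d$-dimensional cubes, so a standard FNN approximation gives $\sT^i_\NN$ of width $O((\sfS/\epsilon)^d)$ and the claimed depth.

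Next, I would construct the propagator. Under Assumption~\ref{assum:latentdynamics}, $\sP^i_*(\bfz,\Delta t) = \bfz + \int_0^{\Delta t} g^i(\bfz(s))\,ds$, so the one-step Euler map $\bfz\mapsto \bfz+\Delta t\, g^i(\bfz)$ satisfies $\|\sP^i_*(\bfz,\Delta t) - (\bfz+\Delta t\, g^i(\bfz))\| \le \tfrac12\Lip(g^i)\|g^i\|_{L^\infty}\,\Delta t^2$ by Taylor expansion of the ODE. I would then use a standard FNN approximation of the Lipschitz map $g^i$ on $\cZ([s_i,s_{i+1}])$ to build $\sP^i_\NN(\bfz) \approx \bfz+\Delta t\,g^i(\bfz)$ with pointwise error $\epsilon_\sP$, hardwiring the last coordinate to equal $\bfz_{d+1}+\Delta t$ so that time is tracked exactly. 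Iterating this one-step estimate through $s\le|\bT^i|$ compositions, a discrete Grönwall argument yields the cumulative bound
\[
\bigl\|(\bigcirc_{\ell=1}^s \sP^i_\NN)(\bfz) - \sP^i_*(\bfz, s\Delta t)\bigr\| \;\le\; \frac{e^{\Lip(g^i)(w_{i+1}-w_i)}-1}{\Lip(g^i)}\Bigl(\tfrac12\Lip(g^i)\|g^i\|_{L^\infty}\Delta t + \epsilon_\sP/\Delta t\Bigr),
\]
and the hypothesis $|\bT^i|>1+\Lip(g)\|g\|_{L^\infty}(w_{i+1}-w_i)^2 e^{\Lip(g)(w_{i+1}-w_i)}/\epsilon$ makes the Euler term smaller than $\epsilon/(2\sfS)$ after choosing $\epsilon_\sP=O(\Delta t\cdot\epsilon/\sfS)$.

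Finally, I would chain the errors. On window $i$, writing $\sF$ as $\sD^i_*\circ\sP^i_*\circ\sE^i_*$, the within-window error after $s$ propagator steps is bounded by $\Lip(\sD^i_*)$ times the latent error from the encoder approximation, the propagator cascade above, plus the decoder approximation error. At each window boundary, the transcoder approximation contributes $\epsilon_\sT$ and the residual latent error gets multiplied by $\Lip(\sT^i_*)\le \Lip(\sE^{i+1}_*)\Lip(\sD^i_*)$; unrolling through all $\sfW=\sfS$ windows gives a geometric accumulation controlled by a constant depending on the Lipschitz data in the hidden-constant list. Choosing each of $\epsilon_\sE,\epsilon_\sD,\epsilon_\sT,\epsilon_\sP$ to be $\Theta(\epsilon/\sfS)$ times the appropriate inverse Lipschitz factor and back-substituting into the approximation theorems yields the stated network sizes.

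The main obstacle will be the propagator step: one has to simultaneously control (i) the Euler truncation error, which forces $\Delta t$ to be small, and (ii) the per-step neural-network error, whose accumulation through $\sim 1/\Delta t$ compositions is amplified by $1/\Delta t$. Balancing these so that the hypothesis on $|\bT^i|$ is both necessary and sufficient, and then verifying that the resulting $\epsilon_\sP$ can still be realized by an FNN of width $O((\sfS/\epsilon)^{d+1})$ via the smooth-function approximation theorem applied to $g^i$ on $\cZ([s_i,s_{i+1}])$, is the delicate bookkeeping step; everything else is a standard composition-of-Lipschitz-errors argument.
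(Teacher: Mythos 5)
Your route is the same as the paper's: approximate the oracle encoder/decoder on each segment via manifold approximation results, build the propagator as a one-step Euler map $\bfz\mapsto\bfz+\Delta t\,g^i_{\NN}(\bfz)$ with the time coordinate hardwired, control the multi-step latent error by a local-truncation-plus-Gr\"onwall argument (the hypothesis on $|\bT^i|$ enters exactly as you say, and the per-step network error carries a free factor of $\Delta t$ because only $g^i$ is approximated, so a width of $O((\sfS/\epsilon)^{d+1})$ suffices), approximate the transcoder oracle $\sE^{i+1}_*\circ\sD^i_*$ as a Lipschitz map on a $d$-dimensional cube, and chain everything with a composition estimate. Up to and including the propagator stage your error budget matches the paper's.

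The gap is in the final chaining step. You assert that the residual latent error gets multiplied by $\Lip(\sT^i_*)\le\Lip(\sE^{i+1}_*)\Lip(\sD^i_*)$ at each boundary and that unrolling through all $\sfW=\sfS$ windows gives ``a geometric accumulation controlled by a constant depending on the Lipschitz data.'' A geometric accumulation is \emph{not} controlled by such a constant: if each of the $\sfS$ boundaries (and each within-window oracle propagator) multiplies the incoming error by a factor of order $\Lip(\sE_*)\Lip(\sD_*)\Lip(\sF)$, the total amplification is of order $C^{\sfS}$, so reaching final error $\epsilon$ would force per-component accuracy $\epsilon\,C^{-\sfS}/\sfS$ and hence widths of order $(C^{\sfS}\sfS/\epsilon)^{d+1}$, exponentially worse in $\sfS$ than the stated $O((\sfS/\epsilon)^{d+1})$. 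The paper avoids this by applying its composition proposition with the Lipschitz constant of the \emph{composed oracle tail} rather than the product of individual Lipschitz constants: since $\sD^w_*\circ\sE^w_*=\mathrm{id}$ on the manifold and $\sF(\cdot,s)\circ\sF(\cdot,t)=\sF(\cdot,s+t)$, every tail of the oracle composition telescopes to a map of the form $\sF(\cdot,\,t_k-s_{m+1})\circ\sD^{m+1}_*$, whose Lipschitz constant is at most $\Lip(\sF)\max_i\Lip(\sD^i_*)$ uniformly in the number of windows traversed. With that telescoping, each of the $2i+1$ terms contributes at most $\epsilon/(2\sfS+1)$, and per-component accuracy $\Theta(\epsilon/\sfS)$ suffices, which is exactly what yields the claimed network sizes. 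You need to replace the geometric unrolling with this telescoping argument for the stated widths to follow.
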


Lemma \ref{lemma:weldlatentode} provides a representation guarantee of WeldNet, and the network size scales crucially with the intrinsic dimension $d$ instead of the data dimension $D$, which demonstrates the efficiency of model reduction. The proof is deferred to Appendix \ref{app:proveweldlatentodelemma}.

Lemma \ref{lemma:weldlatentode} requires the number of windows to be equal to the number of segments. One major advantage of WeldNet is the ability to train with a number of windows potentially higher than the number of segments. Fortunately, the approximation guarantee of Lemma \ref{lemma:weldlatentode} can be extended to this scenario, as long as the windows partition each segment. This is the subject of the next theorem, which follows from Lemma \ref{lemma:weldlatentode}.

\begin{definition}
We say a sequence $\{w_i\}_{i=1}^{\sfW+1}$ with $0 = w_1 < w_2 < \cdots < w_{\sfW + 1} = T$ \textbf{subdivides the segments} $\{s_j\}_{j=1}^{\sfS+1}$ if for all $i \in [\sfW]$, there is a unique $j \in [\sfS]$ such that $[w_i, w_{i+1}] \subseteq [s_j, s_{j+1}]$. We denote by $\pi : [\sfW] \rightarrow [\sfS]$ that assigns each window index to the unique segment index it lies within.
\end{definition}

\begin{theorem}
\label{thm:weldlatentode}
Suppose Assumptions \ref{assum:evolve}, \ref{assum:mfd}, and \ref{assum:latentdynamics} hold. Assume the time grid $\bT$ is uniform such that $t_{k+1} - t_k = \Delta t$ for all $k \in [\sfT-1]$. Let $\epsilon > 0$ and suppose there are $\sfW \geq \sfS$ windows so that $0 = w_1 < w_2 < \cdots < w_{\sfW+1}$ subdivides the segments $\{s_j\}_{j=1}^{\sfS+1}$. Additionally, suppose for all $i \in [\sfW]$, each window has number of time steps $|\bT^i| > 1 + {{\rm Lip}(g) \|g\|_{L^{\infty}
} (w_{i+1} - w_i)^2 e^{{\rm Lip}(g) (w_{i+1} - w_i)}}{\epsilon}^{-1}$. 
Then for each $i \in [\sfW]$ there exist an encoder $\sE_{\NN}^i \in \cF_{\NN}(D, d+1, L_{\sE^i}, W_{\sE^i})$, a decoder $\sD_{\NN}^i \in \cF_{\NN}(d+1, D, L_{\sD^i}, W_{\sD^i})$, a propagator $\sP_{\NN}^i \in \cF_{\NN}(d+1, d+1, L_{\sP^i}, W_{\sP^i})$, and a transcoder (for $i < \sfW$) $\sT^i_{\NN} \in \cF_{\NN}(d+1, d+1, L_{\sT^i}, W_{\sT^i})$ such that for any $k \in [\sfT]$, the WeldNet $\sW_{\NN}$  given in \eqref{eq:weldmultiwindow} guarantees
\[ \sup_{\bfx(0) \in \cM(0)} \|\sW_{\NN}(\bfx(0), t_k) - \sF(\bfx(0), t_k) \|_{\bR^D} < \epsilon. \]
The network parameters are
\begin{align*}
\textstyle
& L_{\sE^i}=O\left(\log^2\left({\sfS}/{\epsilon}\right)\right), \ W_{\sE^i}=O\left(D\left({\sfS}/{\epsilon}\right)^{d+1} \right)
\\
& L_{\sD^i}=O\left(\log\left({\sfS}/{\epsilon}\right)\right), \
W_{\sD^i}=O\left(D\left({\sfS}/{\epsilon}\right)^{d+1} \right)
\\
& L_{\sP^i}=O\left(\log\left({\sfS}/{\epsilon} \right)\right), \ 
W_{\sP^i}=O\left(\left({\sfS}/{\epsilon}\right)^{d+1}\right)
\\
& L_{\sT^i}=O\left(\log\left({\sfS}/{\epsilon}\right)\right), \ 
W_{\sT^i}=O\left(\left({\sfS}/{\epsilon}\right)^{d} \right)
\end{align*}
where we hide constants depending on (for all $i \in [\sfW]$) $\|g_{\pi(i)}\|_{L^{\infty}}$, ${\rm Lip}(g_{\pi(i)})$, $\log(D)$, $d$, $\bm{\tau}(\cM([s_{\pi(i)}, s_{\pi(i)+1}]))$, {$\max_i{\rm Lip}(\sE_*^{\pi(i)})$, $\max_i{\rm Lip}(\sD_*^{\pi(i)})$}, ${\rm Lip}(\sF)$, the volume of $\cM([s_{\pi(i)}, s_{i+1}])$, and $\sup_{\bfx \in \cM([w_i, w_{i+1}])} \|\bfx\|_{\bR^D}$.
\end{theorem}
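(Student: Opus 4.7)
The plan is to derive Theorem \ref{thm:weldlatentode} by rerunning the construction behind Lemma \ref{lemma:weldlatentode} with one change: each window inherits the oracle encoder and decoder of the unique segment that contains it. The crucial consequence is that whenever adjacent windows $i$ and $i+1$ lie in the same segment ($\pi(i) = \pi(i+1)$), the oracle transcoder $\sT^i_* = \sE^{\pi(i+1)}_* \circ \sD^{\pi(i)}_*$ reduces to the identity on latent space, so the effective number of nontrivial transcoders in the inference chain \eqref{eq:weldmultiwindow} collapses from $\sfW - 1$ to at most $\sfS - 1$. This is the reason the stated bounds still scale with $\sfS$ rather than with $\sfW$.

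Concretely, for each window $i \in [\sfW]$ I would define the window-level oracles by restriction,
\[ \widehat{\sE}^i_* := \sE^{\pi(i)}_*\big|_{\cM([w_i, w_{i+1}])}, \quad \widehat{\sD}^i_* := \sD^{\pi(i)}_*\big|_{\cZ([w_i, w_{i+1}])}, \quad \widehat{g}^i := g^{\pi(i)}\big|_{\cZ([w_i, w_{i+1}])}, \]
and verify that these inherit the needed regularity. Because $[w_i, w_{i+1}] \subseteq [s_{\pi(i)}, s_{\pi(i)+1}]$, these are restrictions of smooth (and, for $\widehat{g}^i$, Lipschitz) maps to closed sub-domains and thus retain those properties, with Lipschitz constants bounded by the segment-level constants. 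The subset $\cM([w_i, w_{i+1}])$ is a compact $(d+1)$-dimensional Riemannian submanifold of $\cM([s_{\pi(i)}, s_{\pi(i)+1}])$ embedded in $\bR^D$, with reach at least $\bm{\tau}(\cM([s_{\pi(i)}, s_{\pi(i)+1}]))$, since restricting the manifold shrinks the set of pairs over which the defining infimum is taken while only enlarging the allowed offset-to-manifold distances.

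With these window-level oracles in hand, the encoder, decoder, propagator, and cross-segment transcoder networks are constructed exactly as in the proof of Lemma \ref{lemma:weldlatentode}, using the same manifold-adapted ReLU approximation estimates with per-window accuracy budget of order $\epsilon/\sfS$. For an intra-segment transcoder ($\pi(i) = \pi(i+1)$), I would take $\sT^i_\NN$ to approximate the identity on $[0,1]^{d+1}$, which is realized exactly in a constant-depth, constant-width ReLU network via $x = \sigma(x) - \sigma(-x)$ applied coordinatewise; for cross-segment transcoders ($\pi(i) < \pi(i+1)$), the target $\sE^{\pi(i+1)}_* \circ \sD^{\pi(i)}_*$ is of exactly the form approximated in Lemma \ref{lemma:weldlatentode}. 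Error propagation across the chain \eqref{eq:weldmultiwindow} then proceeds by the same Lipschitz-stability and Gronwall-type arguments as in the lemma, with transcoder error contributing nonzero terms only at the at most $\sfS - 1$ cross-segment indices, producing a total error bounded by $\epsilon$.

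The main obstacle is the careful bookkeeping needed to ensure that every hidden constant attaches to the segment index $\pi(i)$ rather than the window index $i$, so that the dependence stated in the theorem (on $\|g_{\pi(i)}\|_{L^\infty}$, $\Lip(g_{\pi(i)})$, the reach, the Lipschitz norms of $\sE^{\pi(i)}_*$ and $\sD^{\pi(i)}_*$, and the segment volumes) is indeed sufficient. This reduces to the monotonicity observations above: restricting to a sub-interval cannot increase sup norms or Lipschitz constants, cannot decrease the reach, and cannot increase the manifold volume. Finally, the step-count hypothesis $|\bT^i| > 1 + \Lip(g)\|g\|_{L^\infty}(w_{i+1}-w_i)^2 e^{\Lip(g)(w_{i+1}-w_i)}/\epsilon$ is exactly what the proof of Lemma \ref{lemma:weldlatentode} requires to keep the Euler-type discretization error per window within the $O(\epsilon/\sfS)$ budget made available by the identity-transcoder collapse described above.
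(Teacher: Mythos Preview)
Your strategy is close to the paper's, but the error budget does not close as written. You correctly note that intra-segment oracle transcoders are the identity, so only at most $\sfS-1$ transcoder terms contribute nonzero error. However, the inference chain \eqref{eq:weldmultiwindow} still contains one propagator block per \emph{window}, hence up to $\sfW$ such blocks. If each block is constructed separately with a ``per-window accuracy budget of order $\epsilon/\sfS$'', then Proposition \ref{prop:composition} gives a propagator contribution of order $\sfW\cdot\epsilon/\sfS$, which exceeds $\epsilon$ once $\sfW\gg\sfS$. Shrinking the budget to $\epsilon/\sfW$ would repair the sum but would make the network sizes scale with $\sfW$ rather than $\sfS$, contradicting the stated bounds. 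Nothing in your argument explains why the $\sfW$ propagator error terms collapse to $\sfS$.

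The paper sidesteps this by reusing the \emph{networks} from Lemma \ref{lemma:weldlatentode}, not just the oracles: it sets $\sE^i_\NN:=\overline{\sE^{\pi(i)}_\NN}$, $\sD^i_\NN:=\overline{\sD^{\pi(i)}_\NN}$, $\sP^i_\NN:=\overline{\sP^{\pi(i)}_\NN}$, with $\sT^i_\NN$ equal to the exact identity (Proposition \ref{prop:idappx}) inside a segment and to $\overline{\sT^{\pi(i)}_\NN}$ at segment boundaries. Because all windows inside a segment now share the \emph{same} propagator network and the intervening transcoders are literally the identity map, consecutive window propagations telescope to the segment-level composition, and one checks by direct manipulation that the $\sfW$-window WeldNet computes \emph{exactly} the same function as the $\sfS$-window one. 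The error bound is then inherited from Lemma \ref{lemma:weldlatentode} with no new error analysis; in particular, your side arguments about the reach of $\cM([w_i,w_{i+1}])$ and restricted Lipschitz constants become unnecessary.
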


Theorem \ref{thm:weldlatentode} is a generalization of Lemma \ref{lemma:weldlatentode} (they both rely on Assumptions \ref{assum:evolve}, \ref{assum:mfd}, and \ref{assum:latentdynamics}), and its proof is deferred to Appendix \ref{app:proveweldlatentode}. Without Assumption \ref{assum:latentdynamics}, we can still establish an approximation error guarantee for WeldNet, but with a different construction for the propagator network. The proof of the following theorem is deferred to Appendix \ref{app:proveweldgeneral}.

\begin{theorem}
\label{thm:weldgeneral}
 Suppose Assumption \ref{assum:evolve} and \ref{assum:mfd} hold. Let $\epsilon > 0$, and suppose there are $\sfW \geq \sfS$ windows so that $0 = w_1 < w_2 < \cdots < w_{\sfW+1}$ subdivides the segments. Suppose the time grid $\sfT$ is uniform such that $t_{k+1} - t_k = \Delta t$ for all $k \in [\sfT - 1]$. For all $i \in [\sfS]$, let $\overline{\bm{T}_s} = |\bT \cap [s_i, s_{i+1}]|$ denote the number of time steps in segment $i$. Then for each $i \in [\sfW]$, there exist an encoder network $\sE_{\NN}^i \in \cF_{\NN}(D, d+1, L_{\sE^i}, W_{\sE^i})$, a decoder network $\sD_{\NN}^i \in \cF_{\NN}(d+1, D, L_{\sD^i}, W_{\sD^i})$, a propagator network $\sP_{\NN}^i \in \cF_{\NN}(d+1, d+1, L_{\sP^i}, W_{\sP^i})$, and a transcoder network (for $i < \sfW$) $\sT^i_{\NN} \in \cF_{\NN}(d+1, d+1, L_{\sT^i}, W_{\sT^i})$ such that for any $k \in [\sfT]$, the WeldNet  $\sW_{\NN}$ given in \eqref{eq:weldmultiwindow} guarantees
\[ \sup_{\bfx(0) \in \cM(0)} \|\sW_{\NN}(\bfx(0), t_k) - \sF(\bfx(0), t_k) \|_{\bR^D} < \epsilon. \]
The network parameters are
\begin{align*}
\textstyle
& L_{\sE^i}=O\left(\log^2\left({\sfS}/{\epsilon}\right)\right), \ W_{\sE^i}=O\left(D\left({\sfS}/{\epsilon}\right)^{d+1} \right)
\\
& L_{\sD^i}=O\left(\log\left({\sfS}/{\epsilon}\right)\right), \
W_{\sD^i}=O\left(D\left({\sfS}/{\epsilon}\right)^{d+1} \right)
\\
& L_{\sP^i}=O\left(\overline{\bm{T}_{\pi(i)}} \log\left({\sfS}/{\epsilon} \right)\right), \ 
W_{\sP^i}=O\left(\left({\overline{\bm{T}_{\pi(i)}}}/{\epsilon}\right)^d\right)
\\
& L_{\sT^i}=O\left(\log\left({\sfS}/{\epsilon}\right)\right), \ 
W_{\sT^i}=O\left(\left({\sfS}/{\epsilon}\right)^{d+1} \right)
\end{align*}
where we hide constants in $O$ depending
on (for all $i \in [\sfW]$) $d$, $\log(D)$, $\bm{\tau}(\cM([0, T]))$, {$\max_i{\rm Lip}(\sE_*^i)$, $\max_i{\rm Lip}(\sD_*^i)$}, ${\rm Lip}(\sF)$, volume of $\cM([0, T])$, and $\sup_{\bfx \in \cM([0, T])} \|\bfx\|_{\bR^D}$.
\end{theorem}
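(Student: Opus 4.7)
The plan is to follow the same outer scaffolding as Lemma \ref{lemma:weldlatentode}: construct oracle maps, approximate each component (encoder, decoder, propagator, transcoder) individually by FNNs with known manifold-approximation theorems, and then compose the individual error bounds through the WeldNet formula \eqref{eq:weldmultiwindow}. The essential new point is that, having dropped Assumption \ref{assum:latentdynamics}, we no longer have an ODE inside the latent space whose solution can be approximated by an Euler-type propagator network. Instead, for each window $i$ we regard the oracle propagator $\sP_*^i = \sE_*^{\pi(i)} \circ \sF(\cdot,\Delta t) \circ \sD_*^{\pi(i)}$ as a generic Lipschitz map on the $d$-dimensional latent slice $\cZ(t)$ inside $\cZ([w_i,w_{i+1}])$, Lipschitz by composition with constant bounded by $\mathrm{Lip}(\sE_*^{\pi(i)})\,\mathrm{Lip}(\sF)\,\mathrm{Lip}(\sD_*^{\pi(i)})$, and approximate it directly by an FNN on a $d$-dimensional domain.

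First I would build the encoder and decoder networks $\sE_\NN^i, \sD_\NN^i$ exactly as in the proof of Lemma \ref{lemma:weldlatentode}: partition $\cM([s_{\pi(i)}, s_{\pi(i)+1}])$ via charts, use the reach bound and the standard manifold-approximation theorems (e.g., \cite{chen2019efficient,liu2024deep}) to obtain FNNs that approximate $\sE_*^{\pi(i)}$ and $\sD_*^{\pi(i)}$ uniformly to tolerance $\epsilon_{\mathrm{ae}}$ with depth $O(\log^2(\sfS/\epsilon))$ and $O(\log(\sfS/\epsilon))$ and width $O(D(\sfS/\epsilon)^{d+1})$. Next I approximate each transcoder $\sT_*^i = \sE_*^{\pi(i+1)} \circ \sD_*^{\pi(i)}$ on the $(d+1)$-dimensional latent cube $\cZ([w_i,w_{i+1}])$ by a standard Lipschitz-function FNN approximation, giving tolerance $\epsilon_{\mathrm{tc}}$ with the claimed $(d+1)$-exponent width.

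For the propagator I would approximate the $d$-variable Lipschitz map $\sP_*^i$ on $\cZ(t)$ to uniform tolerance $\delta$. Since the WeldNet inference applies $\sP_\NN^i$ up to $\overline{\bm{T}_{\pi(i)}}$ consecutive times within the segment, and each application composes an $\mathrm{Lip}(\sP_*^i)$-amplification with an additive error $\delta$, a Gr\"onwall-style telescoping gives accumulated error at most $C\,\overline{\bm{T}_{\pi(i)}}\,\delta$. Choosing $\delta \asymp \epsilon / \overline{\bm{T}_{\pi(i)}}$ and invoking a standard Lipschitz approximation result on the $d$-dimensional slice yields a width of order $(\overline{\bm{T}_{\pi(i)}}/\epsilon)^d$. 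The matching depth $O(\overline{\bm{T}_{\pi(i)}}\log(\sfS/\epsilon))$ arises by combining the per-step depth $O(\log(\overline{\bm{T}_{\pi(i)}}/\epsilon))$ with the extra book-keeping needed so that the single trained one-step network remains accurate uniformly across all time slices $\{t_k\}_{k(w_i)}^{k(w_{i+1})}$ within the window; this is handled by a ``time-slice selector'' sub-network of depth proportional to $\overline{\bm{T}_{\pi(i)}}$, implementable in ReLU by encoding $\overline{\bm{T}_{\pi(i)}}$ piecewise linear thresholds.

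Finally I would assemble the error via the WeldNet formula \eqref{eq:weldmultiwindow}. Writing $\sW_\NN(\bfx(0), t_k) - \sF(\bfx(0), t_k)$ as a telescoping sum over encoder, propagator, transcoder, and decoder perturbations, and using the Lipschitz constants of the oracle maps (which are bounded by the hidden constants in the statement), each local error of magnitude $\epsilon_{\mathrm{ae}}, \delta, \epsilon_{\mathrm{tc}}$ is amplified by a product of Lipschitz constants at most $C^\sfW$. Setting $\epsilon_{\mathrm{ae}}, \delta, \epsilon_{\mathrm{tc}} \asymp \epsilon/\sfS$ and using $\sfW \ge \sfS$ gives the desired $\sup$-norm bound $\epsilon$ and fixes the $\sfS/\epsilon$ scaling in the network parameters. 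The main obstacle is the propagator analysis: controlling the accumulation of one-step errors across the $\overline{\bm{T}_{\pi(i)}}$ iterates while keeping the width exponent at $d$ (rather than $d+1$) forces the approximation to be done slice-by-slice, which is exactly what produces the $\overline{\bm{T}_{\pi(i)}}$-dependence in both depth and width that distinguishes this result from Lemma \ref{lemma:weldlatentode}.
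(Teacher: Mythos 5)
Your proposal follows essentially the same route as the paper: the autoencoders and transcoders are built exactly as in Lemma \ref{lemma:weldlatentode}, and the propagator is replaced by a slice-by-slice Lipschitz approximation of the one-step oracle map on each $d$-dimensional time slice, packed into a single network by a ReLU ``time-slice selector'' --- this is precisely the construction in the paper's Lemma \ref{lemma:propappx} (the $\tilde{p}^j_\NN$ networks with thresholds in $t-t_j$, summed via Proposition \ref{prop:appxtools}), and it is what produces the width exponent $d$ and the factor $\overline{\bm{T}_{\pi(i)}}$ in the depth. The reduction from $\sfW>\sfS$ to $\sfW=\sfS$ via identity transcoders is also the paper's argument.

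The one step that would fail as written is the final error assembly. You state that each local error is amplified by a product of Lipschitz constants ``at most $C^\sfW$'' and then set all tolerances $\asymp \epsilon/\sfS$; with an exponential amplification factor this yields a total error of order $C^\sfW\epsilon$, not $\epsilon$, and compensating by shrinking the tolerances to $\epsilon/(\sfS C^\sfW)$ would make the network widths grow exponentially in $\sfW$, contradicting the $\sfW$-independent bounds in the theorem. The paper avoids this because the tail compositions appearing in Proposition \ref{prop:composition} telescope: since $\sD_*^w\circ\sE_*^w=\mathrm{id}$ on the manifold and $\sF(\cdot,s)\circ\sF(\cdot,t)=\sF(\cdot,s+t)$, the Lipschitz constant of every tail collapses to at most $\mathrm{Lip}(\sF)\cdot\max_w\mathrm{Lip}(\sD_*^w)$, uniformly in the number of windows; this cancellation (carried out explicitly in items (a)--(e) of the proof of Lemma \ref{lemma:weldlatentode}) is what makes the $\epsilon/(2\sfS+1)$ per-term budget close. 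Your sketch needs to invoke this cancellation rather than a crude product bound; once it does, the rest of the argument goes through as you describe.
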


\textbf{Remark.} The main difference between Theorem \ref{thm:weldgeneral} and Theorem \ref{thm:weldlatentode} is that the former does not assume Assumption \ref{assum:latentdynamics}, which makes the approximation of the propagator network more challenging, resulting in the large propagator size. We briefly remark that that the idea of the proof of Theorem \ref{thm:weldgeneral} does not require a uniform time grid size, so it can be extended to the case of non-uniform time grids (in which case the size of the propagator will depend on the minimum and maximum time grid spacing).

\section{Experiments}
\label{sec:numerical}
\subsection{Data Generation}
\label{sec:diffeqsetup}
For simplicity, we  start from an evolutionary ordinary differential equation (ODE)
\begin{equation}
\begin{split}
\label{eq:ode}
\partial_t \bfx &= \mathfrak{F}(\bfx(t), t), \\
\bfx(0) &= \bfx_0
\end{split}
\end{equation}
with solution $\bfx(t) : [0, T] \rightarrow \bR^D$, where $T > 0$ is the end time.
We will measure the solution of the ODE in \eqref{eq:ode} at discrete time points; \, let $0 = t_1 < t_2 <\ldots < t_{\sfT} = T$ be the discretized time locations for some fixed $\sfT \in \mathbb{N}$. The data for the ODE represents a trajectory of \eqref{eq:ode} with initial condition $\bfx(t_1)=\bfx_0$: $\left\{\bfx(t_1), \bfx(t_2), \ldots, \bfx(t_{\sfT}) \right\} \subseteq \bR^D$. 

In the data-driven framework, we collect trajectory data from multiple initial conditions. Let $\{\bfx_n(t_1)\}_{n=1}^N$ be $N$ sets of initial conditions. The collection of discretized trajectory data with these initial conditions is denoted by 
\begin{equation}
\label{eq:trainingdata}
\left\{\bfx_n(t_1), \bfx_n(t_2), \ldots, \bfx_n(t_{\sfT})\right\}_{n=1}^N,
\end{equation}
which serves as the training data (and can be thought of as $N$ elements of $\bR^{\sfT \times D}$). 

We next consider a partial differential equation (PDE)
\begin{equation}
\begin{split}
\label{eq:pde}
\partial_t \mathfrak{u} &= \mathfrak{F}(\mathfrak{u}(\bfx, t), t), \\
\mathfrak{u}(\bfx, 0) &= f(\bfx), \quad \forall \bfx \in \Omega,
\end{split}
\end{equation}
with solution $\mathfrak{u} : \Omega \times [0, \sfT] \rightarrow \bR$ where $\Omega \subset \bR^{d_\Omega}$ is the spatial domain. We will discretize the solution of the PDE in the spatial and temporal domain. Let $\mathbb{X} = (X_1, \dots, X_D) \subseteq \Omega$ be a discretization set in the spatial domain and consider the same temporal discretization as before. The discretized data for the PDE solution $\mathfrak{u}$ is denoted $\left\{\mathfrak{u}(\mathbb{X},t_1),\mathfrak{u}(\mathbb{X},t_2), \ldots, \mathfrak{u}(\mathbb{X},t_T)\right\} \subseteq \bR^{D}$, which represents a discretized trajectory of the PDE in \eqref{eq:pde} with the initial condition $f$. 

Given initial conditions $\{f_n\}_{n=1}^N$ and solution trajectories $\{ \mathfrak{u}_n \}_{n=1}^N$ (for all $n \in [N]$, $\mathfrak{u}_n$ solves \eqref{eq:pde} with initial condition $f = f_n$), the trajectory data in this case has the same form as \eqref{eq:trainingdata} if we use the notation $\bfx_n(t_k) = \mathfrak{u}_n(\mathbb{X}, t_k)$, that is, $\bfx_n(t_k) \in \bR^D$ with $i$th component given by $\mathfrak{u}_n(X_i, t_k).$

Many differential equations in applications can be complex to simulate - the discretization set size grows exponentially in the spatial dimension $d_{\Omega}$. 
When the spatial dimension $d_{\Omega}$ is higher than one, a large number of discretization points are needed.
Our goal is to build a low-dimensional surrogate model based on training data of the form \eqref{eq:trainingdata} for ODEs, PDEs, and other evolutionary  processes. After training, one can predict the solution trajectory for a new initial condition by evolving the surrogate model in a low-dimensional space.

\subsection{Benefit of Windowed Approach}
\label{sec:illustrate}
In this section, we illustrate the benefit of using a windowed autoencoder approach on a simple but instructive example. Consider the one-dimensional transport equation for $T = 0.3$, given by
\begin{equation}
\label{eq:transport1d}
u_t = -u_x; \quad u(x, 0) = g(x), \: x \in (0, 1),
\end{equation}
%
with zero Dirichlet boundary conditions. We consider the weak version of this PDE, so the initial condition $g$ does not have to be differentiable everywhere.

We consider initial conditions containing two hats. Let $\sigma(x) = \max(x, 0)$, and fix $\epsilon = 0.05$. We define the ``hat'' function centered at $0$ with width $\epsilon$ as $$H_{ \epsilon} (x) =
2 \epsilon^{-1}\left(\sigma(x) - 2\sigma\left(x - \frac{\epsilon}{2}\right) + \sigma(x- \epsilon)\right).$$
We consider the following one-parameter set of initial conditions:
\begin{align}
\label{eq:tscale}
\hat{g}_\text{tscale} &= \{a\cdot H_{0.05}(x - 0.1) + H_{0.05}(x-0.2) : a \in [1, 4]\}. 
\end{align}
%

%
We discretize the spatial domain $[0, 1]$ with 512 equally spaced points. We consider a probability measure on the set $\hat{g}_\text{tscale}$ obtained by sampling the parameter $a$ uniformly from $[1, 4]$ and then outputting the discretized initial condition corresponding to that value of $a$. Formally, we define a function $G_\text{tscale} : [1, 4] \rightarrow \bR^{512}$ that maps from parameter ($a$) to initial condition. This allows us to construct the initial measure $\rho_\text{tscale}(0) = (G_\text{tscale})_\sharp\textbf{Unif}([1, 4])$. We sample by $\rho_\text{tscale}(0)$ by first choosing $1 \leq a \leq 4$ uniformly and outputting the corresponding (discretized) initial condition for $a$ as defined in \eqref{eq:tscale}. 

The trajectory manifold $\cM_\text{tscale}$ is obtained by collecting solutions from the PDE in \eqref{eq:transport1d} with initial conditions from $\rho_\text{tscale}(0)$. For simplicity, we omit the time interval notation for the manifolds in this section. We collect 500 trajectories with 301 time steps until $T=0.3$, so our data is of the form $((\bfx_n(t_k)))_{k=1}^{51})_{n=1}^{500}$, as detailed in Section \ref{sec:setup}.

We will train WeldNet models on this dataset with one window, two windows, and four windows, and we will attempt to roughly equalize the number of trainable parameters and number of training epochs. Specifically, we will implement each component of WeldNet using 3 layer neural networks so that we implement the one-window, two-window, and four-window models with networks of width 1000, 500, and 250 (respectively) and train each component for 1200, 600, and 300 epochs (respectively). The total training time for each model in minutes is 42.28, 15.73, and 8.31 (respectively).

In this section, we will use a latent space dimension of $2$ for the autoencoders - equal to the intrinsic dimension of the data (see Table \ref{table:datasetid}). This results are in a lower performance than a latent space dimension of $4$ that we use later in the paper. We use this reduced latent space dimension to show that using windows with the estimated intrinsic dimension (instead of using a higher latent space dimension) can overcome the error accumulation issues of a single-window autoencoder model, and due to the ease of visualizing two dimensional latent spaces.

We train autoencoders and propagators together for all models. We show the latent space of each model in Figures \ref{fig:illustratew1}, \ref{fig:illustratew2}, and \ref{fig:illustratew4} for the one-window, two-window, and four-window models respectively. We show each window individually, and we show the points clouds colored by their parameter in the left plot and time in the right plot. The latent space for the one window model in Figure \ref{fig:illustratew1} involves pinches and twists, while the latent spaces (in each window) for the two-window and four-window are much more uniform. We note that even though the latent space in the second window of the two-window model has a compressed area in Figure \ref{fig:illustratew2}, the propagator is still able to learn these latent dynamics.

\begin{figure}
    \centering
\fbox{\includegraphics[width=0.55\linewidth]{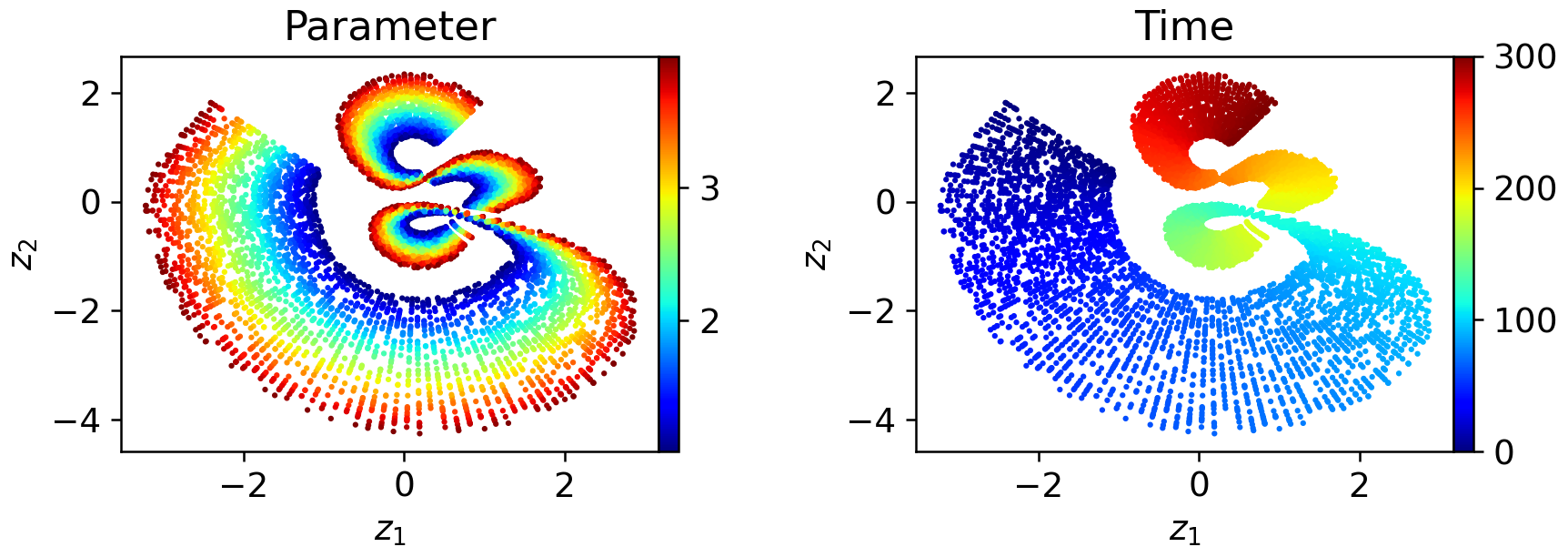}}
    \caption{Latent space of illustrative example with one-window model. The left plot is colored according to the parameter $a$ in the initial condition \eqref{eq:tscale} and the right plot is colored according to time $t$.}
    \label{fig:illustratew1}
\end{figure}

\begin{figure}[htbp]
  \begin{minipage}[t]{0.49\textwidth}
    \centering
\fbox{\includegraphics[width=0.96\textwidth]{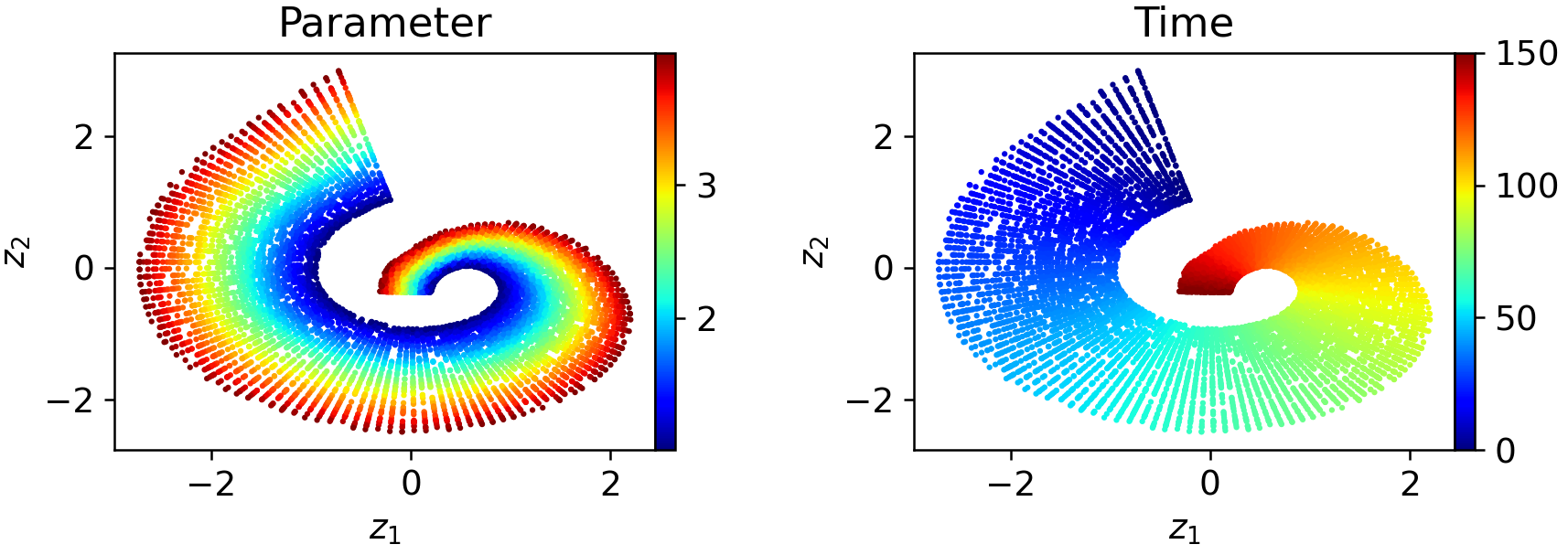}}
  \end{minipage}
  \hfill
  \begin{minipage}[t]{0.49\textwidth}
    \centering
\fbox{\includegraphics[width=0.96\textwidth]{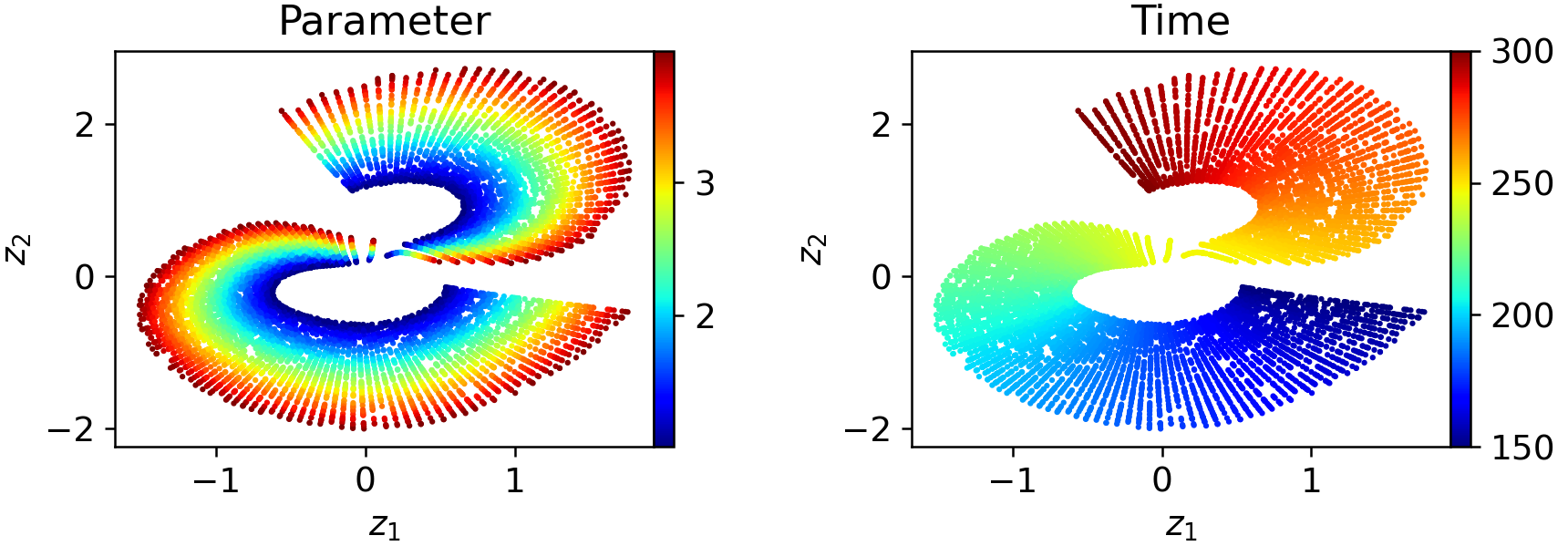}}
  \end{minipage}
  \caption{Latent space of illustrative example with two-window model.}
  \label{fig:illustratew2}
\end{figure}

  \begin{figure}[htbp]
  \begin{minipage}[t]{0.49\textwidth}
    \centering
\fbox{\includegraphics[width=0.96\textwidth]{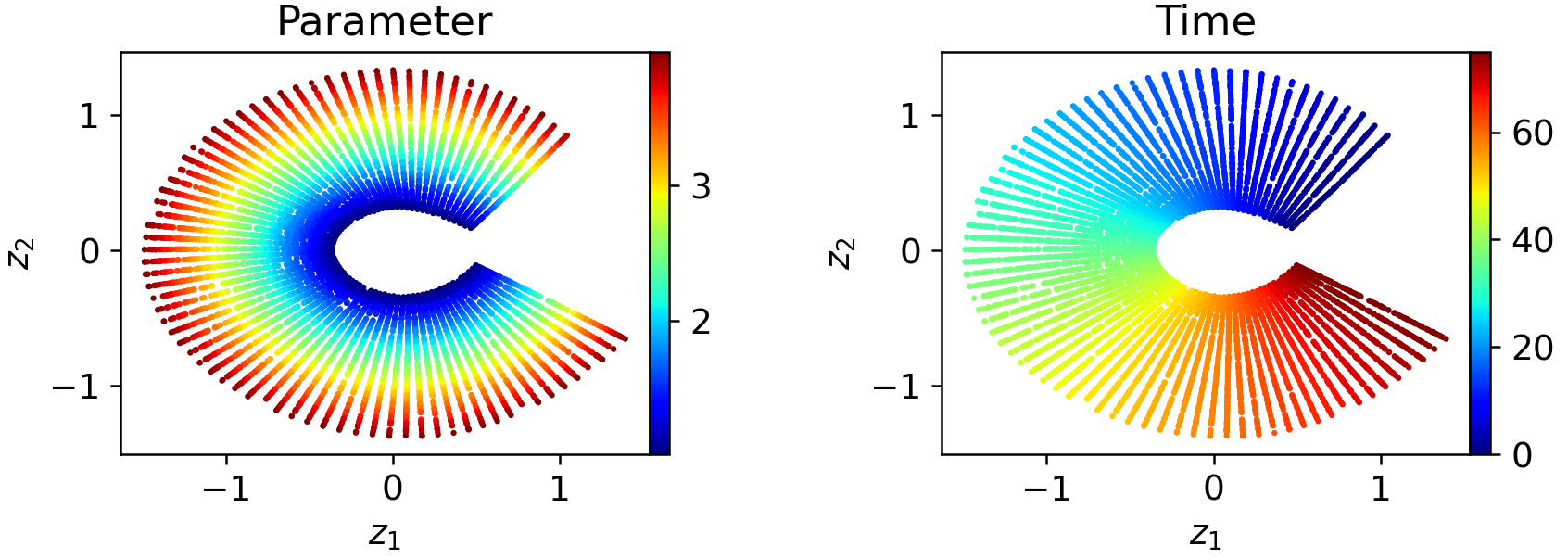}}
  \end{minipage}
  \hfill
  \begin{minipage}[t]{0.49\textwidth}
    \centering
\fbox{\includegraphics[width=0.96\textwidth]{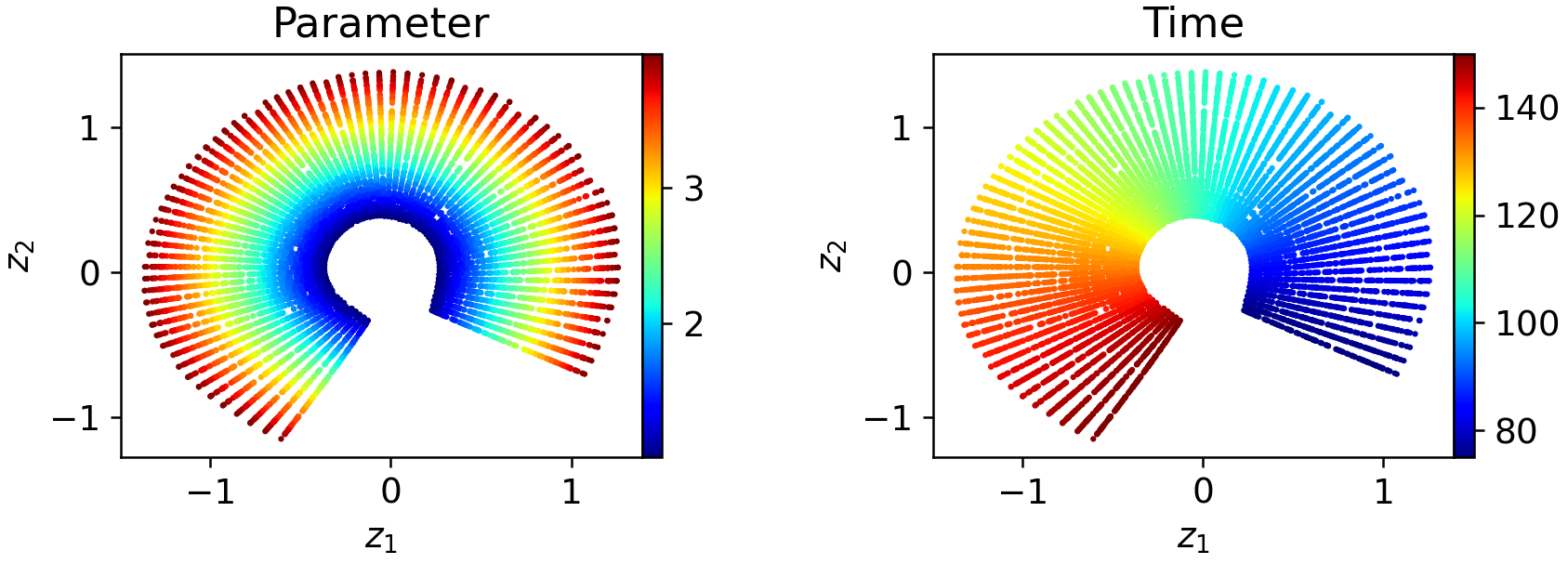}}
  \end{minipage} \\

\begin{minipage}[t]{0.49\textwidth}
    \centering
\fbox{\includegraphics[width=0.96\textwidth]{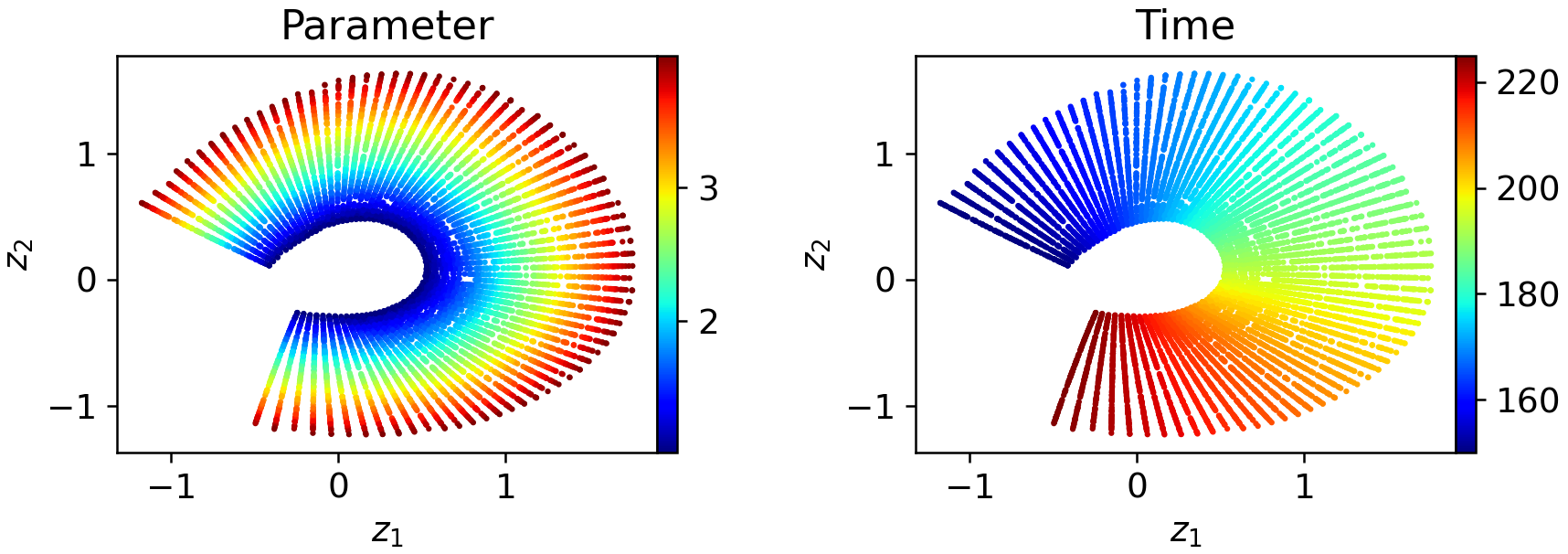}}
  \end{minipage}
  \hfill
  \hspace{0.3em}
  \begin{minipage}[t]{0.49\textwidth}
    \centering
\fbox{\includegraphics[width=0.96\textwidth]{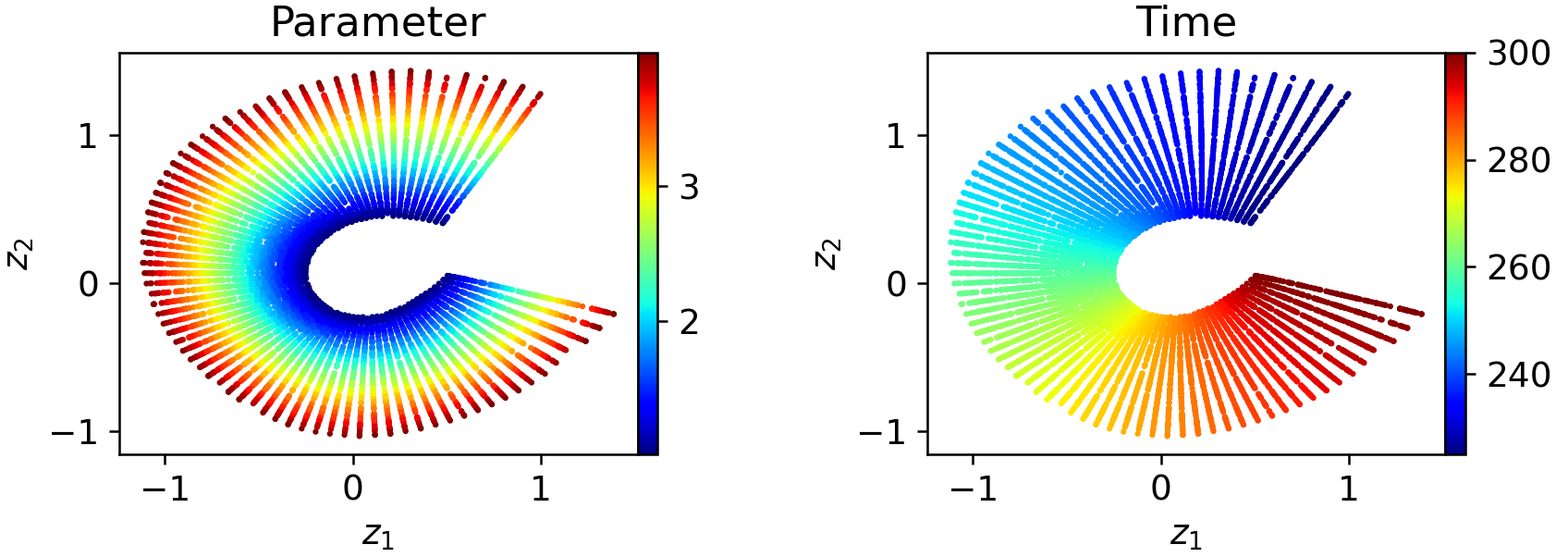}}
  \end{minipage}
  \caption{Latent space of illustrative example with four-window model.}
  \label{fig:illustratew4}
  
\end{figure}

Next, we train propagators for the one-window model and the propagators and transcoders for the two-window and four-window models. We compare the projection and operator error of the models over all times in Figure \ref{fig:illustrativeprojops}. 
{The projection error is the error of the autoencoder reconstruction on the test data (i.e. the representation or approximation error of the autoencoder in representing the solution manifold). The operator error is the prediction error of a solution given an initial condition, which depends on both the projection error and the error of the propagator network in learning the dynamics.} We observe severe error accumulation in the one-window model. This problem is resolved by using more windows.

\begin{figure}
    \centering
    \includegraphics[width=0.6\linewidth]{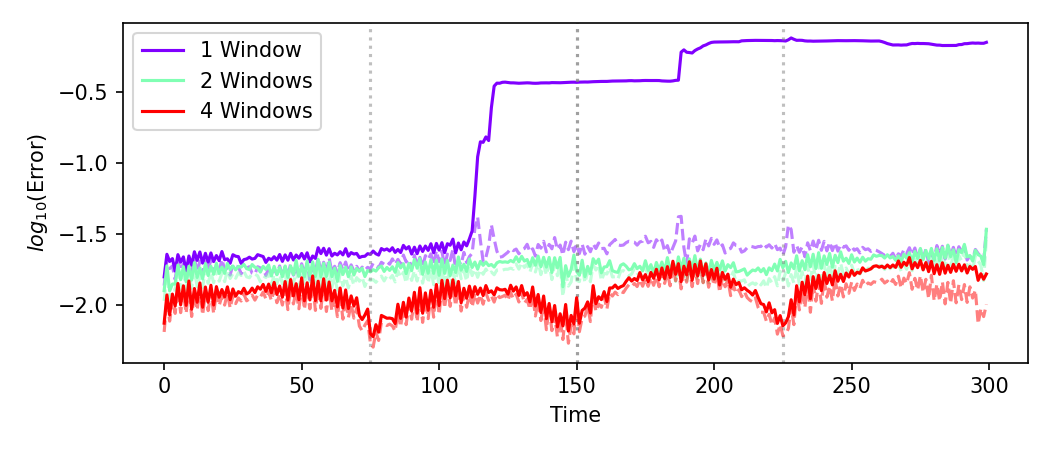}
    \caption{Projection (dashed) and Operator (solid) error for the example in Section \ref{sec:illustrate}.}
    \label{fig:illustrativeprojops}
\end{figure}

\subsection{Implementation Details}
\label{sec:implementation}
The datasets for the Burgers' equation and KdV equation trajectory manifolds were generated using the chebfun package \cite{driscoll2014chebfun} in Matlab. The datasets for the transport equation trajectory manifolds were generated using the analytic solution of the equation.

\subsubsection{Model Descriptions}
\label{sec:modeldetails}
In this section, we detail the architectures of the components of WeldNet and comparison models. Suppose the data is $D$-dimensional. 

FF-WeldNet and Conv-WeldNet uses ReLU networks with 3-hidden layers and width 200 for the propagators and transcoders. FF-WeldNet uses feedforward ReLU networks with 3-hidden layers and width 500 for the encoder and decoder. Conv-WeldNet uses a 4 layer convolutional encoder with the convolution layers having channel size 8, 16, 32, 32, and kernel size 8, 8, 8, 4, respectively, with stride 2, and symmetric zero padding of 1, and it uses a symmetric architecture of convolution transpose layers for the decoder. 

Note that a one-window WeldNet model is a normal latent dynamics model which has already been considered in the literature (e.g. see \cite{lee2020model}), so we will use the label ``AENet'' to refer to that. The goal of our work is to show the benefit of multiple windows, so the performance of a one-window WeldNet model or an AENet should only be considered as a baseline.

To illustrate the advantage of dimension reduction, we train a model consisting of a propagator operating directly on the $D$-dimensional data. We call this approach High Dimensional Propagator (HDP). Specifically, we train (depth 3) ReLU networks with input and output dimension equal to $D$ ($D = 512$ for most of our examples). We use a width of $1000$ since HDP only consists of a single network (while WeldNet and other models have several networks). We found that using residual/displacement based training as done for the the WeldNet propagator (as described in Section \ref{sec:displacement}) leads to training instability. Thus, we will train the HDP to predict the high dimensional state at the next time directly, i.e. given an input of $x(t) \in \cM(t) \subseteq \bR^D$, the HDP should predict $x(t+\Delta t) \in \bR^D$.

The Time-Input feedforward networks are three-hidden layer ReLU networks with width $1000$. If the data is $D$-dimensional, the Time-Input feedforward network has $D+1$ input neurons and $D$ output neurons. Given an input $u(0)$ (representing a data point initial condition) and a time $t$, the Time-Input network outputs the evolution of that input by $t$ time units, i.e. it should approximately be $u(t)$.

We also compare to a model called Latent Deep Operator Network (Latent-DON) \cite{kontolati2024learning}. Similar to a one-window WeldNet, Latent-DON trains a single autoencoder over all times and then trains a latent prediction model that uses a Deep Operator Network (DON) architecture. DON is an operator learning method that takes time as an input and tries to predict the evolution of an initial condition by that given time, similar to time-input models, but in latent space. Given an input $u(0) \in \bR^D$ and a time $t$, Latent-DON first projects to a $k$-dimensional latent code $z(0) \in \bR^k$ using the encoder, and then uses the latent code and time as inputs to the DON to try to predict the evolution of that latent code by $t$ time units. We then decode the predicted evolved latent code, and this value should approximately be $u(t)$. See Appendix \ref{app:don} for specific details on the DON architecture used in Latent-DON.

We remark that the Time-Input and Latent-DON models have an easier learning task than WeldNet, because they are only tested to predict the evolution of the initial conditions by a given time, while WeldNet is capable of evolving data from any start time to any end time on the time grid.

LDNet is a model reduction based method for learning dynamical systems. The method trains a dynamics network that evolves latent codes and a reconstruction network that maps from latent code to function predictions on a grid. Specifically, each initial condition is assigned a latent code consisting of $f$ fixed dimensions and $k$ dynamic dimensions that start at 0. The fixed dimensions are given by the codes assigned to the initial conditions (which need to be known ahead of time). The dynamics network is trained to evolve the dynamic portion of each latent code, with $f+k$ input neurons and $k$ output neurons. The latent code at a given time can be evolved by the dynamics network iteratively. The output prediction is then obtained by passing the evolved latent code through the reconstruction network. This means that the reconstruction network has $f+k$ input neurons and $D$ output neurons. We use a width of $500$ for the dynamics and reconstruction networks, and we use the ReLU activation function for both networks. 

\textbf{Remark.} This description of LDNet is reliant on a fixed size $D$ for the data and reconstruction network, while the original implementation of LDNet in \cite{regazzoni2024learning} uses a grid-independent reconstruction network. We use a modified grid based version for a more direct comparison to our grid based WeldNet models, and due to training issues with the grid independent version. See Appendix \ref{app:ldnet} for a comparison between the original LDNet and the grid based LDNet that we will subsequently use in this paper.

Weak-form Latent Space Dynamics Identification, WLaSDI, is a reduced-order modeling technique recently proposed in \cite{tran2024weak}. WLaSDI uses a one-hidden layer autoencoder with 1024 hidden neurons in the encoder and 6168 hidden neurons in the decoder with the $\bm{silu}$ activation function, $\bm{silu}(x) = \frac{x}{1+e^{-x}}$. The latent dynamics are then learned by solving a least-squares problem involving a dictionary of trial functions; specifically we apply the region-based local dynamics identification algorithm discussed in Section 3.3.2 of \cite{tran2024weak}.

All WeldNet, Time-Input, HDP, Latent-DON, and LDNet models are trained with a batch size of 32, learning rate of 1e-4, with an adaptive learning schedule that decay by a factor of 0.3 with a patience of 15 epochs, to a minimum learning rate of 1e-6. We use PyTorch \cite{paszke2019pytorch} to implement the neural networks, and we use PyTorch's implementation of the AdamW optimization algorithm with default settings for training the models.

For WLaSDI, the autoencoder is trained with a batch size of 20, learning rate of 1e-4, with an adaptive learning rate schedule that decays by a factor of 0.1 with a patience of 10 epochs, using the implementation in \cite{tran2024weak}.





\subsubsection{Displacement Networks}
\label{sec:displacement}
When training the propagators, we noticed that using a ``displacement'' or residual architecture for the propagator and transcoder significantly aids in training. Specifically, we implement the propagator network $\sP_\NN$ as a function of the form:

\[ \sP_\NN(\bfz) = \bfz + f_\NN(\bfz), \]
where $f_\NN$ is a trainable neural network. We implement the transcoder in the same way. While changing the propagator and transcoder to this form does not affect the approximation theory, we found experimentally that this change resulted in a much lower loss in the training. On the other hand, the loss explodes even for low learning rate when this method is used for the HDP model, so we only employ this displacement network strategy for WeldNet. Note that LDNet uses a displacement form for its dynamics network as part of its original specification, so we also use this strategy there.

\subsection{Numerical Results}

\label{sec:experiment}
We test WeldNet on several numerical examples, similar to the illustrative example in Section \ref{sec:illustrate}, demonstrating the efficacy of our nonlinear dimension reduction and trajectory learning model. For all examples, we collect 500 trajectories with 301 time steps, so our data is of the form $((\bfx_n(t_k)))_{k=1}^{301})_{n=1}^{500}$, as detailed in Section \ref{sec:setup}.

We consider three versions of WeldNet based on the architecture for the autoencoder: FF-Weld (feedforward networks), Conv-Weld (convolutional networks), and PCA-WeldNet (Principal Component Analysis for linear dimension reduction). Note that PCA-WeldNet is analogous to the classic POD or linear model reduction methods.

We compare with Feedforward Neural Networks, LDNets in \cite{lu2021learning} and Weak-form Latent Space Dynamics Identification (WLaSDI) as described in \cite{tran2024weak}. We use a latent dimension of \textbf{4} for WeldNet and WLaSDI, and we use 1 fixed dimension and 3 dynamic dimensions for LDNet.

The numerical test error of all the trained models at various times can be found in Appendix \ref{app:errortables}.

\subsubsection{Burgers' Equation}
We consider the viscous Burgers' equation with $\nu = 1/1000$ and with periodic boundary conditions for $t \in[0,1] $. 
\begin{align}
\label{eq:burgers}
u_t = \nu u_{xx} - uu_x; \quad u(x, 0) = g(x), \: x \in (0, 1). 
\end{align}%

We consider initial conditions generated by combining two complex base waves. Let $w_0, w_1$ be functions sampled from the Gaussian Random Field $N(0, 7^4(-\frac{d^2}{dx^2} + 7^2I)^{-2.5})$ on $[0, 1)$. For any $a \in [-0.9, 0.9]$, consider the two sets of initial conditions:
\begin{align}
\hat{g}_\text{bscale} = \{aw_0(x) + \sqrt{1-a^2}w_1(x) : a \in [-0.9, 0.9]\}, 
\label{eq:bscaleinitial}
\\
\hat{g}_\text{bshift} = \{0.5 w_0(x-h) + \sqrt{0.75}w_1(x-h) : h \in [0, 1]\}.
\label{eq:bshiftinitial}
\end{align}
The trajectory manifolds $\cM_\text{bscale}$ and $\cM_\text{bshift}$ are obtained by collecting solutions from the PDE in \eqref{eq:burgers} with initial conditions from $\hat{g}_\text{bscale}$ and $\hat{g}_\text{bshift}$ respectively. We collect 500 data trajectories with 301 time steps. We display the estimated intrinsic dimensionality of these datasets by Maximum Likelihood Estimation (MLE) \cite{levina2004maximum} at various times and as a whole in Table \ref{table:datasetid} in Section \ref{sec:intrinsicdim} for details on this computation). The intrinsic dimension for each time is approximately $1$, and the dimension of the datasets for all times is approximately $2$, matching our expectation of 1 parameter dimension plus 1 time dimension.

We compare the performance of our WeldNet models (for $\sfW = 4$) with other models in Figures \ref{fig:bscaleerrorparams} and \ref{fig:bshifterrorparams} in terms of final time Prediction error. We use a latent space dimension of \textbf{four} for this example. The $x$-axis is the parameter value for $a$ or $h$ (respectively), and the $y$-axis shows the prediction error at the final time for the initial condition corresponding to that parameter value. In  Figures \ref{fig:bscaleerrorparams} and \ref{fig:bshifterrorparams}, FF-WeldNet significantly outperforms HDP, highlighting the advantages of model reduction: low-dimensional latent propagators are substantially easier to learn than high-dimensional ones. FF-WeldNet also significantly outperforms PCA-WeldNet, which demonstrates the advantage of nonlinear dimension reduction. On this problem, FF-WeldNet and LDNet achieve the best result.

\begin{figure}[htbp]
  \begin{minipage}[t]{0.48\textwidth}
    \centering
    \includegraphics[width=0.95\textwidth]{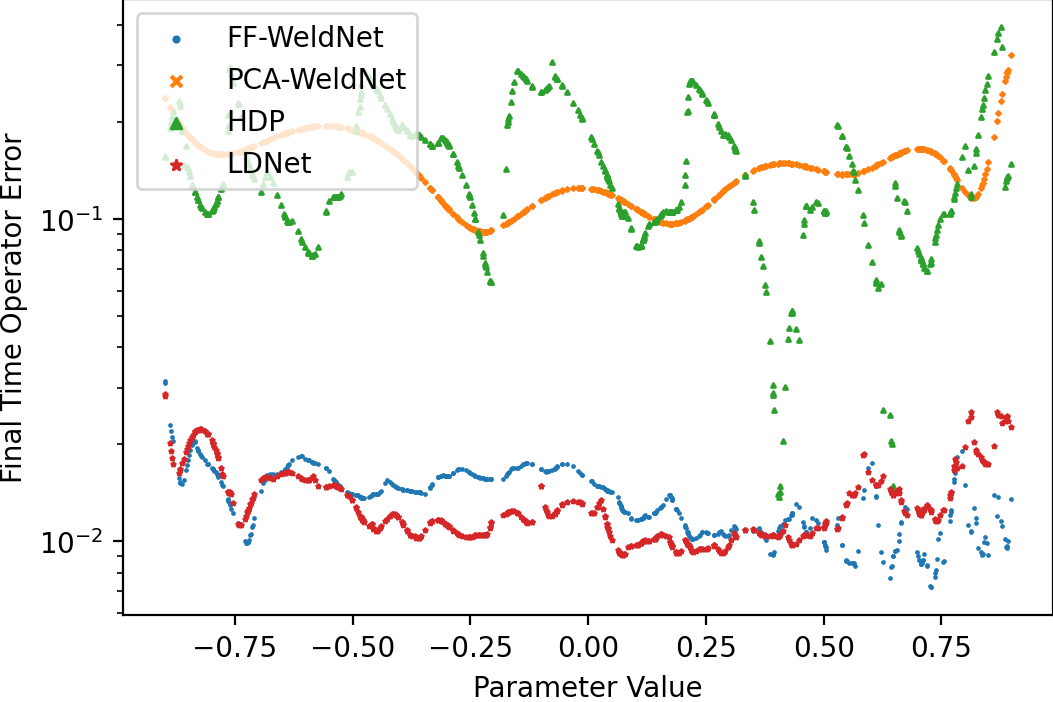}
    \caption{Error vs Parameter value for   $\cM_\text{bscale}$ about the trajectory manifold of the Burgers' equation \eqref{eq:burgers} with initial conditions in \eqref{eq:bscaleinitial}. }
    \label{fig:bscaleerrorparams}
  \end{minipage}
  \hfill
  \begin{minipage}[t]{0.48\textwidth}
    \centering
    \includegraphics[width=0.95\textwidth]{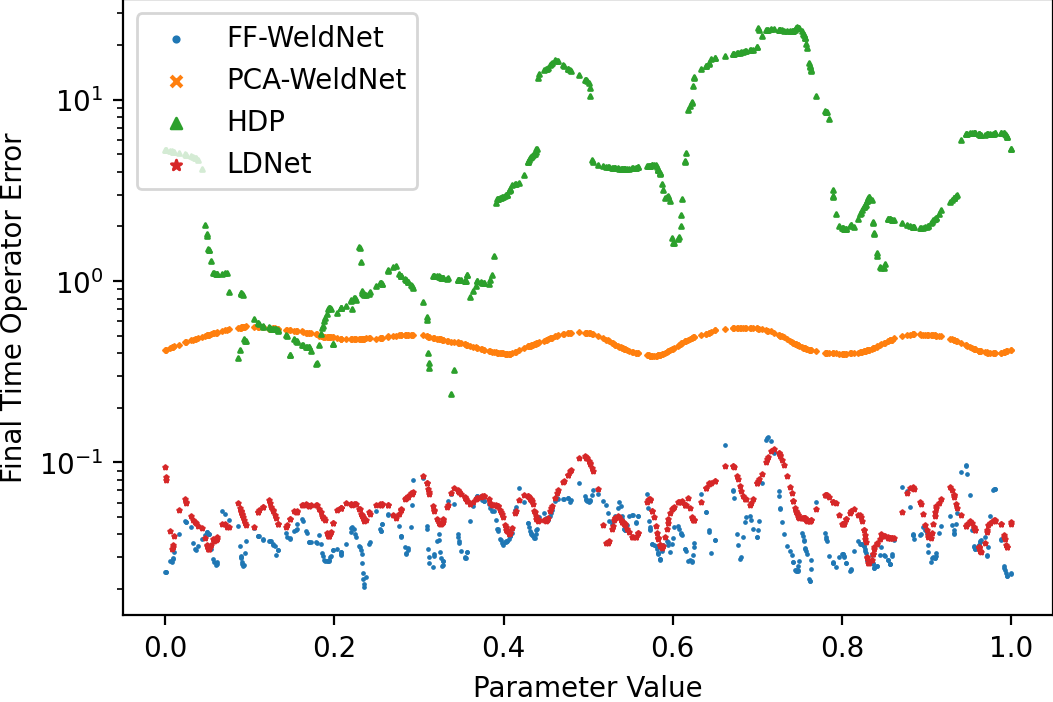}
    \caption{Error vs Parameter value for $\cM_\text{bshift}$ about the trajectory manifold of the Burgers' equation \eqref{eq:burgers} with initial conditions in \eqref{eq:bshiftinitial}.}
    \label{fig:bshifterrorparams}
  \end{minipage}
\end{figure}

For both of the trajectory manifolds $\cM_\text{bscale}$ and $\cM_\text{bshift}$, we can compare the reconstruction error and the operator/prediction error at each time. Figures \ref{fig:bscalepo} and \ref{fig:bshiftpo} are line plots of error versus time for FF-WeldNet, Conv-WeldNet, and PCA-WeldNet for $\cM_\text{bscale}$ and $\cM_\text{bshift}$, respectively. For this example, FF-WeldNet and Conv-WeldNet with latent dimension 4 outperforms PCA-WeldNet with latent dimension 4, since the solution manifold are nonlinear in these examples.

\begin{figure}
    \centering
    \includegraphics[width=0.6\linewidth]{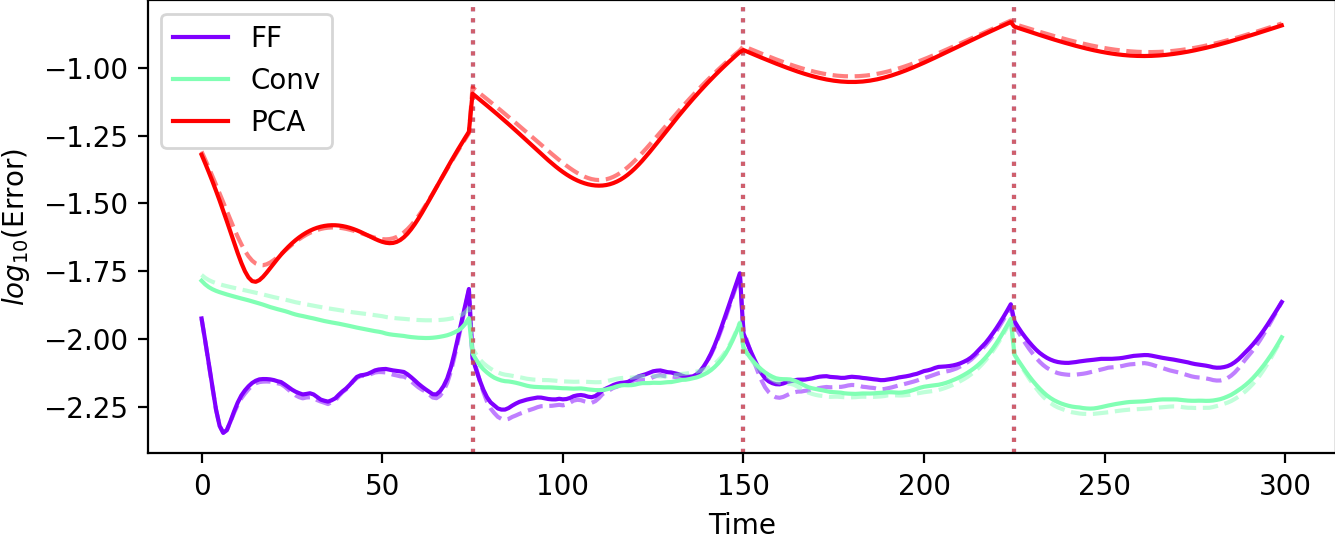}
    \caption{Autoencoder error (dashed) and Test error for $4$-window WeldNet models for $\cM_\text{bscale}$. FF refers to FF-WeldNet, Conv refers to Conv-WeldNet, and PCA refers to PCA-WeldNet. The latent space dimension is $4$.}
    \label{fig:bscalepo}
\end{figure}

\begin{figure}
    \centering
    \includegraphics[width=0.6\linewidth]{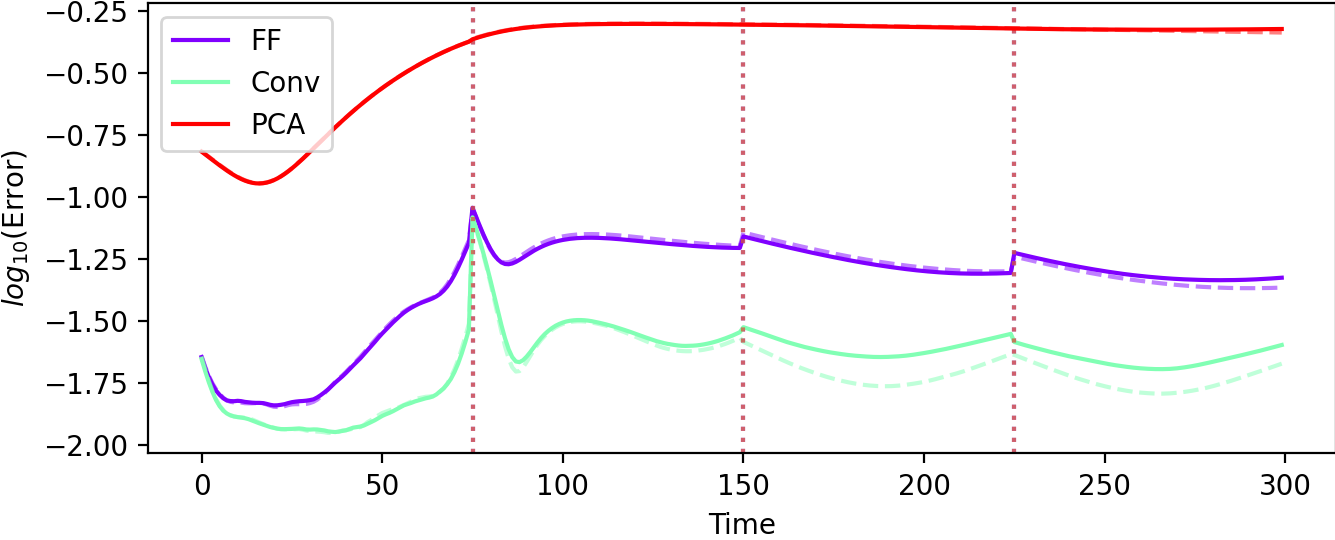}
    \caption{Autoencoder error (dashed) and Test error for $4$-window WeldNet models for $\cM_\text{bshift}$.}
    \label{fig:bshiftpo}
\end{figure}

\subsubsection{Transport Equation}
We next consider the transport equation given by \eqref{eq:transport1d}. We consider initial conditions given by $\hat{g}_\text{tscale}$ in \eqref{eq:tscale} and
\begin{align}
\hat{g}_\text{tshift}(x) &= \{H_{0.05}(x-0.1) + 2.5 \cdot H_{0.05}(x-(0.2 + 0.1h)) : h \in [0, 3] \} 
\label{eq:tshiftinitial}
\end{align}
and collect the trajectory manifold $\cM_\text{tscale}$ and $\cM_\text{tshift}$ from the corresponding condition sets. We use a latent space dimension of \textbf{four} for this example. We compare the performance of four window WeldNet models with other models in terms of final time prediction error in Figures \ref{fig:tscaleerrorparams} and \ref{fig:tshifterrorparams} respectively. WLaSDI has a very high relative operator error, so it does not appear in the figure to allow for clarity of scaling. The transport equation is a representative example of advection-dominated PDEs, which generates a highly nonlinear solution manifold. In Figures \ref{fig:tscaleerrorparams} and \ref{fig:tshifterrorparams}, FF-WeldNet with latent dimension 4 outperforms PCA-WeldNet with latent dimension 4, HDP and LDNet.

\begin{figure}[htbp]
  \begin{minipage}[t]{0.48\textwidth}
    \centering
    \includegraphics[width=0.95\textwidth]{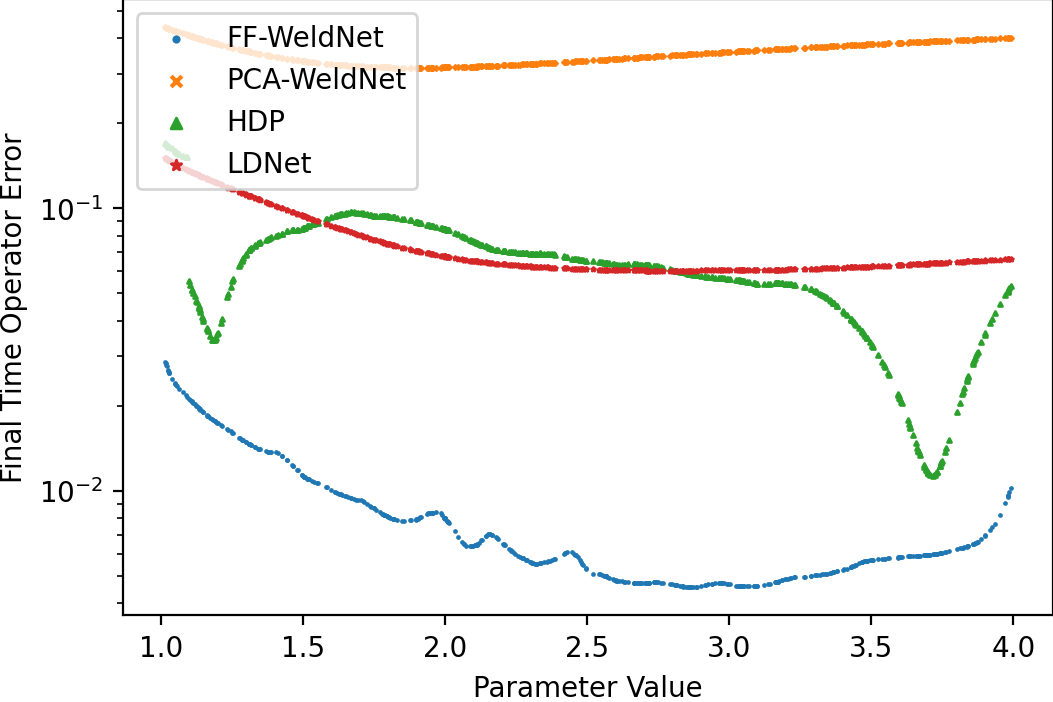}
    \caption{Error vs Parameter value for $\cM_\text{tscale}$ about the trajectory manifold of the transport equation \eqref{eq:transport1d} with initial conditions in \eqref{eq:tscale}.}
    \label{fig:tscaleerrorparams}
  \end{minipage}
  \hfill
  \begin{minipage}[t]{0.48\textwidth}
    \centering
    \includegraphics[width=0.95\textwidth]{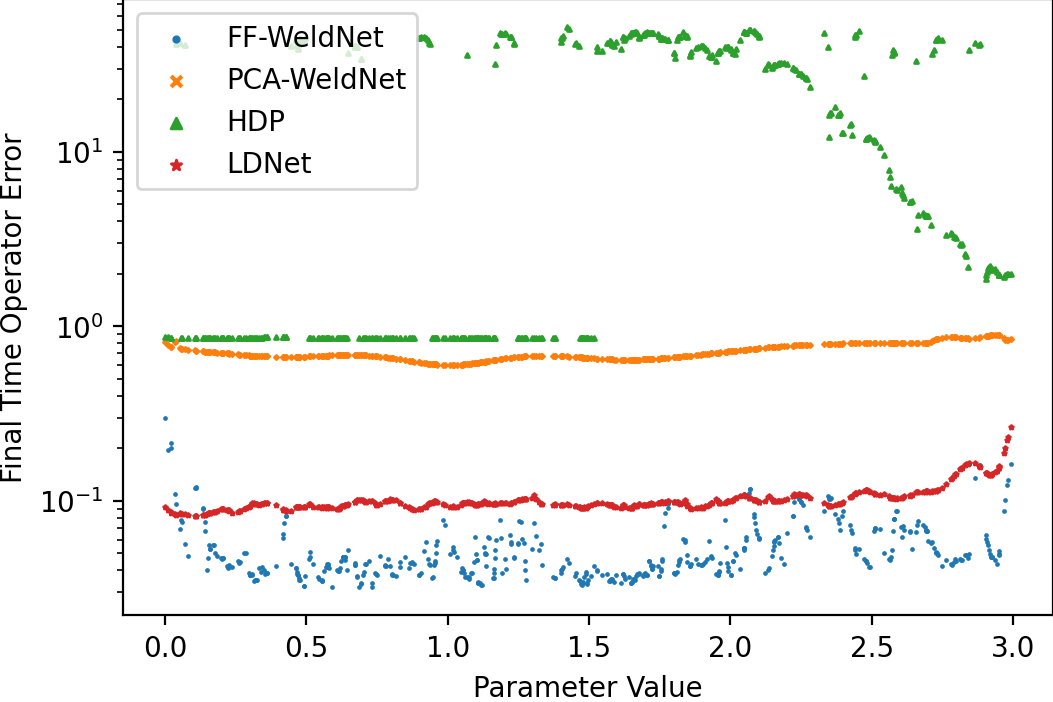}
    \caption{Error vs Parameter value for $\cM_\text{tshift}$ about the trajectory manifold of the transport equation \eqref{eq:transport1d} with initial conditions in \eqref{eq:tshiftinitial}.}
    \label{fig:tshifterrorparams}
  \end{minipage}
\end{figure}

For both of the trajectory manifolds $\cM_\text{tscale}$ and $\cM_\text{tshift}$, we can compare the reconstruction error and the operator error at each time. Figures \ref{fig:tscalepo} and \ref{fig:tshiftpo}, are line plots of error versus time for FF-WeldNet, Conv-WeldNet, and PCA-WeldNet for $\cM_\text{tscale}$ and $\cM_\text{tshift}$, respectively. In Figures \ref{fig:tscalepo} and \ref{fig:tshiftpo}, Conv-WeldNet achieves the lowest error, followed by FF-WeldNet. Both models substantially outperform PCA-WeldNet for the same latent space dimension.

\begin{figure}
    \centering
    \includegraphics[width=0.6\linewidth]{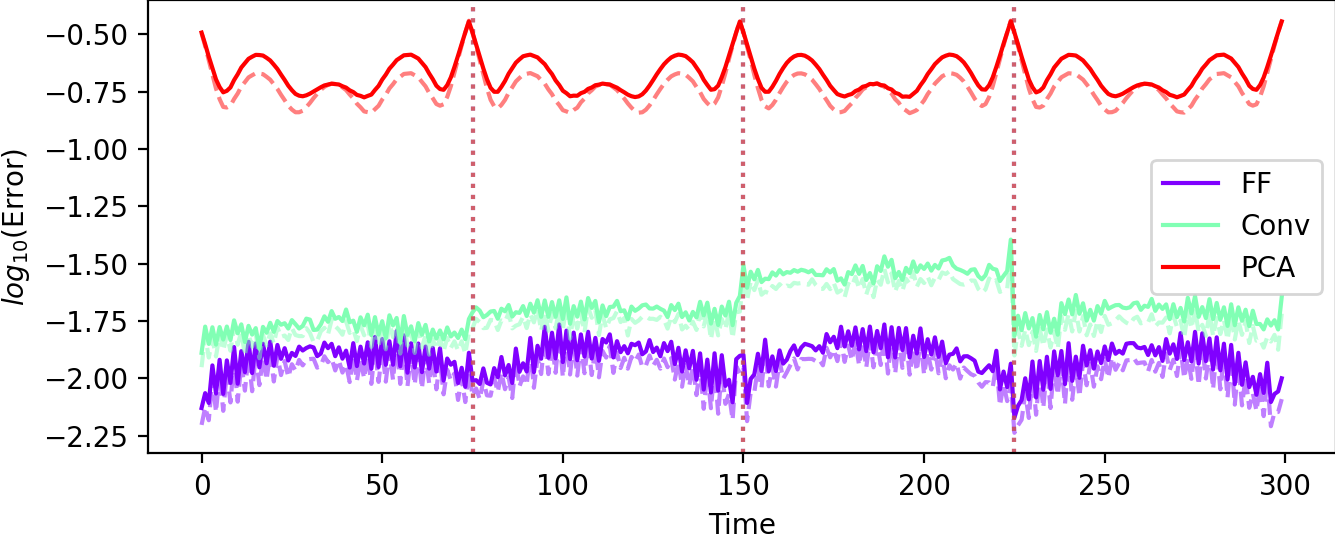}
    \caption{Autoencoder error (dashed) and Test error for  $4$-window WeldNet models for $\cM_\text{tscale}$.}
    \label{fig:tscalepo}
\end{figure}

\begin{figure}
    \centering
    \includegraphics[width=0.6\linewidth]{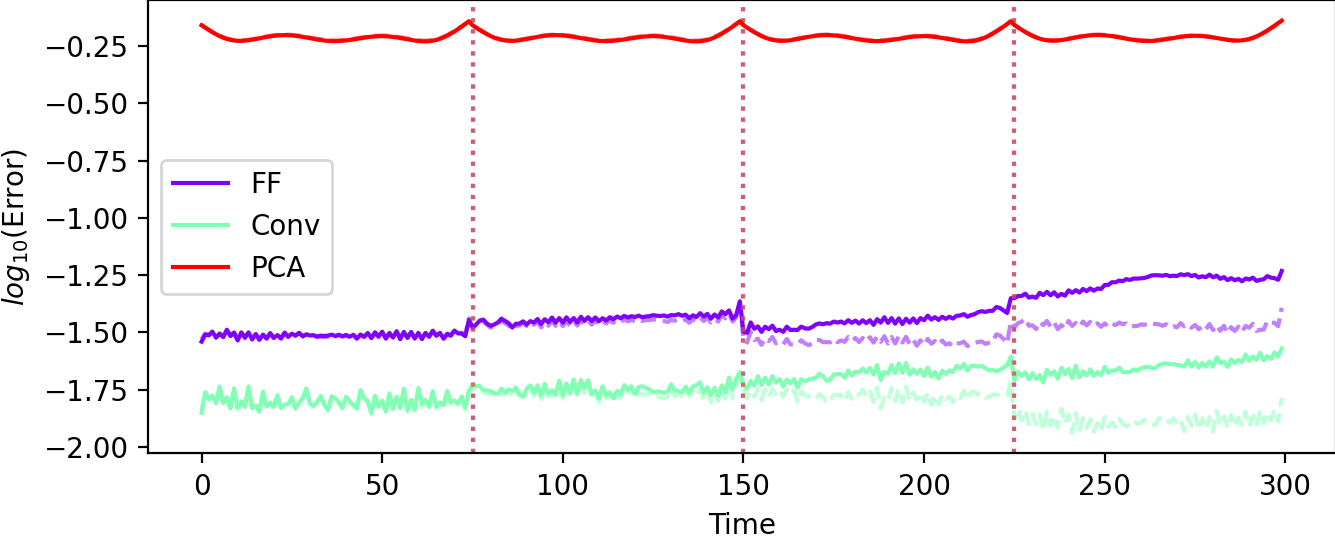}
    \caption{Autoencoder error (dashed) and Test error for  $4$-window WeldNet models for $\cM_\text{tshift}$.}
    \label{fig:tshiftpo}
\end{figure}

\subsubsection{KdV Equation} 
\label{sec:kdv}
We also consider the Korteweg–De Vries (KdV) equation for $T = 0.01$, given by
\begin{align}
\label{eq:kdv}
u_t = -u_{xxx} - uu_x, \quad u(x, 0) = g(x), \: x \in (0, 6).
\end{align}
We describe our sets of initial conditions. Given a constant $c$, we define the soliton of size $c$ as 
$\phi_c(x) = {c}/{2} \text{sech}\left({\sqrt{c}} x/2\right)^2. $
We consider initial conditions: 
\begin{align} 
\hat{g}_\text{kscale} = \{&\phi_{a^2} (x-1) +\phi_{36}(x - 2) : a \in [6, 18]\},
\label{eq:kscaleinitial}
\\
\hat{g}_\text{kshift} = \{&\phi_{36} (x-1) +\phi_{36}(x - 2 - h) : h \in [0, 0.4]\},
\label{eq:kshiftinitial}
\end{align}
and collect the trajectory manifold $\cM_\text{kscale}$ and $\cM_\text{kshift}$ from the corresponding condition sets. We compare the performance of four-window WeldNet models (with latent space dimension four) with other models in terms of final time operator error in Figure \ref{fig:kscaleerrorparams} and \ref{fig:kshifterrorparams} respectively. WLaSDI has a very high relative operator error, so most of it is cut off from the figure to allow for clarity of scaling. For both examples, FF-WeldNet is the best model, outperforming PCA-WeldNet, HDP, and LDNet for almost all parameter values.

\begin{figure}[htbp]
  \begin{minipage}[t]{0.48\textwidth}
    \centering
    \includegraphics[width=0.95\textwidth]{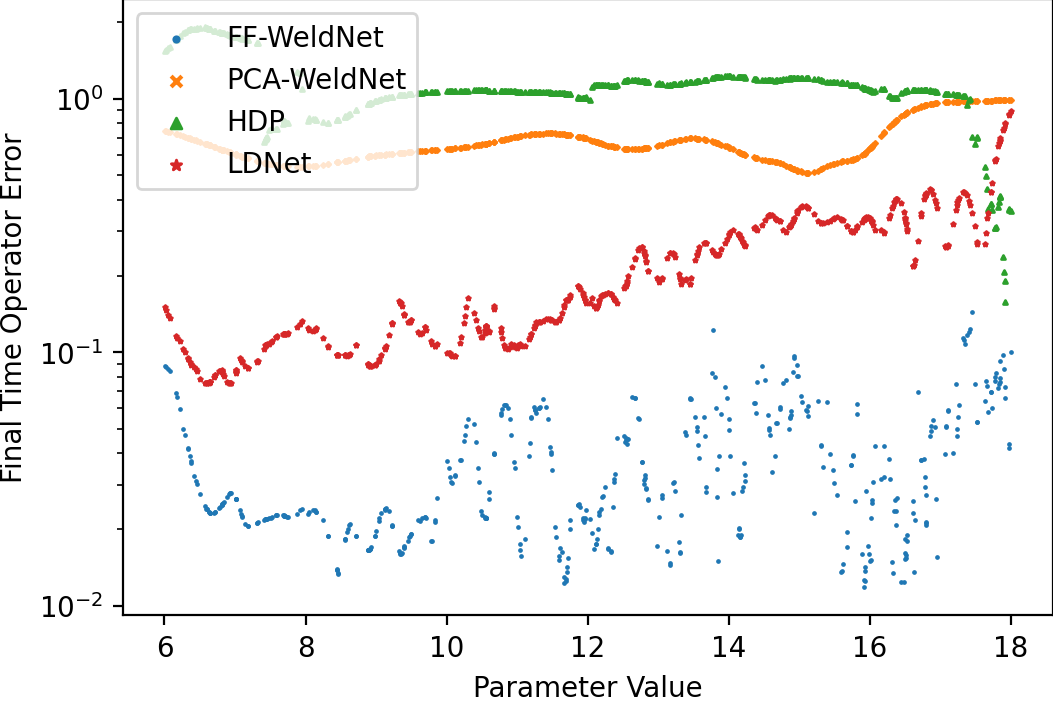}
    \caption{Error vs Parameter value for $\cM_\text{kscale}$ about the trajectory manifold of the KdV equation \eqref{eq:kdv} with initial conditions in \eqref{eq:kscaleinitial}.}
    \label{fig:kscaleerrorparams}
  \end{minipage}
  \hfill
  \begin{minipage}[t]{0.48\textwidth}
    \centering
    \includegraphics[width=0.95\textwidth]{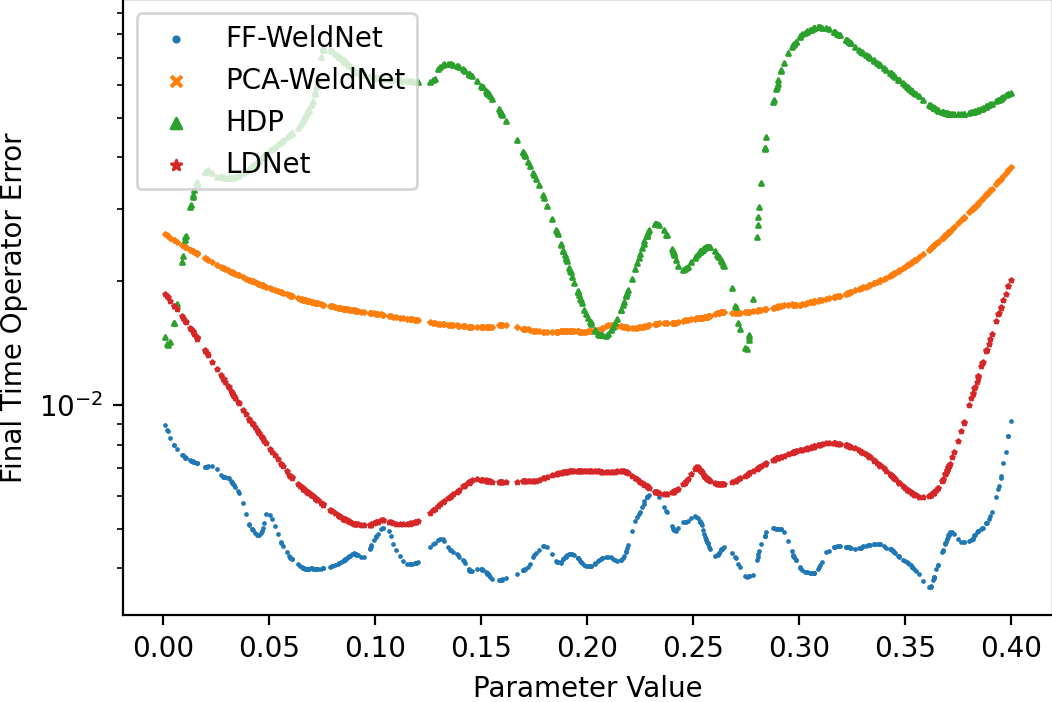}
    \caption{Error vs Parameter value for $\cM_\text{kshift}$ about the trajectory manifold of the KdV equation \eqref{eq:kdv} with initial conditions in \eqref{eq:kshiftinitial}.}
\label{fig:kshifterrorparams}
  \end{minipage}
\end{figure}

For both of the trajectory manifolds $\cM_\text{kscale}$ and $\cM_\text{kshift}$, we can compare the reconstruction error and the operator error at each time. Figures \ref{fig:kscalepo} and \ref{fig:kshiftpo}, are line plots of error versus time for FF-WeldNet, Conv-WeldNet, and PCA-WeldNet for $\cM_\text{kscale}$ and $\cM_\text{kshift}$, respectively. FF-WeldNet and Conv-WeldNet perform similarly on this problem, and they both greatly outperform PCA-WeldNet, showing the advantage of nonlinear dimension reduction.

\begin{figure}
    \centering
    \includegraphics[width=0.6\linewidth]{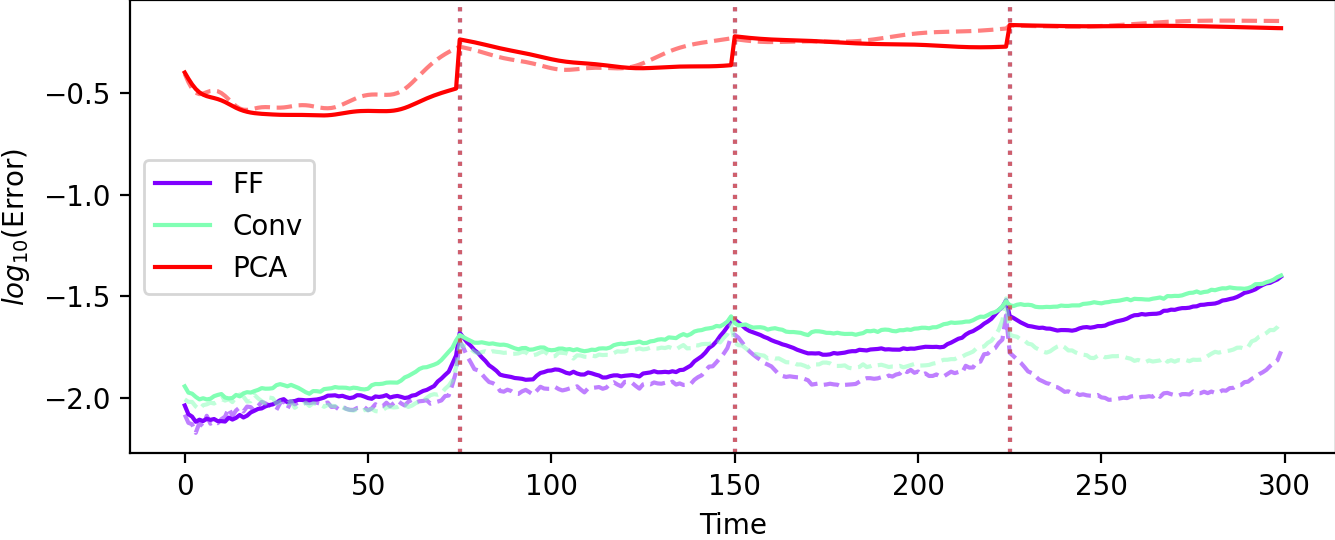}
    \caption{Autoencoder error (dashed) and Test error for WeldNet models for $\cM_\text{kscale}$.}
    \label{fig:kscalepo}
\end{figure}

\begin{figure}
    \centering
    \includegraphics[width=0.6\linewidth]{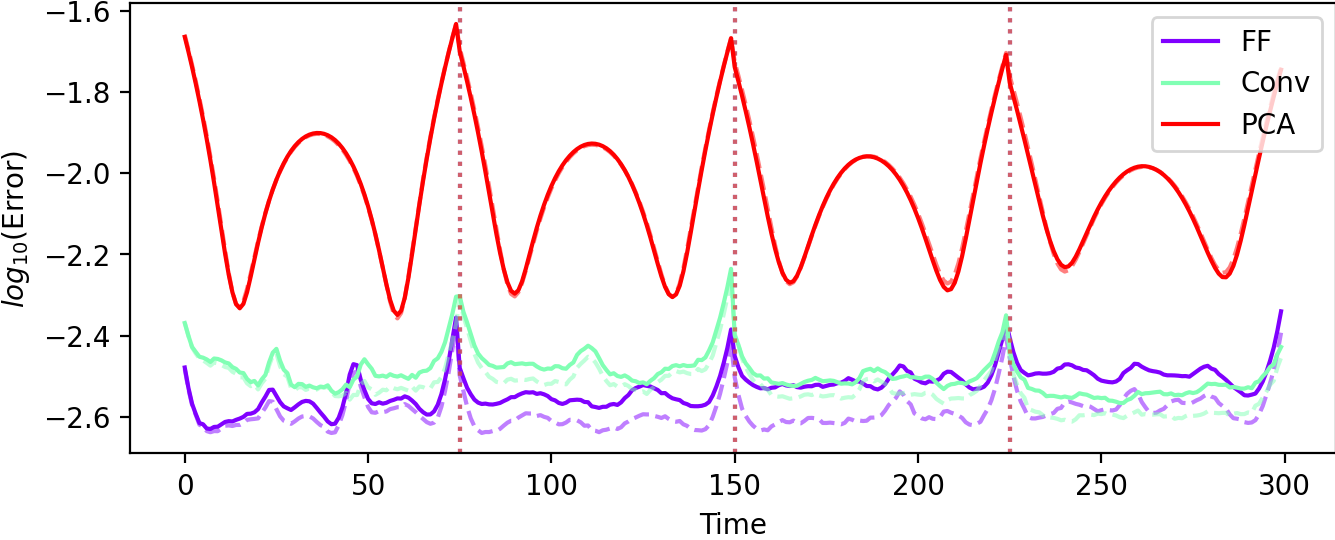}
    \caption{Autoencoder error (dashed) and Test error for WeldNet models for $\cM_\text{kshift}$.}
    \label{fig:kshiftpo}
\end{figure}

\subsubsection{Shallow-Water Equations}
We now consider an example with two spatial dimensions. Consider the shallow-water equations over the spatial domain $[-2.5, 2.5]^2$ and time domain $[0, 1]$:
\begin{align}
\label{eq:shallow}
\partial_t h + \partial_x (hu) + \partial_y (hv) &= 0,\\
\partial_t (hu) + \partial_x \left(u^2 h + \tfrac{1}{2} g_r h^2 \right) + \partial_y (uvh) &= -g_r h \, \partial_x b,\nonumber \\
\partial_t (hv) + \partial_y \left(v^2 h + \tfrac{1}{2} g_r h^2 \right) + \partial_x (uvh) &= -g_r h \, \partial_y b,\nonumber
\end{align}
where $u$ is horizontal velocity, $v$ is vertical velocity, $h$ is the water depth, $g_r$ is the gravitational acceleration, and $b$ is a scalar field known as bathymetry. We impose zero dirichlet boundary conditions for $u$ and $v$. 

We consider the following set of initial conditions representing centered bumps of varying radii (provided by the PDEBench dataset \cite{DARUS-2986_2022, takamoto2022pdebench}):
\begin{equation}
\label{eq:shallowinitial}
\hat{g}_\text{shallow} = \left\{h(x,y) =1 + \mathbbm{1}[\sqrt{x^2 + y^2} < r] : r \in [0.3, 0.7]\right\}
\end{equation}
and collect the trajectory manifold $\cM_\text{shallow}$. Note that the initial condition is discontinuous, and the location of the discontinuity depends on the sample. For this example only, we use a $128\times 128$ grid for each sample, and we use 101 time steps. We collect 81 examples.

Note that the inputs are sampled on a size 16384 grid. This makes the size of feedforward networks prohibitively high for training. While more sophisticated architectures such as two-dimensional convolutions exist, we will instead reduce the dimension from 16384 to 128 using PCA, and then train our models to predict the evolution of the PCA modes. We compute the top 128 principal components (over all times) using the training data only, and this results in a relative projection error of 0.11\%. 

Now we have 81 samples with 101 time steps and 128 features each. We train each of our models on this dataset. Even though the compression to 128 features is very accurate, it can be challenging to learn the dynamics. We compare the performance of the four-window WeldNet model (with latent space dimension of $4$) with other models in terms of final time operator error in Figure \ref{fig:shallowerrorparams}. WeldNet greatly outperforms all other models by almost an entire order of magnitude. 

\begin{figure}
    \centering
    \includegraphics[width=0.5\linewidth]{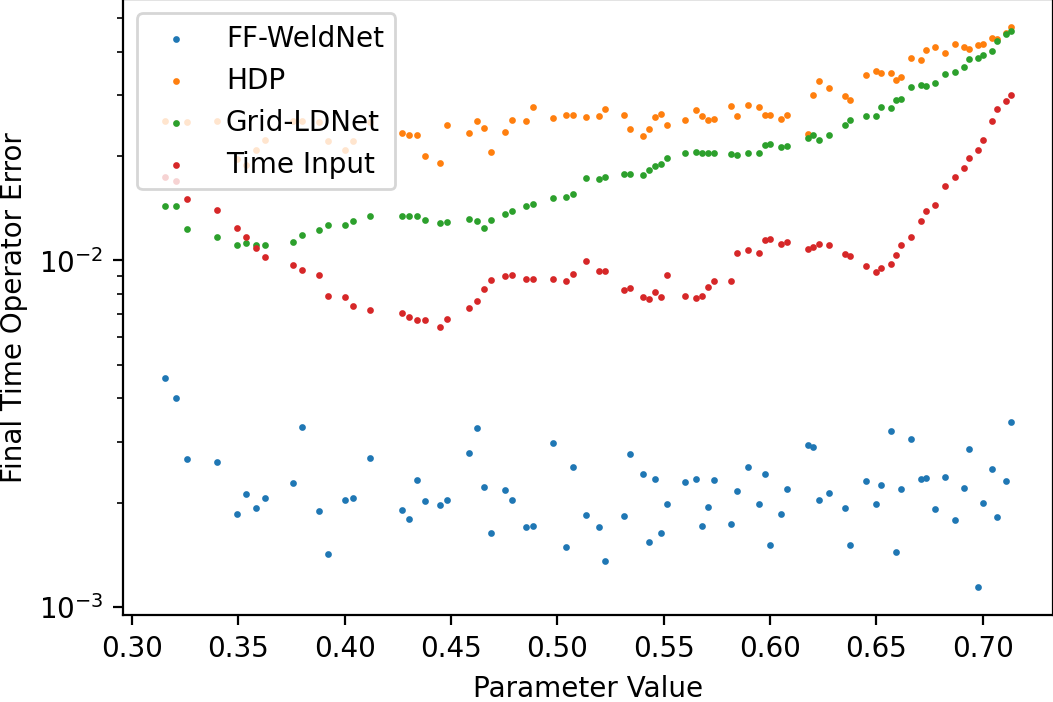}
    \caption{Error vs Parameter value for $\cM_{shallow}$ about the trajectory manifold of the shallow water equation \eqref{eq:shallow} with initial conditions in \eqref{eq:shallowinitial}.}
    \label{fig:shallowerrorparams}
\end{figure}

The lower error for WeldNet stems from the fact that the trajectory manifold is harder to represent at earlier times than later times, and WeldNet is able to separate the latent dynamics learning in each of its segments. The transcoder is able to lower the error as a code is evolved between windows. On the other hand, there is no error reduction mechanism in the other dynamics learning models, so the error only increases with time. We can see this clearly in Figure \ref{fig:shallowoperror}, a plot of the operator error versus time for various models. This plot shows that FF-WeldNet has improved performance for later times, while the error stagnates or increases with time on the other models. This shows the advantage of having multiple windows; a decoupled representation of this trajectory manifold outperforms a fully coupled one. 

\begin{figure}
    \centering
    \includegraphics[width=0.6\linewidth]{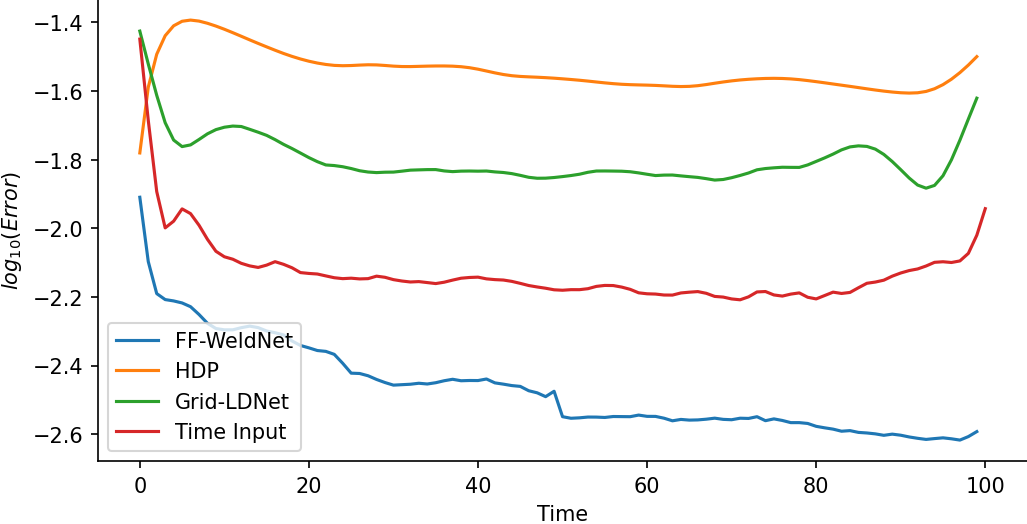}
    \caption{Error vs Parameter value for $\cM_{shallow}$ about the trajectory manifold of the shallow water equation \eqref{eq:shallow} with initial conditions in \eqref{eq:shallowinitial}.}
    \label{fig:shallowoperror}
\end{figure}

\subsection{Intrinsic Dimension Estimation}
We estimate the intrinsic dimensionality of our datasets using the Scikit-dimension~\cite{bac2021scikit} Python package and display it in Table \ref{table:datasetid}. Specifically, we use the MLE algorithm \cite{levina2004maximum} with the ``Haro'' integral approximation \cite{haro2008translated} and the TwoNN algorithm \cite{facco2017estimating} with the parameter discard$\_$fraction$=0$. We use a random size 50\,000 subset for computing the intrinsic dimension of all times together. By construction, the initial conditions for all considered datasets are generated by sampling one random scalar, so we expect the intrinsic dimensionality of each time slice to be approximately 1 and the intrinsic dimensionality of all times together to be approximately 2. This justifies the use of dimension reduction methods for these datasets and indicates that a small latent space dimension can effectively capture the data.

For the shallow water dataset, the relatively high intrinsic dimensionality for the initial time and all times together likely comes from the discontinuity of the initial condition in \eqref{eq:shallow}. The estimated intrinsic dimensionality for a continuous approximation to the initial conditions can be computed, and we empirically found that it was about 1 regardless of the specific construction used (as long as the functions are continuous).

\label{sec:intrinsicdim}
\begin{table}[]
\centering
\begin{tabular}{|c|l|l|l|l|l|l|l|}
\hline
Dataset & t=0   & t=60  & t=120 & t=180 & t=240 & t=300 & All t \\ \hline
bscale  & 0.973 & 0.974 & 0.974 & 0.976 & 0.978 & 0.977 & 1.89  \\ \hline
bshift  & 0.998 & 1.019 & 1.128 & 1.091 & 1.065 & 1.048 & 1.685 \\ \hline
tscale  & 0.976 & 0.976 & 0.976 & 0.976 & 0.976 & 0.976 & 1.118 \\ \hline
tshift  & 1.013 & 1.013 & 1.012 & 1.013 & 1.013 & 1.012 & 1.973 \\ \hline
kscale  & 0.945 & 0.945 & 0.945 & 0.945 & 0.945 & 0.945 & 1.803 \\ \hline
kshift  & 0.995 & 0.995 & 0.995 & 0.995 & 0.995 & 0.995 & 1.901 \\ \hline
shallow  & 2.184 & 1.282 & 1.257 & 1.264 & 1.244 & 1.231 & 2.593 \\ \hline
\end{tabular}
\caption{Estimated intrinsic dimensionality of all datasets for various times and as a whole. MLE is used for all datasets except for kscale, for which TwoNN was used (see Appendix \ref{sec:intrinsicdim} for details).}
\label{table:datasetid}
\end{table}

\subsection{Propagator Ablation Study}
\label{sec:ablate}
WeldNet first trains the autoencoder and propagator together using the propagator displacement loss in \eqref{eq:optprop} (where the name displacement comes from being one step of applying propagator) and then finetunes the propagator using the accumulation loss in \eqref{eq:propfinetune}. Alternatively, we could train the autoencoder and propagator separately, or we could use only a single loss function for the propagator. We compare our Weldnet model training algorithm against three variants.

\begin{enumerate}[label=(\roman*)]
    \item ``WeldNet'' (original) - train autoencoder and propagator together with displacement loss, then finetune the propagator with accumulation loss.
    \item ``Together / Displacement'' - same as (i) but finetune with displacement loss.
    \item ``Separate / Accumulate'' - train autoencoder first, then train propagator with accumulation loss.
    \item ``Separate / Displacement'' - same as (iii) but with the displacement loss.
\end{enumerate}

We train autoencoder, propagator, and transcoder models using each of the above four training algorithms. For algorithms (i) and (ii), we jointly train the autoencoder and propagator for 300 epochs and then finetune the propagator for 150 epochs. For algorithms (iii) and (iv), we train the autoencoder for 300 epochs and then the propagator for 450 epochs. We train any transcoders for 300 epochs. All other settings are identical to the main paper. We use the $kshift$ dataset (for the KdV equation) described in Section \ref{sec:kdv}.

Figures \ref{fig:ablateloss1} and \ref{fig:ablateloss2} compare the relative test error of one-window and two-window models (respectively) trained with each algorithm. The final time test error for each one-window model is (in order): 0.87\%, 1.28\%, 1.31\%, 20.4\%, and the average test error for each two-window model is (in order): 0.42\%, 0.45\%, 0.50\%, 1.3\%. While increasing the number of windows reduces the gap between the models, the best model is (i) which is the training algorithm used in the main paper.

\begin{figure}
    \centering
    \includegraphics[width=0.7\textwidth]{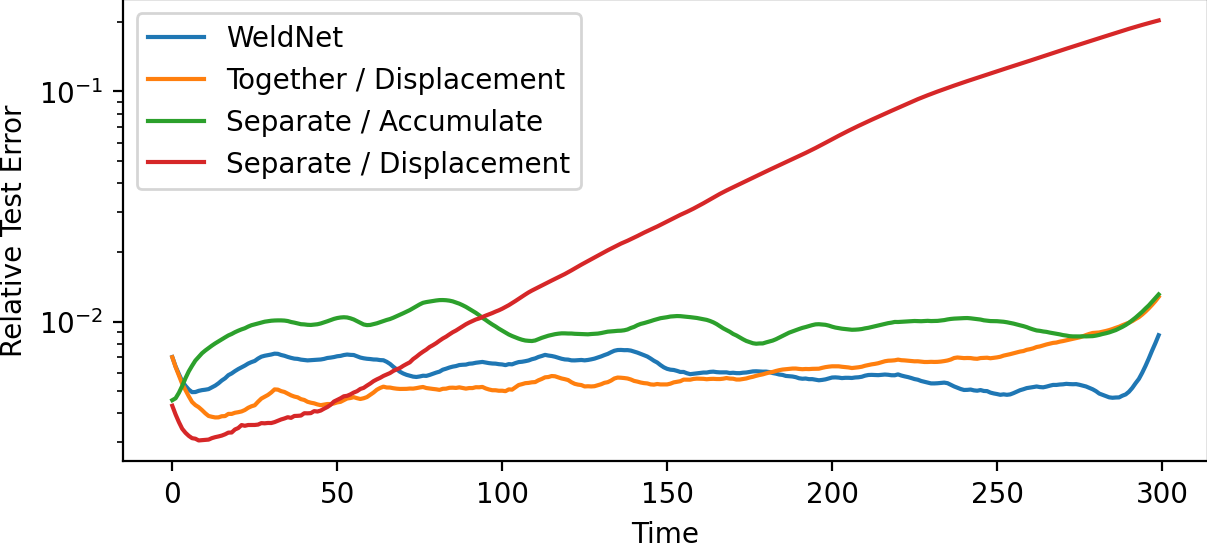}
    \caption{Test error vs time for variant training algorithms on $\cM_{kshift}$ (1 window models).}
    \label{fig:ablateloss1}
\end{figure}
    
\begin{figure}
    \centering
    \includegraphics[width=0.7\textwidth]{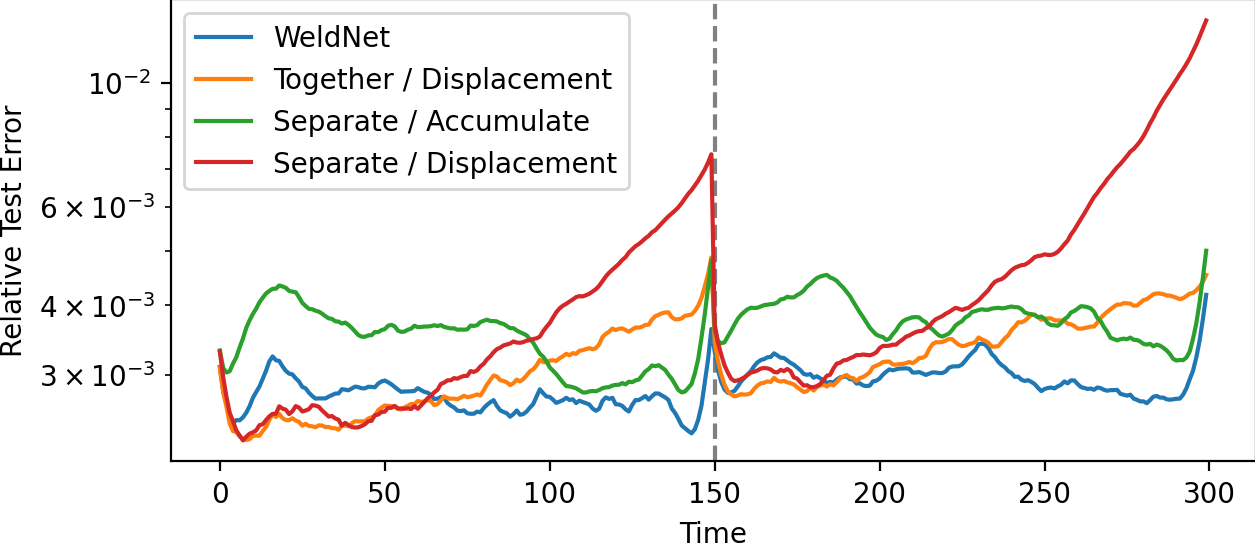}
    \caption{Test error vs time for variant training algorithms on $\cM_{kshift}$ (2 window models).}
    \label{fig:ablateloss2}
\end{figure}

\section{Conclusion}
\label{sec:conclusion}


This paper introduces WeldNet, a data-driven framework for nonlinear model reduction that constructs low-dimensional surrogate models for complex evolutionary systems. The architecture consists of three key components: autoencoders, propagators, and transcoders. These components operate together to model dynamics in reduced latent spaces. By transferring the evolution of high-dimensional systems into these latent spaces and decomposing the time domain into overlapping windows, WeldNet effectively captures intricate nonlinear structures while simplifying long-time propagation into a sequence of short, tractable segments.

In addition to the algorithmic design, a representation theory is developed to establish the approximation capability of WeldNet under the manifold hypothesis, thereby providing a mathematical foundation for nonlinear model reduction via deep learning. Extensive numerical experiments on a variety of differential equations demonstrate the robustness and accuracy of the proposed approach. Across all tested scenarios, WeldNet consistently achieves smaller prediction errors than both classical projection-based techniques and recently developed nonlinear reduced-order models. {In addition, WeldNet with parallel computing is more computationally efficient to train than other methods.}

Overall, this work shows that windowed latent-space learning offers a powerful and principled strategy for modeling complex dynamical systems, and it opens promising ways for future research in nonlinear model reduction in scientific machine learning.

\section*{Acknowledgment}

The authors would like to thank Professor Peng Chen and his group at Georgia Tech for their insightful comments and suggestions, which helped improve this paper from its earlier version.

\bibliographystyle{siamplain}
\bibliography{references}

\appendix

\section{Helper Approximation Results}
\label{app:helperappx}
In this section, we introduce some approximation results for ReLU networks. They are later used to prove our main results.

First, we establish that ReLU networks (with a given size) can implement the identity function.

\begin{proposition}
\label{prop:idappx}
Let $d, L, W \in \bN$, with  $W \geq 2d$. Then there is a $f_\NN \in \cF_\NN(d, d, L, W)$ such that $f_\NN(\bfx) = \bfx$ for all $\bfx \in \bR^d$.
\end{proposition}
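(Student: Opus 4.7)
The plan is to use the standard \textbf{ReLU identity trick}: for any $x \in \bR$, we have $x = \sigma(x) - \sigma(-x)$. The idea is to build a network that, at every hidden layer, carries the pair $(\sigma(x_i),\sigma(-x_i))$ for $i=1,\dots,d$ as its post-activation state (using $2d \leq W$ neurons), and then at the very last (linear) layer collapses these back to $x$ via the subtraction $\sigma(x_i)-\sigma(-x_i)=x_i$.

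Concretely, I would first define the ``stacking'' linear map $A \in \bR^{2d \times d}$ by $A\bfx = (x_1,-x_1,x_2,-x_2,\dots,x_d,-x_d)^\top$, and the ``collapsing'' linear map $B \in \bR^{d \times 2d}$ by $B\bfh = (h_1 - h_2,\ h_3 - h_4,\ \dots,\ h_{2d-1} - h_{2d})^\top$, so that $B A = I_d$ and $B\sigma(A\bfx) = \bfx$ for every $\bfx \in \bR^d$. For the first hidden layer I set $\mathfrak{W}_1 = A$ and $\mathfrak{b}_1 = 0$, so that after applying $\sigma$ the hidden state equals $\sigma(A\bfx)=(\sigma(x_i),\sigma(-x_i))_{i=1}^d \in \bR^{2d}$. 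For each intermediate layer $\ell \in \{2,\dots,L\}$ I set $\mathfrak{W}_\ell = AB$ (so that it first decodes $\bfx$ and then re-encodes it into the $2d$-dimensional stacked form) and $\mathfrak{b}_\ell = 0$; since $B\sigma(A\bfx)=\bfx$ and each coordinate of $A\bfx$ has sign-paired entries, applying $\sigma$ again reproduces $\sigma(A\bfx)$. Finally I set $\mathfrak{W}_{L+1} = B$ and $\mathfrak{b}_{L+1}=0$, so the output is $B\sigma(A\bfx) = \bfx$. Each hidden layer has width exactly $2d \le W$, so $f_\NN \in \cF_\NN(d,d,L,W)$.

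Verification is an inductive check: let $\bfh^{(0)} = \bfx$ and define $\bfh^{(\ell)} = \sigma(\mathfrak{W}_\ell \bfh^{(\ell-1)})$. By induction on $\ell$ one sees $\bfh^{(\ell)} = \sigma(A\bfx)$ for all $\ell \in [L]$, using the identity $B\sigma(A\bfx)=\bfx$ at the inductive step; then the output layer yields $\mathfrak{W}_{L+1}\bfh^{(L)} = B\sigma(A\bfx) = \bfx$.

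There is no real obstacle here: the only subtlety is bookkeeping around the network-size definition in the paper (the network has $L$ hidden ReLU layers plus one affine output layer) and making sure the construction is valid even for $L=1$, in which case the intermediate-layer step is vacuous and the construction reduces to $\mathfrak{W}_1=A$, $\mathfrak{W}_2=B$ with zero biases.
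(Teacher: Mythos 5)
Your proposal is correct and follows essentially the same route as the paper's proof: both rely on the identity $x=\sigma(x)-\sigma(-x)$, stack the positive and negative parts into $2d$ nonnegative hidden units, carry them unchanged through the intermediate layers, and collapse them in the final affine layer (the paper uses $I_{2d}$ for the intermediate weights where you use $AB$, which is an immaterial difference since both preserve the state $\sigma(A\bfx)$). The only cosmetic divergence is that the paper also explicitly pads the weight matrices with zeros to reach width exactly $W$, whereas you note that width $2d\le W$ already places the network in the class; both are acceptable under the paper's ``width at most $W$'' definition.
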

\begin{proof}[Proof of Proposition \ref{prop:idappx}]
We denote by $I_d$ the identity function on $\bR^d$. First, suppose $L=1$ and $W=2d$. Consider the weights and biases $\mathfrak{W}_1 = \begin{bmatrix} I_d \\ -I_d  \end{bmatrix}$, $\mathfrak{b}_1 = \vec{0}_{2d}$, $\mathfrak{W}_2 = \begin{bmatrix} I_d & -I_d \end{bmatrix}$, and $\mathfrak{b}_2 = \vec{0}_d$. Then the one-hidden-layer ReLU network with the above weights and biases is in $\cF_\NN(d, d, 1, 2d)$ and exactly implements the identity (since $\sigma(x)-\sigma(-x) = x$ for $\sigma(x) = \max\{x, 0\}$).

Next, if $L > 1$ and $W=2d$, define $\mathfrak{W}_1 = \begin{bmatrix} I_d \\ -I_d  \end{bmatrix}$ and $\mathfrak{W}_{L+1} = \begin{bmatrix} I_d & -I_d \end{bmatrix}$. Also for any $1 < \ell < L+1$, let $\mathfrak{W}_\ell = I_{2d}$. Let all bias vectors be zero. Then the $L$-hidden-layer ReLU network with the above weights and biases is in $\cF_\NN(d, d, L, 2d)$ and exactly implements the identity (as the output of the $L$th hidden layer is the same as the output of the first hidden layer).

Finally, suppose $W > 2d$. It is easy to see that augmenting weight matrices $\mathfrak{W}_1, \cdots, \mathfrak{W}_L$ defined above with zeros in all new rows and columns (with the original weight matrix being in the top left corner) does not affect output of the neural network. Thus we have constructed an $L$-hidden-layer ReLU network with width exactly $W$ that implements the identity in $\bR^d$. 
\end{proof}

\begin{proposition}
\label{prop:appxtools}
Let $f_{\NN} \in \cF_{\NN}(d_{in}, d_{out}, L, W)$ be a feedforward ReLU network. Suppose $W > d_{out}$.
\begin{enumerate}
\item For any $L' > L$, there exists a FNN $g_{\NN} \in \cF_{\NN}(d_{in}, d_{out}, L', 2W)$ with depth exactly $L'$ such that $g_\NN(\bfx) = f_\NN(\bfx)$ for all $\bfx \in \bR^{d_1}$.
\item Let $A = \prod_{i=1}^{d_{out}} [a_i, b_i]$ be a cube. Then there is a $h_{\NN} \in \cF_{\NN}(d_{in}, d_{out}, L+2, \max\{d_{out}, W\})$ such that for all $\bfx \in \bR^{d_{in}}$, $h_{\NN}(\bfx)$ is vector valued and $= (h_{\NN}^{(1)}(\bfx), \dots, h_{\NN}^{(d_{out})}(\bfx)) \in \bR^{d_{out}}$ with $h_{\NN}^{(i)}(\bfx) = \min
\{\max\{f^{(i)}_{\NN}(\bfx), a_i\}, b_i\}$.  
\item Let $f^1_\NN, \dots, f^m_\NN$ be FNNs such that $f^j_\NN \in \cF_\NN(d_{in}, d_{out}, L_j, W_j)$ for all $j \in [m]$. Then there is a $g_\NN \in \cF_\NN(d_{in}, d_{out}, \sum_{j=1}^m L_j, 2 d_{in} + 2 d_{out} + \max_j W_j)$ such that for all $\bfx \in \bR^{d_{in}}$, we have $g_\NN(\bfx) = \sum_{j=1}^m f_\NN^j(\bfx)$. 
\end{enumerate}
\end{proposition}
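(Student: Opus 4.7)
The plan is to prove the three parts independently, each by composing $f_\NN$ with a small auxiliary network built from ReLU identities.

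For part (1), the approach is to append an identity network on $\bR^{d_{out}}$ of depth $L' - L$ after $f_\NN$. By Proposition \ref{prop:idappx}, such an identity network exists and can be realized with width $2 d_{out}$. Direct composition yields a network of depth exactly $L'$ with width $\max\{W, 2 d_{out}\}$. Since $W > d_{out}$ by assumption, $\max\{W, 2 d_{out}\} \le 2 W$, giving the stated bound.

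For part (2), the key observation is the identity
\[
\min\{\max\{y, a\}, b\} \;=\; b - \sigma\bigl(b - a - \sigma(y - a)\bigr),
\]
valid for any $a \le b$ and $y \in \bR$, which I would verify by splitting into the cases $y \le a$, $a < y \le b$, and $y > b$. Applied componentwise to $y = f_\NN(\bfx)$, the clipping is realized by two additional hidden layers of width $d_{out}$ each: the first computes $\sigma(y_i - a_i)$ for all $i$, the second computes $\sigma(b_i - a_i - \sigma(y_i - a_i))$, and the final affine output layer produces $b_i - \sigma(\cdots)$. The pre-activation of the new first hidden layer absorbs $\mathfrak{W}_{L+1}$ and $\mathfrak{b}_{L+1}$ of $f_\NN$. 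The resulting network has depth $L + 2$ and width $\max\{W, d_{out}\}$, as claimed.

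For part (3), I would use the standard ``carry-state'' trick. At every hidden layer, the network simultaneously maintains three groups of activations: (i) a signed-ReLU encoding $(\sigma(\bfx), \sigma(-\bfx))$ of the original input, using $2 d_{in}$ neurons; (ii) a signed-ReLU encoding $(\sigma(\bfs), \sigma(-\bfs))$ of the running partial sum $\bfs = \sum_{k < j} f^k_\NN(\bfx)$, using $2 d_{out}$ neurons; and (iii) the internal activations of the stage $j$ currently being processed, using at most $\max_j W_j$ neurons. Within stage $j$, the width-$W_j$ block runs the hidden layers of $f^j_\NN$, while the first two blocks propagate their encodings via the identity trick: given non-negative activations $p, q$ in one layer with $p - q$ equal to the carried quantity, the next layer applies an affine map to obtain pre-activations $(p - q, q - p)$ and then ReLU to recover $(\sigma(p-q), \sigma(q-p))$. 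Between consecutive stages, the operation ``add $f^j_\NN(\bfx)$ to $\bfs$ and begin computing $f^{j+1}_\NN(\bfx)$'' is entirely affine, so it folds into the weight matrix feeding the first hidden layer of stage $j+1$ without adding depth. After stage $m$, a final affine output layer reads off the total sum. The total depth equals $\sum_j L_j$ and the total width is $2 d_{in} + 2 d_{out} + \max_j W_j$.

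The main obstacle I foresee is the bookkeeping in part (3): to keep the depth exactly $\sum_j L_j$, it is essential to verify that the accumulator update at the end of stage $j$, the input-copy propagation, and the initial affine pre-activation of stage $j+1$ can all be absorbed into a single weight matrix without requiring an extra hidden layer or additional width. Once this composition of affine maps is written out explicitly, the dimension counts for each block match the claimed bounds directly.
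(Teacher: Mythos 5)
Your proposal is correct, and parts (2) and (3) follow essentially the same route as the paper: the clipping identity $\min\{\max\{y,a\},b\}=b-\sigma(b-a-\sigma(y-a))$ (the paper writes the algebraically identical form $-\sigma(-\sigma(y-a)+b-a)+b$) realized by two extra width-$d_{out}$ layers, and the carry-state construction with $2d_{in}$ neurons for a signed copy of the input, $2d_{out}$ for the running sum, and $\max_j W_j$ for the active stage, with all inter-stage updates absorbed into a single affine map. Part (1) is where you diverge slightly: the paper keeps a signed duplicate of the \emph{last hidden layer} (width $2r\le 2W$) through the padding layers and then rewires the final weight matrix, whereas you first apply the output affine map and pad with a signed identity on the $d_{out}$-dimensional \emph{output}. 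Your variant is cleaner and gives the tighter width $\max\{W,2d_{out}\}$, which is still within the stated $2W$ since $W>d_{out}$; the paper's version avoids ever collapsing to the output dimension, which makes no difference here. One cosmetic point: since $\cF_{\NN}(d_{in},d_{out},L,W)$ only bounds the depth by $L$, you should pad by $L'-L_f$ layers where $L_f\le L$ is the actual depth of $f_{\NN}$ (as the paper does), rather than by $L'-L$; this does not affect the argument.
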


Proposition \ref{prop:appxtools} gives rise to basic properties of feedforward ReLU networks. The first property is that, any neural network can be realized by a deeper one; The second property says that, feedforward ReLU networks allows one to truncate or restrict its output to be in the set of $A = \prod_{i=1}^{d_2} [a_i, b_i]$; The third property allows one to concatenate several feedforward ReLU networks and then output its summation.

\begin{proof}[Proof of Proposition \ref{prop:appxtools}]
{\bf Part 1}:
 Denote the number of neurons in the last hidden layer of $f_\NN$ by $r$ (note $r \leq W$). Let $L_f$ be the number of layers of $f_\NN$, and by the hypothesis we have $L_f < L'$.

First, consider the function $\phi_1 : \bR^r \rightarrow \bR^{2r}$ defined by $\phi_1(x_1, \dots, x_r) = [\sigma(x_1),$\\$ \sigma(-x_1), \dots, \sigma(x_r), \sigma(-x_r)]$. Clearly, $\phi_1$ can be implemented by a single ReLU layer. Next, consider the function $\phi_2$ on $\bR^{2r}$ defined by $\phi_2(x_1, \dots, x_{2r}) = [\sigma(x_1), \dots, \sigma(x_{2r})]$. Clearly, $\phi_2$ can also be implemented by a single ReLU layer. Note that $\bigcirc_{i=1}^n \phi_2 = \phi_2$ for any $n \in \bN$. We will construct $g_\NN$ from $f_\NN$ by appending a hidden layer implementing $\phi_1$, appending $L' - L_f - 1$ hidden layers implementing $\phi_2$, and then adjusting the weights in the final layer of $f_\NN$.

Denote the final weight matrix of $f_\NN$ by $\mathfrak{W} \in \bR^{r \times d_{out}}$. We will adjust this matrix to handle inputs of size $2r$. Define the matrix $\hat{\mathfrak{W}} \in \bR^{2r \times d_{out}}$ by $\hat{\mathfrak{W}}_{i,j} = (-1)^{j+1} \mathfrak{W}_{i,\lceil j/2\rceil}$, for all $i \in [d_{out}], j \in [2r]$. This means $\hat{\mathfrak{W}}(x_1, x_2, \dots, x_{2r-1}, x_{2r}) = \mathfrak{W}(x_1 - x_2, \dots, x_{2r-1} - x_{2r})$.

Note that for all $i \in [r]$, we have that $\sigma(x_i) - \sigma(-x_i)= x_i$. This means (where we use denote multiplication vector multiplication by parenthesis)

\begin{align*}
    \hat{\mathfrak{W}}& \left((\bigcirc_{i=1}^{L' - L_f - 1} \phi_2) \circ \phi_1(x_1, \dots, x_r)\right) \\ 
    &= \hat{\mathfrak{W}}(\sigma(x_1), \sigma(-x_1), \dots, \sigma(x_r), \sigma(-x_r)) = \mathfrak{W}(x_1 \dots, x_r).
\end{align*}

Finally, we can describe $g_\NN$. Before the final weight matrix of $f_\NN$, we will append one layer implementing $\phi_1$ and $L' - L_f - 1$ layers implementing $\phi_2$ to the network. Then, we change the final weight matrix to $\hat{\mathfrak{W}}$. Thus, $g_\NN$ and $f_\NN$ implement the same function, and $g_\NN \in \cF(d_{in}, d_{out}, L', 2W)$.

{\bf Part 2}: Note that the function $b - \sigma(b - a - \sigma(x - a)) = \min(\max(x, a), b).$ 

Let $\mathfrak{b} \in \bR^{d_{out}}$ be the bias vector for the final layer. Define a new bias vector $\tilde{\mathfrak{b}} \in \bR^{d_{out}}$ by $\tilde{\mathfrak{b}}_i = \mathfrak{b}_i - a_i$, for all $i \in [d_{out}]$. 


If we denote $\mathbf{a} = [a_1, \dots, a_{d_{out}}]$ and $\mathbf{b} = [b_1, \dots, b_{d_{out}}]$, then note that for all $x \in \bR^{d_{in}}$,

\begin{equation}
\label{eq:maxmin}
-\sigma(-(\sigma(f_\NN(x)-\mathbf{a}) + \mathbf{b} - \mathbf{a}) + \mathbf{b} = \min(\max(f_\NN(x), \mathbf{a}), \mathbf{b}),
\end{equation}
where $\sigma$ and $\max,\min$ are applied componentwise. We can implement the LHS of Equation \eqref{eq:maxmin} starting from $f_\NN$ using the following steps (where $I_{d_{out}}$ is the identity matrix for $\bR^{d_{out}}$):

\begin{enumerate}
    \item Modify the last bias vector of $f_\NN$ to $\tilde{\mathfrak{b}}$. This implements $f_\NN(x)-\mathbf{a}$.
    \item Add a new layer with weight matrix equal to $-I_{d_{out}}$ and bias vector equal to $\mathbf{b} - \mathbf{a}$. This implements $-\sigma(f_\NN(x)-\mathbf{a}) + \mathbf{b} - \mathbf{a}$.
    \item Add a new layer with weight matrix equal to $-I_{d_{out}}$ (identity) and bias equal to $\mathbf{b}$. This implements the LHS of \eqref{eq:maxmin}.
\end{enumerate}

Thus we have constructed a neural network $h_\NN$ by modifying the bias vector in the final layer of $f_\NN$ and adding two more layers.

{\bf Part 3}: We add extra layers that store the running sum. For every network $f^j_\NN$, we add $2d_{in}$ extra neurons that store the positive and negative values of the inputs every layer. For the final layer of each network, we store the positive and negative values of the output with $2d_{out}$ neurons. 

We then concatenate each of these networks. We collect the running sum in the $2d_{out}$ neurons, and we preserve the input in the $2d_{in}$ extra intermediate neurons. The depth of this concatenated network is the sum of the depths of each network, and the width is the maximum width of the networks plus $2 d_{out} + 2 d_{in}$ neurons.

Each layer will need $2 d_{out}$ neurons to capture the positive and negative values of the $d_{out}$ outputs in the sum. Specifically, we modify the final layer of $f_\NN$ to have double the number of output neurons as $d_{out}$

\end{proof}

Now, let's consider to approximate H\"older functions using ReLU networks. 
For any function $f: \Omega \rightarrow \bR$, we define it $1$-H\"older norm:
$$
\|f\|_{\cH^1(\Omega;\bR)}=\max\left\{\sup_{x\in \Omega} |f(x)|, \ \sup_{x,y\in \Omega}\frac{|f(x)-f(y)|}{\|x-y\|_2}\right\}
$$
and 
\begin{align*}
    \cH^1(\Omega;\bR)=\left\{f: \|f\|_{\cH^1(\Omega;R)}< \infty\right\}.
\end{align*}

\begin{proposition}
\label{prop:lipsingle}
Let $f^* \in \cH^1([0,1]^d; \, \bR)$. Then, $\forall \sfM \in \mathbb{N}$, there is a $f_{\NN} \in \cF_{\NN}(d, 1, L, W)$ such that 

\[ \|f^* - f_{\NN} \|_{L^\infty([0,1]^d; \, \bR)} \leq C\sfM^{-\frac{1}{d}}. \]
Here $L = O(\log(\sfM))$ and $W = O(\sfM)$, where the big O and $C$ hides constants only depending on $d$ and $\|f^*\|_{\cH^1([0,1]^d; \, \bR)}$.
\end{proposition}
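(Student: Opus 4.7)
The plan is to realize a Yarotsky-style piecewise multilinear interpolant as a ReLU network. Set $K = \lceil \mathsf{M}^{1/d} \rceil$ so that $(K+1)^d \lesssim \mathsf{M}$, and index the grid by $\mathbf{k} \in \{0,1,\dots,K\}^d$. Let $\psi(t) = \max(0,1-|t|) = \sigma(1+t)-2\sigma(t)+\sigma(t-1)$ be the one-dimensional hat function, and put $\Phi_\mathbf{k}(\bfx) = \prod_{i=1}^d \psi(K x_i - k_i)$. Then $\sum_\mathbf{k} \Phi_\mathbf{k} \equiv 1$ on $[0,1]^d$, only at most $2^d$ of the $\Phi_\mathbf{k}$ are nonzero at any given $\bfx$, and each $\Phi_\mathbf{k}$ is supported in the cube of half-side $1/K$ around $\mathbf{k}/K$. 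I would define the interpolant
\[
\tilde f(\bfx) = \sum_\mathbf{k} f^*(\mathbf{k}/K)\,\Phi_\mathbf{k}(\bfx).
\]
Using the Lipschitz part of $\|f^*\|_{\cH^1([0,1]^d;\bR)}$ together with the partition-of-unity property, a standard computation gives $\|f^*-\tilde f\|_{L^\infty}\leq \sqrt d\,\|f^*\|_{\cH^1}/K\leq C\mathsf{M}^{-1/d}$.

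Next I would implement $\tilde f$ by a ReLU network $f_\NN$. The first hidden layer computes all $(K+1)d = O(\mathsf{M}^{1/d})$ scalars $\psi(K x_i - k_i)\in[0,1]$ in parallel. Each $d$-fold product $\Phi_\mathbf{k}$ is built by invoking Yarotsky's multiplication lemma: for any $\delta>0$ there is a ReLU network $\widetilde\times : [0,1]^2 \to [0,1]$ of depth $O(\log(1/\delta))$ and constant width with $|\widetilde\times(x,y)-xy|\leq\delta$ and, crucially, $\widetilde\times(x,0)=\widetilde\times(0,y)=0$. Chaining $\widetilde\times$ in a binary tree of height $\lceil\log_2 d\rceil$ (constant since $d$ is fixed) produces an approximation $\widetilde\Phi_\mathbf{k}$ with $|\widetilde\Phi_\mathbf{k}-\Phi_\mathbf{k}|\leq C_d\,\delta$ that still vanishes whenever any factor vanishes. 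Arranging the $(K+1)^d \lesssim \mathsf{M}$ product subnetworks in parallel, each of depth $O(\log(1/\delta))$ and constant width, and combining them with scalar weights $f^*(\mathbf{k}/K)$ in a final linear layer --- using Proposition \ref{prop:appxtools} parts 1 and 3 to align depths and merge the parallel branches --- produces a single ReLU network of width $O(\mathsf{M})$ and depth $O(\log(1/\delta))$.

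The main technical point is controlling the multiplication error across the $\mathsf{M}$ parallel branches so the hidden constants do not inflate. A naive union bound would estimate the total error by $\mathsf{M}\,\|f^*\|_\infty\,\delta$, forcing $\delta=O(\mathsf{M}^{-1-1/d})$ and a correspondingly worse constant in the depth. The zero-preserving property of $\widetilde\times$ avoids this: whenever any factor $\psi(K x_i - k_i)$ vanishes, $\widetilde\Phi_\mathbf{k}(\bfx)=0$ exactly, so at every $\bfx$ only the at most $2^d$ branches supported near $\bfx$ contribute to the multiplication error, and the total implementation error is bounded by $2^d C_d \|f^*\|_{\cH^1}\,\delta$. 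Choosing $\delta = \mathsf{M}^{-1/d}$ then combines with the interpolation bound to yield $\|f^* - f_\NN\|_{L^\infty} \leq C\mathsf{M}^{-1/d}$, with depth $L = O(\log(1/\delta)) = O(\log \mathsf{M})$ and width $W = O(\mathsf{M})$, as claimed.
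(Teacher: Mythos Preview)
Your argument is correct: this is the standard Yarotsky construction (piecewise multilinear interpolation on a uniform grid, products realized via the zero-preserving approximate multiplication network), and the size and error analysis you give are right. The paper's own proof is simply a one-line citation to \cite[Lemma~7]{oono2019approximation}, so you have in effect written out the details behind that reference rather than taken a different route.

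One small quibble: your appeal to Proposition~\ref{prop:appxtools} Part~3 is misplaced. Part~3 realizes a sum of $m$ networks by \emph{concatenating} them in depth (total depth $\sum_j L_j$), which here would yield depth $O(\mathsf{M}\log\mathsf{M})$ rather than $O(\log\mathsf{M})$. What you actually describe---running the $(K+1)^d$ product subnetworks in parallel and combining them in a single final linear layer---is the correct construction and needs only horizontal stacking (width adds, depth is the max after aligning via Part~1), not Part~3.
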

\begin{proof}
This is a restatement of \cite[Lemma 7]{oono2019approximation}.
\end{proof}

We extend Proposition \ref{prop:lipsingle} to functions with vector-valued outputs. 

\begin{proposition}
\label{prop:lipmultiple}
Let $f^* \in \cH^1([0,1]^{d_{in}}; \, A)$, where $A = \bR^{d_{out}}$ is the Euclidean space, or $A = \prod_{i=1}^{d_{out}} [a_i, b_i]$ is a cube. Then, $\forall \sfM \in \mathbb{N}$, $\exists f_{\NN} \in \cF_{\NN}(d_{in}, d_{out}, L, W)$ such that 

\[ \|f^* - f_{\NN} \|_{L^\infty([0,1]^{d_{in}}; \, \bR^{d_{out}})} \leq C\sfM^{-\frac{1}{d_{in}}}. \]
In addition, the range of $f_{\NN}$ is contained in $A$. Here $L = O(\log(\sfM))$ and $W = O(\sfM)$, where big $O$ and $C$ hide constants depending on $d_{in}$, $d_{out}$, and $\|f^*\|_{\cH^1([0,1]^{d_{in}}; \, \bR^{d_{out}})}$.
\end{proposition}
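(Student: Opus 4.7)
The plan is to reduce the vector-valued case to the scalar case of Proposition \ref{prop:lipsingle} by approximating each coordinate function separately and then combining the resulting networks in parallel, finally applying the coordinatewise clipping construction of Proposition \ref{prop:appxtools} if the target set $A$ is a cube.

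First, I would write $f^* = (f^{*,(1)},\dots,f^{*,(d_{out})})$ where $f^{*,(i)}:[0,1]^{d_{in}}\to\bR$ is the $i$th coordinate of $f^*$. Each $f^{*,(i)}$ inherits the $1$-H\"older regularity of $f^*$ with $\|f^{*,(i)}\|_{\cH^1([0,1]^{d_{in}};\bR)}\le \|f^*\|_{\cH^1([0,1]^{d_{in}};\bR^{d_{out}})}$, since bounding a single coordinate is easier than bounding the Euclidean norm. Applying Proposition \ref{prop:lipsingle} to each $f^{*,(i)}$ yields ReLU networks $f_\NN^{(i)}\in\cF_\NN(d_{in},1,L_i,W_i)$ with $L_i=O(\log\sfM)$, $W_i=O(\sfM)$, and $\|f^{*,(i)}-f_\NN^{(i)}\|_{L^\infty}\le C_i\sfM^{-1/d_{in}}$, where $C_i$ depends on $\|f^{*,(i)}\|_{\cH^1}$.

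Next, I would combine the $d_{out}$ scalar networks into a single vector-valued ReLU network. Using Proposition \ref{prop:appxtools} Part 1, I first pad all $f_\NN^{(i)}$ to a common depth $L=\max_i L_i=O(\log\sfM)$ at the cost of at most doubling each width. Then I construct a parallel network by concatenating the hidden layers: the shared input layer feeds into a block-diagonal stack of the $d_{out}$ scalar networks, and the outputs are collected into a single vector in $\bR^{d_{out}}$. This yields a network $\tilde f_\NN\in\cF_\NN(d_{in},d_{out},L,W)$ with $L=O(\log\sfM)$ and $W=O(d_{out}\sfM)=O(\sfM)$ (absorbing $d_{out}$ into the implicit constant). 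The coordinatewise approximation bounds combine to give
\[
\|f^*-\tilde f_\NN\|_{L^\infty([0,1]^{d_{in}};\bR^{d_{out}})}
\le \sqrt{d_{out}}\,\max_i\|f^{*,(i)}-f_\NN^{(i)}\|_{L^\infty}\le C\sfM^{-1/d_{in}},
\]
where $C$ absorbs $d_{out}$ and the coordinate H\"older norms.

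For the case $A=\bR^{d_{out}}$, setting $f_\NN=\tilde f_\NN$ completes the argument. For the case $A=\prod_{i=1}^{d_{out}}[a_i,b_i]$, I would invoke Proposition \ref{prop:appxtools} Part 2 to post-compose $\tilde f_\NN$ coordinatewise with the clip map $y\mapsto\min\{\max\{y,a_i\},b_i\}$, at the cost of two additional layers and no order increase in width. Since $f^*$ takes values in $A$, clipping the output of $\tilde f_\NN$ to $A$ can only decrease the Euclidean distance to $f^*$ pointwise (projection onto a convex set is $1$-Lipschitz and fixes $f^*(x)$), so the error bound is preserved and the range of $f_\NN$ lies in $A$ by construction. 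The main obstacle, which is really more of a bookkeeping check than a true difficulty, is verifying that the parallel-then-clip construction stays within $\cF_\NN(d_{in},d_{out},O(\log\sfM),O(\sfM))$ with constants depending only on $d_{in},d_{out},\|f^*\|_{\cH^1}$; the clipping step is what forces the condition $W\ge d_{out}$ implicit in Part 2 of Proposition \ref{prop:appxtools}.
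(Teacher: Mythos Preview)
Your proposal is correct and follows essentially the same approach as the paper: approximate each coordinate with Proposition \ref{prop:lipsingle}, pad to a common depth via Proposition \ref{prop:appxtools} Part 1, stack the scalar networks in parallel, bound the vector error by $\sqrt{d_{out}}\max_i C_i\,\sfM^{-1/d_{in}}$, and clip to the cube when $A=\prod_i[a_i,b_i]$. Your explicit observation that clipping cannot increase the error (projection onto a convex set is $1$-Lipschitz and fixes $f^*(x)$) is a nice touch that the paper's proof leaves implicit.
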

\begin{proof}[Proof of Proposition \ref{prop:lipmultiple}]
For all $i \in [d_{out}]$, we denote the $i$th component function of $f^*$ by $f^i$. Then since $f^i \in \cH^1([0,1]^{d_{in}}, \bR)$, by Proposition \ref{prop:lipsingle} there exists an FNN $f^i_{\NN} \in \cF_{\NN}(d_{in}, 1, L_i, W_i)$ such that $\|f^i - f_{\NN}^i\|_{L^\infty([0,1]^{d_{in}}; \, \bR)} \leq C_i \sfM^{-\frac{1}{d_{in}}}$ for some $C_i$ depending on $d$ and $\|f^i\|_{\cH^1([0,1]^{d_{in}}; \, \bR)}$. Let $L = \max_i L_i$, and $W = \sum_{i=1}^{d_{out}} W_i$. We will add extra layers implementing the identity so that each $f^i_{\NN}$ has exactly $L$ layers. This is done by applying Proposition \ref{prop:appxtools} Part 1, which will double the width of each $f^i_{\NN}$, leaving the term inside the big $O$ notation unchanged.

Now, define $f_{\NN}$ as the network implementing each of the $f^i_{\NN}$ as component functions. Then $f_{\NN} \in \cF_{\NN}(d_{in}, d_{out}, L, W)$ with $L = O(\log(\sfM))$ and $W = O(\sfM)$. For all $x \in [0,1]^{d_{in}}$ we estimate
\begin{align*}
\|f^*(x) - f_{\NN}(x)\|_{\bR^{d_{out}}}
    &= \left(\sum_{i=1}^{d_{out}} (f_i^*(x) - f^i_{\NN}(x) )^2 \right)^{\frac{1}{2}} \\
    &\leq \left(\sum_{i=1}^{d_{out}} C_i^2 \sfM^{-\frac{2}{d_{in}}}\right)^{\frac{1}{2}} \\
    &\leq \left(d_{out} \left(\max_i C_i^2\right) \sfM^{-\frac{2}{d_{in}}}\right)^{\frac{1}{2}} \\
    &= \sqrt{d_{out} \left(\max_i C_i^2\right)} \sfM^{-\frac{1}{d_{in}}}.
\end{align*}
Finally, if $A = \prod_{i=1}^{d_{out}} [a_i, b_i]$, we replace each $f^i_{\NN}$ by $\sigma(-\sigma(-f^i_{\NN}+b_i-a_i)+a_i)$ so that for any $x\in[0,1]^{d_{in}}$, $f^i_{\NN}(x)\in [a_i,b_i]$ and thus $f_{\NN}(x)\in A$.
\end{proof}

We use Proposition \ref{prop:lipmultiple} to approximate target functions up to $\epsilon$ accuracy. Setting the approximation error as $\epsilon$ in Proposition \ref{prop:lipmultiple} gives rise to the following proposition.
\begin{proposition}
\label{prop:lipappx}
Let $\epsilon > 0$ and $f^* \in \cH^1([0,1]^{d_{in}}; \, A)$, where $A = \bR^{d_{out}}$ is Euclidean space or $A = \prod_{i=1}^{d_{out}} [a_i, b_i]$ is a cube. Then $\exists f_{\NN} \in \cF_{\NN}(d_{in}, d_{out}, L, W)$ such that 

\[ \| f^* - f_{\NN} \|_{L^\infty([0,1]^{d_{in}}; \, \bR^{d_{out}})} < \epsilon.\]

In addition, the range of $f_{\NN}$ is contained in $A$. Here $L = O(\log(\epsilon^{-1}))$ and $W = O(\epsilon^{-d_{in}})$ , where the big $O$  hides constants depending only on $d_{in}$, $d_{out}$, and $\|f^*\|_{\cH^1([0,1]^{d_{in}}; \, \bR^{d_{out}})}$.
\end{proposition}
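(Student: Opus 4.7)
The plan is to derive this statement directly from Proposition \ref{prop:lipmultiple} by choosing the discretization parameter $\sfM$ appropriately in terms of $\epsilon$. Proposition \ref{prop:lipmultiple} already provides, for any $\sfM \in \mathbb{N}$, an FNN $f_\NN \in \cF_\NN(d_{in}, d_{out}, L, W)$ with $L = O(\log \sfM)$ and $W = O(\sfM)$ such that $\|f^* - f_\NN\|_{L^\infty([0,1]^{d_{in}};\,\bR^{d_{out}})} \leq C \sfM^{-1/d_{in}}$, where $C$ depends on $d_{in}, d_{out},$ and $\|f^*\|_{\cH^1}$. In addition, the construction there already enforces that the range of $f_\NN$ is contained in the target cube $A$ (if $A$ is a cube) via a componentwise min/max clipping, which by Proposition \ref{prop:appxtools} Part 2 costs only two additional layers and at most doubles the width.

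First, I would pick $\sfM$ as the smallest integer strictly greater than $(C/\epsilon)^{d_{in}}$, so that $C \sfM^{-1/d_{in}} < \epsilon$. Instantiating Proposition \ref{prop:lipmultiple} with this $\sfM$ immediately gives the desired approximation bound $\|f^* - f_\NN\|_{L^\infty} < \epsilon$ and the range-containment conclusion.

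Next, I would translate the depth and width bounds from $\sfM$ back into $\epsilon$. Since $\sfM = \Theta(\epsilon^{-d_{in}})$ up to constants depending on $d_{in}, d_{out}$, and $\|f^*\|_{\cH^1}$, we get
\[
L = O(\log \sfM) = O\bigl(d_{in} \log(C/\epsilon)\bigr) = O(\log \epsilon^{-1}),
\qquad
W = O(\sfM) = O(\epsilon^{-d_{in}}),
\]
absorbing all dependence on $d_{in}, d_{out}, C$ (hence on $\|f^*\|_{\cH^1}$) into the implicit constants in the $O$ notation. This matches the stated asymptotics exactly.

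There is essentially no serious obstacle here: the proposition is a cosmetic reformulation of Proposition \ref{prop:lipmultiple} that replaces the ``inverse polynomial in $\sfM$'' error bound with an explicit target accuracy $\epsilon$. The only minor care needed is to verify that the clipping construction of Proposition \ref{prop:appxtools} Part 2 (used in the cube case) does not alter the asymptotic depth/width scaling, which it does not, since it adds only $O(1)$ layers and multiplies the width by a constant factor.
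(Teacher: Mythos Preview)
Your proposal is correct and follows essentially the same approach as the paper: the paper's proof simply states ``This follows directly from Proposition \ref{prop:lipmultiple},'' and you have spelled out the straightforward choice of $\sfM \approx (C/\epsilon)^{d_{in}}$ that makes this work.
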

\begin{proof}[Proof of Proposition \ref{prop:lipappx}]
This follows directly from Proposition \ref{prop:lipmultiple}.
\end{proof}

WeldNet uses a composition of neural networks (i.e. propagators and transcoders) that are sequentially trained. In order to control the approximation error of such a composition, we use the following proposition.

\begin{proposition}[Composition] 
\label{prop:composition}
Let $f_1, \dots, f_n$ be Lipschitz functions, such that $f_j : A_{j-1} \rightarrow A_j$ for all $j \in [n]$. Suppose for each $j \in [n]$ and  $\epsilon_j > 0$, there is an approximating function $\hat{f}_j : A_{j-1} \rightarrow A_j$ such that $$\sup_{x \in A_{j-1}} \| f_j(x) - \hat{f}_j(x) \| < \epsilon_j.$$  Then
\begin{equation}
\label{eq:composition}
\sup_{x \in A_0} \| (\bigcirc_{j=1}^n f_j)(x) - (\bigcirc_{j=1}^n\hat{f}_j)(x) \| < \sum_{j=1}^n \Lip(\bigcirc_{\ell=j+1}^n f_\ell) \epsilon_i .
\end{equation}
\end{proposition}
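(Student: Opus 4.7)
The plan is to use a standard telescoping decomposition: interpolate between the two compositions one factor at a time and bound each hybrid difference by the Lipschitz constant of the ``outer'' composition times the pointwise error at that step.

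More precisely, I would define the hybrid compositions
\[ g_j := \bigcirc_{\ell=j+1}^n f_\ell \;\circ\; \hat{f}_j \;\circ\; \bigcirc_{\ell=1}^{j-1} \hat{f}_\ell \qquad (j \in [n]), \]
together with $g_0 := \bigcirc_{\ell=1}^n f_\ell$, using the convention that $\bigcirc_{\ell=n+1}^n f_\ell$ is the identity (with Lipschitz constant $1$). Then $g_0 = \bigcirc_{j=1}^n f_j$ and $g_n = \bigcirc_{j=1}^n \hat{f}_j$, and the telescoping identity
\[ \bigl(\bigcirc_{j=1}^n f_j\bigr)(x) - \bigl(\bigcirc_{j=1}^n \hat{f}_j\bigr)(x) = \sum_{j=1}^n \bigl(g_{j-1}(x) - g_j(x)\bigr) \]
holds.

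The next step is to bound each telescoped term. Consecutive hybrids $g_{j-1}$ and $g_j$ differ only in whether position $j$ uses $f_j$ or $\hat{f}_j$. Writing $y_j(x) := \bigcirc_{\ell=1}^{j-1} \hat{f}_\ell(x) \in A_{j-1}$ (with the empty composition being the identity on $A_0$), I get
\[ g_{j-1}(x) - g_j(x) = \bigl(\bigcirc_{\ell=j+1}^n f_\ell\bigr)\!\bigl(f_j(y_j(x))\bigr) - \bigl(\bigcirc_{\ell=j+1}^n f_\ell\bigr)\!\bigl(\hat{f}_j(y_j(x))\bigr). \]
Applying the definition of $\Lip(\bigcirc_{\ell=j+1}^n f_\ell)$ on $A_j$ (where both $f_j(y_j(x))$ and $\hat{f}_j(y_j(x))$ live, since $\hat{f}_j$ is assumed to map into $A_j$) and then invoking the hypothesis $\sup_{y\in A_{j-1}}\|f_j(y)-\hat{f}_j(y)\|<\epsilon_j$ yields
\[ \|g_{j-1}(x) - g_j(x)\| \le \Lip\bigl(\bigcirc_{\ell=j+1}^n f_\ell\bigr) \cdot \epsilon_j. \]

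Finally, applying the triangle inequality to the telescoping sum and taking the supremum over $x \in A_0$ gives exactly \eqref{eq:composition}. The strict inequality is preserved because each term satisfies the strict bound $\|f_j(y)-\hat f_j(y)\|<\epsilon_j$ uniformly on $A_{j-1}$. There is no real obstacle here; the only technical care is to fix the convention that an empty composition denotes the identity so the $j=n$ term reads $\Lip(\mathrm{id})\cdot\epsilon_n = \epsilon_n$, and to verify that each intermediate point $y_j(x)$ indeed lies in $A_{j-1}$ so the Lipschitz estimate is applied on a valid domain—both facts follow immediately from the assumption $\hat{f}_\ell : A_{\ell-1}\to A_\ell$.
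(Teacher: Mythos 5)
Your argument is correct and is essentially the paper's own proof: the paper also telescopes through the hybrid compositions $\bigcirc_{\ell=j+1}^n f_\ell \circ f_j \circ \bigcirc_{\ell=1}^{j-1}\hat{f}_\ell$ versus $\bigcirc_{\ell=j+1}^n f_\ell \circ \hat{f}_j \circ \bigcirc_{\ell=1}^{j-1}\hat{f}_\ell$ and bounds each consecutive difference by $\Lip(\bigcirc_{\ell=j+1}^n f_\ell)\,\epsilon_j$. Your explicit attention to the empty-composition convention and to the intermediate points lying in $A_{j-1}$ is a welcome clarification but does not change the route.
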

\begin{proof}[Proof of Proposition \ref{prop:composition}]

{We have
\begin{align*}
\sup&_{x \in A_0} \| (\bigcirc_{j=1}^n f_j)(x) - (\bigcirc_{j=1}^n \hat{f}_\ell)(x) \| \\ 
    & \leq \sup_{x \in A_0} \sum_{j=1}^n \| (\bigcirc_{\ell=j+1}^n f_\ell \circ f_j \circ \bigcirc_{\ell=1}^{j-1}\hat{f}_\ell)(x) - (\bigcirc_{\ell=j+1}^n f_\ell \circ \hat{f}_j \circ \bigcirc_{\ell=1}^{j-1}\hat{f}_\ell)(x) \| \\
    & \leq \sum_{j=1}^n \sup_{x \in A_0} \| (\bigcirc_{\ell=j+1}^n f_\ell \circ f_j \circ \bigcirc_{\ell=1}^{j-1} \hat{f}_\ell)(x) - (\bigcirc_{\ell=j+1}^n f_\ell \circ \hat{f}_j \circ \bigcirc_{\ell=1}^{j-1} \hat{f}_\ell)(x) \| \\
    & \leq \sum_{j=1}^n \Lip(\bigcirc_{\ell=j+1}^n f_\ell) \sup_{x \in A_0} \| (f_j \circ \bigcirc_{\ell=1}^{j-1}\hat{f}_\ell)(x) - (\hat{f}_j \circ \bigcirc_{\ell=1}^{j-1}\hat{f}_\ell)(x) \| \\
     & \leq \sum_{j=1}^n \Lip(\bigcirc_{\ell=j+1}^n f_\ell) \sup_{y \in A_{j-1}} \| f_j(y) - \hat{f}_j(y) \| \\
    & \leq \sum_{j=1}^n \Lip(\bigcirc_{\ell=j+1}^n f_\ell) \epsilon_j .
\end{align*}}

\end{proof}

\section{Proof of Main Results}
\subsection{Proof of Lemma \ref{lemma:weldlatentode}}
\label{app:proveweldlatentodelemma}
In this section, we prove Lemma \ref{lemma:weldlatentode}. In the following, the windows are the same as the segments. First, we prove lemmas that establish approximation results for the autoencoders, propagators, and transcoders.

\begin{lemma}[Autoencoder Approximation]
\label{lemma:aeonechart}
Suppose Assumption \ref{assum:mfd} holds, and let $0 < \epsilon_1, \epsilon_2 < \min\left\{1, \bm{\tau}(\cM([0, T])) / 2\right\}$. Then for any $i \in [\sfS]$, there exists $\sE^i_{\NN} \in \cF_{\NN}(D, d+1, L_{\sE^i}, W_{\sE^i})$ and $\sD^i_{\NN} \in \cF_{\NN}(d+1, D, L_{\sD^i}, W_{\sD^i})$
with parameters
\begin{align*}
& L_{\sE^i} = O(\log^2 (\epsilon_1^{-1})),\quad   W_{\sE^i} = O(D\epsilon_1^{-(d+1)})
\\
&L_{\sD^i} = O(\log (\epsilon_2^{-1})),\quad  
 W_{\sD^i} = O(D \epsilon_2^{-(d+1)})
\end{align*} 
such that 
\[ \sup_{\bfx \in \cM([s_i, s_{i+1}])} \| \sE_{\NN}(\bfx) - \sE^i_*(\bfx) \|_{\bR^{d+1}} \le \epsilon_1 \text{  and } \sup_{\bfz \in \cZ([s_i, s_{i+1}])} \| \sD_{\NN}(\bfz) - \sD_*(\bfz) \|_{\bR^D} \le \epsilon_2. \]

Moreover, the range of $\sE_\NN$ is contained in $\cZ([0, T])$. The constants hidden in $O$ depend on (for each $i \in [\sfS]$) $\log D$, $d$, $\bm{\tau}(\cM([s_i, s_{i+1}]))$, $s_{i+1}-s_i$, $\Lip(\sE^i_*)$, $\Lip(\sD^i_*)$, the volume of $\cM([s_i, s_{i+1}])$, and $\sup_{\bfx \in \cM([s_i, s_{i+1}])} \|\bfx\|_{\bR^D}$. 
\end{lemma}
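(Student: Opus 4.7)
The plan is to treat the two approximation bounds independently. The decoder bound is essentially a direct consequence of Proposition \ref{prop:lipappx} since $\sD^i_*$ is defined on a cube, whereas the encoder bound is the nontrivial half because its domain is the curved manifold $\cM([s_i, s_{i+1}]) \subset \bR^D$, where the Hölder approximation results for ReLU networks do not apply verbatim.

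For the decoder, I would note that the domain $\cZ([s_i, s_{i+1}]) = [0,1]^d \times [s_i, s_{i+1}]$ is a cube in $\bR^{d+1}$, and by Assumption \ref{assum:mfd} the map $\sD^i_*$ is smooth on this compact set and hence lies in $\cH^1(\cZ([s_i, s_{i+1}]); \bR^D)$ with a bounded Hölder norm depending on $\Lip(\sD^i_*)$ and $\sup_{\bfx \in \cM([s_i, s_{i+1}])} \|\bfx\|_{\bR^D}$. After an affine rescaling of the time interval to $[0,1]$, a direct application of Proposition \ref{prop:lipappx} with $d_{in} = d+1$ and $d_{out} = D$, taking $A$ to be a cube enclosing $\cM([s_i, s_{i+1}])$, produces a network of depth $O(\log(\epsilon_2^{-1}))$ and width $O(D \epsilon_2^{-(d+1)})$ that approximates $\sD^i_*$ uniformly to error $\epsilon_2$. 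The factor of $D$ in the width comes from concatenating the $D$ scalar Hölder approximators in the construction underlying Proposition \ref{prop:lipmultiple}.

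For the encoder, the main obstacle is extending $\sE^i_*$ off the manifold so that classical Hölder approximation can be applied. My plan is as follows. First, use positive reach: since $\bm{\tau}(\cM([s_i, s_{i+1}])) > 0$, the nearest-point projection $\Pi$ from a tubular neighborhood $\cM_\delta = \{\bfy \in \bR^D : \mathrm{dist}(\bfy, \cM([s_i, s_{i+1}])) < \delta\}$ onto $\cM([s_i, s_{i+1}])$ is well-defined and Lipschitz for any $\delta < \bm{\tau}(\cM([s_i, s_{i+1}]))/2$. Setting $\widetilde{\sE}^i = \sE^i_* \circ \Pi$ produces a Lipschitz extension to an open subset of $\bR^D$. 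Second, cover the manifold with local tangent-plane charts whose number grows like $\epsilon_1^{-(d+1)}$, so that on each chart the map has a Lipschitz representative on a $(d{+}1)$-dimensional cube. Third, assemble these local approximants into a global network by means of a ReLU-implementable partition of unity / chart-selection network, following the chart-based construction used in \cite{chen2019efficient,schonsheck2022semi,liu2024deep}. This last step is where the $\log^2(\epsilon_1^{-1})$ depth arises: sharpening the soft indicators that pick out the active chart contributes an extra logarithmic factor beyond the $O(\log(\epsilon_1^{-1}))$ depth of each local Hölder approximator. The width remains $O(D\epsilon_1^{-(d+1)})$ because the number of charts dominates.

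Finally, to ensure that the output of $\sE^i_\NN$ lies in $\cZ([0,T])$, I would append the truncation construction of Proposition \ref{prop:appxtools}(2), which costs two additional layers and does not change the asymptotic depth/width bounds. The Lipschitz constants of $\sE^i_*$ and $\sD^i_*$ enter the hidden constants through the Hölder norms, and the volume of $\cM([s_i, s_{i+1}])$ together with the reach control the number of charts needed in the covering step. The hard part of the argument is the chart-based step (c); the decoder side and the truncation step are routine once Proposition \ref{prop:lipappx} and Proposition \ref{prop:appxtools} are in hand.
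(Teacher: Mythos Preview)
Your proposal is correct and aligns with the paper's approach: the paper's proof simply invokes \cite[Lemma 6 and Lemma 8]{liu2024deep} for the encoder and decoder bounds---precisely the chart-based manifold approximation construction you sketch---and then applies Proposition \ref{prop:appxtools}(Part 2) to restrict the encoder's range, exactly as you propose in your final step. The only difference is that the paper delegates the heavy lifting to the cited reference, whereas you outline the mechanism (tubular neighborhood via positive reach, chart covering, partition-of-unity assembly) explicitly.
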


\begin{proof}[Proof of Lemma \ref{lemma:aeonechart}]
This is follows from  \cite[Lemma 6 and Lemma 8]{liu2024deep}, with one additional step to restrict the range of $\sE_\NN$. This is done by applying Proposition \ref{prop:appxtools}(Part 2), which increases the number of layers by 2.
\end{proof}

\begin{lemma}[Propagator Approximation with Latent Dynamics]
\label{lemma:propdynamic}
Suppose there is a Lipschitz function $g : \cZ([a, b]) \rightarrow \bR^{d+1}$ such that for all $\bfz(a) \in \cZ(a)$, if we denote $\bfz(t) = \sP_*(\bfz(a), t - a)$ for all $t \in [a, b]$, then these latent codes satisfy:
\begin{equation} \frac{\partial \bfz}{\partial t} (t) = g(\bfz(t)). 
\label{eq:odeapp}
\end{equation}
Denote ${\rm Lip}(g) = \sup_{t \in [a,b]} {\rm Lip}_{\cZ(t)}(g
(\bfz(t)))$ and $T = b - a$. Suppose $a = t_1 < \dots < t_{\sfT-1} < t_\sfT = b$ is an equally spaced time grid, i.e. $t_{k-1} - t_k = \Delta t$ is the same for all $k \in [\sfT -1]$. For any $\epsilon > 0$ and $\sfT > 1 + {{\rm Lip}(g) T^2 e^{{\rm Lip}(g) T}}{\epsilon}^{-1}$, then there is a $\sP_\NN \in \cF_\NN(d+1, d+1, L, W)$, with range contained in $\cZ([a, b])$, such that for any $k \in [\sfT - 1]$ and $i \in [\sfT - k]$,
\[ \sup_{\bfz(t_k) \in \cZ(t_k)} \|\left(\bigcirc_{j=1}^i \sP_\NN \right)(\bfz(t_k)) - \sP_*(\bfz(t_k), t_{k+i} - t_k) \|_{\bR^{d+1}} < \epsilon. \]
Here $L = O\left(\log\left(\frac{1}{\epsilon}\right)\right)$, $W = O\left(\epsilon^{-(d+1)}\right)$, where the constants hidden in $O$ depend on $d$, ${\rm Lip}(g)$, $ \|g\|_{L^\infty(\cZ([0, T]))}$, and $T$.
\end{lemma}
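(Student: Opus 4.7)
The plan is to realize the propagator as an explicit Euler step along the latent vector field $g$, where $g$ itself is replaced by a ReLU approximation. Using Proposition~\ref{prop:lipappx}, I will pick a network $g_\NN\in\cF_\NN(d+1,d+1,L_g,W_g)$ with $\|g-g_\NN\|_{L^\infty(\cZ([a,b]);\,\bR^{d+1})}\le\epsilon_g$ for some $\epsilon_g$ to be chosen, and set $\sP_\NN(\bfz)$ to be $\bfz+\Delta t\,g_\NN(\bfz)$ followed by the clipping to $\cZ([a,b])$ provided by Proposition~\ref{prop:appxtools} Part 2. The residual form $\bfz+\Delta t\,g_\NN(\bfz)$ is implementable by carrying $\bfz$ alongside $g_\NN$ through each hidden layer via the $\sigma(x)-\sigma(-x)$ identity; by Proposition~\ref{prop:idappx} this costs only $O(d)$ extra width and does not change the depth asymptotically. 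Since $g_{d+1}\equiv 1$ by Assumption~\ref{assum:latentdynamics}, this construction naturally advances the time coordinate by approximately $\Delta t$ per step.

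The per-step error splits into an Euler consistency error and an approximation error:
\[
\|\sP_\NN(\bfz)-\sP_*(\bfz,\Delta t)\|\;\le\;\|\bfz+\Delta t\,g(\bfz)-\sP_*(\bfz,\Delta t)\|\;+\;\Delta t\,\|g_\NN(\bfz)-g(\bfz)\|.
\]
A standard Taylor-type estimate applied to \eqref{eq:odeapp} bounds the first term by $\tfrac{1}{2}{\rm Lip}(g)\|g\|_{L^\infty}\Delta t^2$, while the second is at most $\Delta t\,\epsilon_g$. Gr\"onwall applied to \eqref{eq:odeapp} gives $\Lip(\sP_*(\cdot,\Delta t))\le e^{{\rm Lip}(g)\Delta t}$, so iterated compositions of $\sP_*$ are Lipschitz with constant at most $e^{{\rm Lip}(g)T}$. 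Applying Proposition~\ref{prop:composition} to the iteration of $\sP_\NN$ against $\sP_*$ and summing the resulting geometric series in $(1+\Delta t\,{\rm Lip}(g))$ yields, for every $i\le \sfT-1$,
\[
\bigl\|(\bigcirc_{j=1}^i\sP_\NN)(\bfz)-\sP_*(\bfz,i\Delta t)\bigr\|\;\le\;\tfrac{1}{2}{\rm Lip}(g)\|g\|_{L^\infty}\,T\,\Delta t\,e^{{\rm Lip}(g)T}\;+\;\tfrac{\epsilon_g}{{\rm Lip}(g)}\,e^{{\rm Lip}(g)T}.
\]

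The hypothesis $\sfT-1>{\rm Lip}(g)\|g\|_{L^\infty}T^2 e^{{\rm Lip}(g)T}/\epsilon$ is precisely what drives the first (discretization) term below $\epsilon/2$, and choosing $\epsilon_g$ of order ${\rm Lip}(g)\,\epsilon\,e^{-{\rm Lip}(g)T}$ forces the second (approximation) term below $\epsilon/2$. Substituting this $\epsilon_g$ into Proposition~\ref{prop:lipappx} yields $L_g=O(\log(1/\epsilon))$ and $W_g=O(\epsilon^{-(d+1)})$, with hidden constants depending only on $d$, ${\rm Lip}(g)$, $\|g\|_{L^\infty}$, and $T$; the clip-to-$\cZ([a,b])$ step contributes at most two extra layers by Proposition~\ref{prop:appxtools} Part 2, so the claimed sizes are met.

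I expect the main obstacle to be the careful bookkeeping that simultaneously (i) pins down the discretization constant $\tfrac{1}{2}{\rm Lip}(g)\|g\|_{L^\infty}$ in the consistency estimate so that the hypothesis on $\sfT$ exactly cancels it, and (ii) controls the cumulative effect of the approximation error through Proposition~\ref{prop:composition} without picking up an extra factor of $\sfT$. The key is the geometric-sum collapse $\sum_{j=0}^{i-1}(1+\Delta t\,{\rm Lip}(g))^{j}\le (e^{{\rm Lip}(g)T}-1)/(\Delta t\,{\rm Lip}(g))$, which turns the naive $\Delta t\,\epsilon_g\cdot i$ cumulative bound into $\epsilon_g\,e^{{\rm Lip}(g)T}/{\rm Lip}(g)$; this is what prevents $\epsilon_g$ from shrinking like $\epsilon/\sfT$ and thereby blowing up the network width. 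Once this collapse is executed cleanly and one checks that the clipping does not spoil either bound (which it does not, since $\sP_*$ itself lands in $\cZ([a,b])$), the rest of the proof is routine.
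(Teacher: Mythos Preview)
Your proposal is correct and follows essentially the same route as the paper: approximate $g$ by a ReLU network $g_\NN$ via Proposition~\ref{prop:lipappx}, define $\sP_\NN$ as a clipped explicit Euler step, and control the iterated error by the standard recurrence $(1+\Delta t\,{\rm Lip}(g))\cdot(\text{old error})+\Delta t\cdot(\text{local error})$, splitting the result into a discretization term handled by the hypothesis on $\sfT$ and an approximation term handled by the choice of $\epsilon_g$. The only cosmetic difference is that the paper writes out the recurrence directly rather than invoking Proposition~\ref{prop:composition}, and it tracks a constant $3T$ (arising from a triangle-inequality detour through $g$ when comparing $g_\NN(\bfz(t_k))$ and $g_\NN(\hat\bfz_k)$) in place of your $1/{\rm Lip}(g)$ factor on the approximation term; both choices lead to the same $L=O(\log(1/\epsilon))$, $W=O(\epsilon^{-(d+1)})$.
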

\begin{proof}[Proof of Lemma \ref{lemma:propdynamic}]
Since $g$ is Lipschitz, by Proposition \ref{prop:lipappx} there is a $g_\NN \in g_\NN(d+1, d+1, L_g, W_g)$ such that
\[ \sup_{\bfz \in \cZ([a, b])} \| g_\NN(\bfz) - g(\bfz) \|_{\bR^{d+1}} < \delta := \frac{\epsilon}{6 T e^{\mathrm{Lip}(g)T}}. \] 
Here $L_g = cO\left(\log\left(\frac{1}{\epsilon}\right) \right)$ and $W_g = O\left(\epsilon^{-(d+1)}\right)$. We define $\sP_\NN : \cZ([a, b]) \rightarrow \cZ([a, b])$ component-wise. For all $i \in [d]$, define the $i$th component of $\sP_\NN$ as
\[\sP^i_\NN(\bfz) = \max(0, \min(\bfz_i + \Delta t (g_\NN(\bfz))^i)\]
and the $(d+1)$th component as $\sP_\NN^{d+1}(\bfz_1, \dots, \bfz_{d+1}) = \bfz_{d+1} + \Delta t$. Then note we can implement $\sP_\NN$ as a neural network by modifying the last layer of $\sF_\NN$ and then adding two layers using Proposition \ref{prop:appxtools}(Part 2). Then $\sP_\NN \in g_\NN(d+1, d+1, L_\sP, W_\sP)$ with $L_\sP = O\left(\log\left(\frac{1}{\epsilon}\right) \right)$ and $W_\sP = O\left(\epsilon^{-(d+1)}\right)$.

We next use classical argument for the convergence of the Euler  method for ODEs to finish the proof (c.f. \cite{leveque2007finite} Chapter 6). Let $\bfz(a) \in \cZ(a)$. For all $k \in [\sfT]$, define $\bfz(t_k) = \sP_*(\bfz(a), t_k-a)$, and define $\hat{\bfz}_k$ using the iterative formula
\[ \hat{\bfz}_k = \sP_\NN(\hat{\bfz}_{k-1}), \quad \hat{\bfz}_1 = \bfz(a_i). \]
Note that 
\begin{align*} 
\bfz(t_{k+1}) &= \bfz(t_k) + \Delta t \frac{\partial \bfz}{\partial t}(t_k) + \int_{t_k}^{t_{k+1}} \left(\frac{\partial \bfz}{\partial t}(s) - \frac{\partial \bfz}{\partial t}(t_k)\right) ds \\
&= \bfz(t_k) + \Delta t g(\bfz(t_k)) + \int_{t_k}^{t_{k+1}} \left(g(\bfz(s)) - g(\bfz(t_k))\right) ds.
\end{align*}
This means
\begin{align*} 
\left\|\frac{1}{\Delta t} (\bfz(t_{k+1}) - \bfz(t_k)) - g(\bfz(t_k))\right\|_{\bR^d} 
    &= \frac{1}{\Delta t} \left\| \int_{t_k}^{t_{k+1}} \left(g(\bfz(s)) - g(\bfz(t_k))\right) ds \right\|_{\bR^{d+1}} \\
    &\leq \frac{1}{\Delta t} \int_{t_k}^{t_{k+1}} \| g(\bfz(s)) - g(\bfz(t_k)) \|_{\bR^{d+1}} ds \\
    &\leq \frac{1}{\Delta t} \int_{t_k}^{t_{k+1}} {\rm Lip}(g) \| \bfz(s) - \bfz(t_k) \|_{\bR^{d+1}} ds \\
    &\leq \frac{{\rm Lip}(g)}{\Delta t} \int_{t_k}^{t_{k+1}} \|g\|_{L^\infty(\cZ([a, b]))} (s - t_k) ds \\
    &\leq \frac{{\rm Lip}(g)}{\Delta t} \|g\|_{L^\infty(\cZ([a, b]))} \frac{(\Delta t)^2}{2} \leq C \Delta t,
\end{align*}
where $C={\rm Lip}(g) \|g\|_{L^\infty(\cZ([a, b]))}/2$.

Now, let $\mathbf{e}_k = \frac{1}{\Delta t}(\bfz(t_{k+1}) - \bfz(t_k)) - g_{\rm NN}(\bfz(t_k))$ be the step $k$ truncation error for Euler method of solving the ODE \eqref{eq:odeapp} while the governing equation is $g_{\rm NN}$ instead of $g$. We have
\[\bfz(t_{k+1}) = \bfz(t_k) + \Delta t {g_{\rm NN}}(\bfz(t_k)) + \Delta t \mathbf{e}_k. \]
This local truncation error satisfies the error bound:
\begin{align*} 
\|\mathbf{e}_k \|_{\bR^{d+1}} 
    &= \left\|\frac{1}{\Delta t}(\bfz(t_{k+1}) - \bfz(t_k)) - g_\NN(\bfz(t_k))\right\|_{\bR^{d+1}} \\
    &\leq \left\|\frac{1}{\Delta t}(\bfz(t_{k+1}) - \bfz(t_k)) - g(\bfz(t_k)) \right\|_{\bR^{d+1}} + \|g(\bfz(t_k)) - g_\NN(\bfz(t_k)) \|_{\bR^{d+1}} \\
    &\leq C \Delta t + \delta. 
\end{align*}
Then we decompose
\begin{align*}
\bfz(t_{k+1}) - \hat{\bfz}_{k+1} 
    &= \bfz(t_k) + \Delta t g_\NN(\bfz(t_k)) + \Delta t \mathbf{e}_k - \hat{\bfz}_k - \Delta t g_\NN(\hat{\bfz}_k) \\
    &= (\bfz(t_k) - \hat{\bfz}_k) + \Delta t (g_\NN(\bfz(t_k)) - g_\NN(\hat{\bfz}_k)) + \Delta t \mathbf{e}_k.
\end{align*}
This means 
\[\|\bfz(t_{k+1}) - \hat{\bfz}_{k+1}\|_{\bR^{d+1}} \leq \|\bfz(t_k) - \hat{\bfz}_k \|_{\bR^{d+1}} + \Delta t \|g_\NN(\bfz(t_k)) - g_\NN(\hat{\bfz}_k)\|_{\bR^{d+1}} + \Delta t \|\mathbf{e}_k\|_{\bR^{d+1}}.\]

Now $\|g_\NN(\bfz(t_k)) - g_\NN(\hat{\bfz}_k)\|_{\bR^{d+1}}
    \leq \|g_\NN(\bfz(t_k)) - g(\bfz(t_k))\|_{\bR^{d+1}} + \|g(\bfz(t_k)) - g(\hat{\bfz}_k)\|_{\bR^{d+1}} + \|g(\hat{\bfz}_k) - g_\NN(\hat{\bfz}_k)\|_{\bR^{d+1}}
     \leq 2 \delta + {\rm Lip}(g) \| \bfz(t_k) - \hat{\bfz}_k \|_{\bR^{d+1}}$. So we have the inequality 

\[ \|\bfz(t_{k+1}) - \hat{\bfz}_{k+1}\|_{\bR^{d+1}} \leq \left(1 + \Delta t {\rm Lip}(g) \right) \|\bfz(t_k) - \hat{\bfz}_k \|_{\bR^{d+1}} + \Delta t( 2\delta + \|\mathbf{e}_k\|_{\bR^{d+1}}). \]

Since $\|\bfz(t_1) - \hat{\bfz}_1\|_{\bR^d} = 0$, this implies
\begin{align*}
\|\bfz(t_{k+1})& - \hat{\bfz}_{k+1}\|_{\bR^{d+1}} \\
&\leq \Delta t \sum_{i=1}^k \left(1 + \Delta t {\rm Lip}(g)\right)^{k-i} (2\delta + \|\bfe_i\|_{\bR^{d+1}}) \leq e^{{\rm Lip}(g) T} \Delta t \sum_{i=1}^k (2\delta + \|\bfe_i\|_{\bR^{d+1}}).
\end{align*}
Therefore (since $k \Delta t < T$ and $\|\mathbf{e}_k \|_{\bR^{d+1}} \leq  C \Delta t + \delta$) 
\begin{align*} 
\|\bfz(t_{k+1}) - \hat{\bfz}_{k+1}\|_{\bR^{d+1}} &\leq e^{{\rm Lip}(g) T} \Delta t \sum_{i=1}^k (3 \delta + C \Delta t) \\
&= e^{{\rm Lip}(g) T} (k \Delta t) (3 \delta + C \Delta t) \leq 3 T e^{{\rm Lip}(g) T} \delta + C T e^{{\rm Lip}(g) T} \Delta t. 
\end{align*}

Now since $\Delta t = \frac{T}{\sfT-1}$, if we have $\sfT - 1 > \frac{2 C T^2 e^{{\rm Lip}(g) T}}{\epsilon}$, then

\begin{align*}
3 T e^{{\rm Lip}(g) T} \delta &+ C T e^{{\rm Lip}(g) T} \Delta t \\
&\leq \frac{3 T e^{{\rm Lip}(g) T} \epsilon}{6 T e^{{\rm Lip}(g) T} } + \frac{CT^2 e^{{\rm Lip}(g) T}}{\sfT - 1} \leq \frac{\epsilon}{2} + (C T^2 e^{{\rm Lip}(g) T}) \frac{\epsilon}{2 C T^2 e^{{\rm Lip}(g) T}} = \epsilon. \end{align*}
\end{proof}

\begin{lemma}[Transcoder Approximation]
\label{lemma:transcoder}
 Suppose Assumption \ref{assum:mfd} holds. Fix an $i \in [\sfS]$, and for any $\epsilon > 0$, there exists a $\sT_{\NN} \in \cF_{\NN}(d+1, d+1, L_\sT, W_\sT)$ with parameters
 \begin{equation*}
 L_\sT = O(\log(\epsilon^{-1})),\quad W_\sT = O(\epsilon^{-d})
 \end{equation*}
 such that
\[
\sup_{\bfz(s_{i+1}) \in \cZ(s_{i+1})} \left\| \sT_{\NN}(\bfz(s_{i+1})) - \sE^{i+1}_*(\sD^i_*(\bfz(s_{i+1}))) \right\|_{\bR^{d+1}} < \epsilon
\]
Moreover, the range of $\sT_\NN$ is contained in $\cZ(s_{i+1})$. Here, $L_\sT = O\left(\log\left(\frac{1}{\epsilon}\right)\right)$ and $W_\sT = O\left(\epsilon^{-\frac{1}{d}}\right).$ The constants hidden in $O$ depend on $d$, $\tau_\cM$, $\Lip_{\cM(s_{i+1})}(\sE^{i+1}_*)$, $\Lip_{\cZ(s_{i+1})}(\sD^i_*)$, the volume of $\cM(s_{i+1})$, and $\sup_{x \in \cM(s_{i+1})} \|x\|_{\bR^D}$.
\end{lemma}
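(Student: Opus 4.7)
The key observation is that the oracle transcoder $\sE^{i+1}_* \circ \sD^i_*$ restricted to $\cZ(s_{i+1}) = [0,1]^d \times \{s_{i+1}\}$ has effective input and output dimension $d$, not $d+1$, since the time coordinate is pinned to $s_{i+1}$ on both sides. This is precisely what produces the width bound $O(\epsilon^{-d})$ rather than the $O(\epsilon^{-(d+1)})$ appearing in the autoencoder and propagator estimates.

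Accordingly, the plan is to first collapse the problem onto $[0,1]^d$. Define
\[
h : [0,1]^d \to [0,1]^d, \qquad h(\mathbf{y}) = \pi_{1:d}\bigl( \sE^{i+1}_*(\sD^i_*(\mathbf{y}, s_{i+1})) \bigr),
\]
where $\pi_{1:d}$ extracts the first $d$ coordinates. By Assumption \ref{assum:mfd}, $\sD^i_*$ and $\sE^{i+1}_*$ are smooth homeomorphisms between compact sets, so $h$ is Lipschitz with $\mathrm{Lip}(h) \le \mathrm{Lip}_{\cZ(s_{i+1})}(\sD^i_*) \cdot \mathrm{Lip}_{\cM(s_{i+1})}(\sE^{i+1}_*)$. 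Note that $\sD^i_*(\mathbf{y}, s_{i+1}) \in \cM(s_{i+1}) \subseteq \cM([s_{i+1}, s_{i+2}])$ lies in the domain of $\sE^{i+1}_*$, and the image of the composition lies in $\cZ(s_{i+1})$, so $h$ takes values in $[0,1]^d$. In particular $h \in \cH^1([0,1]^d; [0,1]^d)$.

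Next, I would invoke Proposition \ref{prop:lipappx} with input dimension $d$, output cube $A = [0,1]^d$, and tolerance $\epsilon$ to obtain a ReLU network $h_\NN \in \cF_\NN(d, d, L', W')$ with $L' = O(\log \epsilon^{-1})$, $W' = O(\epsilon^{-d})$, such that $\sup_{\mathbf{y} \in [0,1]^d} \| h_\NN(\mathbf{y}) - h(\mathbf{y}) \|_{\bR^d} < \epsilon$ and whose range is contained in $[0,1]^d$. The exponent $d$ (instead of $d+1$) in the width bound is the heart of the improvement claimed in the lemma.

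Finally, I would lift $h_\NN$ to a $(d+1)$-input, $(d+1)$-output network $\sT_\NN$. The first weight matrix is chosen so that the $(d+1)$th (time) coordinate of the input is zeroed out and the first $d$ coordinates are fed into $h_\NN$; the final affine layer outputs the $d$ components of $h_\NN$ as the first $d$ coordinates and a hard-coded constant $s_{i+1}$ as the $(d+1)$th coordinate (via a row of zero weights together with a bias of $s_{i+1}$). This zero-padding embedding preserves the asymptotic depth and width, so $L_\sT = O(\log \epsilon^{-1})$ and $W_\sT = O(\epsilon^{-d})$. By construction the range of $\sT_\NN$ lies in $[0,1]^d \times \{s_{i+1}\} = \cZ(s_{i+1})$. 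For any $\bfz \in \cZ(s_{i+1})$, the $(d+1)$th coordinates of $\sT_\NN(\bfz)$ and of $\sE^{i+1}_*(\sD^i_*(\bfz))$ both equal $s_{i+1}$ exactly, while the first $d$ coordinates differ by at most $\epsilon$, giving the desired uniform bound. The only subtle point is ensuring that the output sits \emph{exactly} in $\cZ(s_{i+1})$ rather than merely near it; this is handled by fixing the time output via a constant bias instead of letting the network learn it. Everything else is a routine application of Proposition \ref{prop:lipappx}.
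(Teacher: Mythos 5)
Your proposal is correct and follows essentially the same route as the paper: reduce to the Lipschitz map on the $d$-dimensional slice $\cZ(s_{i+1})\cong[0,1]^d$, apply Proposition \ref{prop:lipappx} there to get width $O(\epsilon^{-d})$, and handle the time coordinate exactly. In fact your explicit restriction-and-lifting construction (zeroing the time input, hard-coding the output bias $s_{i+1}$) fills in precisely the detail the paper's one-line remark about the $(d+1)$th component being "exactly represented" leaves implicit.
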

\begin{proof}[Proof of Lemma \ref{lemma:transcoder}] Consider the oracle transcoder $\sT_* = \sE^{i+1}_* \circ \sD^i_*$, which satisfies the Lipschitz condition: 
\[\Lip_{\cZ(s_{i+1})}(\sT_*) = \Lip_{\cZ(s_{i+1})}( \sE^{i+1}_* \circ \sD^i_*) \leq \Lip_{\cZ(s_{i+1})}(\sE^{i+1}_*) \Lip_{\cM(s_{i+1})}(\sD^i_*) < \infty.\]

The proof then follows by applying Proposition \ref{prop:lipappx} to $\sT_*$. The width of the  neural network size is exponential in $d$ instead of $d+1$, because the function $\sT_*$ leaves the $(d+1)$th component unchanged (as that component is the time), so it can be exactly represented by a neural network with no need for approximation.
\end{proof}

Now we prove Lemma \ref{lemma:weldlatentode}. We refer to Table \ref{table:thmnotations} for some important notations to be used in the proof.

\begin{proof}[Proof of Lemma \ref{lemma:weldlatentode}]
For any $i \in [\sfS]$, denote $\bm{T}_i = |\bT \cap (s_i, s_{i+1}]|$. Suppose $t_k$ is the $j$th element of $\bT \cap (s_i, s_{i+1}]$, i.e. it is in window/segment number $i$. 
We will construct encoder, decoder, propagator, and transcoder networks such that for any $t_k$, 
\begin{equation} 
\label{eq:weldproof}
\left\| \sD^i_{\NN} \circ \bigcirc_{\ell = 1}^j \sP_\NN^i \circ \bigcirc_{w=1}^{i-1} \left(\sT^w_{\NN} \circ \bigcirc_{\ell=1}^{\bm{T}_w} \sP^w_\NN\right) \circ \sE^1_{\NN} - \sF(\cdot, t_k) \right\|_{L^\infty(\cM(0))} < \epsilon. 
\end{equation}

We denote $t_{\ell}^w$ as the $\ell$th element of $\bT \cap [s_w, s_{w+1}]$. 
Note that the evolution operator  to $t_k$ is
\begin{align*}
&\sF(\cdot, t_k) \\
    &= \bigcirc_{\ell=1}^j \sF(\cdot, t^i_{\ell+1} - t^i_\ell) \circ \bigcirc_{w=1}^{i-1} \bigcirc_{\ell=1}^{\bm{T}_w} \sF(\cdot, t^w_{\ell+1} - t^w_{\ell} ) \\
    &= \sD^i_* \circ \bigcirc_{\ell = 1}^j (\sE^i_* \circ \sF(\cdot, t^i_{\ell+1} - t_\ell) \circ \sD^i_*) \circ \\
    & \quad \bigcirc_{w=1}^{i-1} \left(\sE^{w+1}_* \circ \sD^w_* \circ \bigcirc_{\ell=1}^{\bm{T}_w} (\sE^w_* \circ \sF(\cdot, t^w_{\ell+1} - t^w_{\ell} ) \circ \sD^w_*)\right) \circ \sE^1_*.
\end{align*}

We will approximate each term of the composition in this expression by neural networks, and then apply Proposition \ref{prop:composition}.

For all $w \in [\sfS]$, consider the encoders and decoders $\sE^w_*$ and $\sD^w_*$. By Lemma \ref{lemma:aeonechart}, there is a neural network $\sE^w_\NN$ with $O\left(\log^2\left(\frac{\sfS}{\epsilon}\right)\right)$ layers and width $O\left(D\left(\frac{\sfS}{\epsilon}\right)^{d+1} \right)$ and a neural network $\sD^w_\NN$ with $O\left(\log\left(\frac{\sfS}{\epsilon}\right)\right)$ layers and width $O\left(D\left(\frac{\sfS}{\epsilon}\right)^{d+1} \right)$ such that 

\begin{equation} 
\label{eq:dynamicthmae}
\begin{split}
&\| \sE_\NN^w - \sE_*^w \|_{L^\infty(\cM([s_w, s_{w+1}]))} < \frac{\epsilon}{(2\sfS + 1){\rm Lip}(\sD_*) {\rm Lip}(\sF)},
\\
&\| \sD_*^w - \sD_\NN^w \|_{L^\infty(\cZ( [s_w, s_{w+1}]))} < \frac{\epsilon}{2\sfS + 1},
\end{split}
\end{equation}
where ${\rm Lip}(\sD_*)=\max_w {\rm Lip}(\sD^w_*)$. Note, the range of $\sE_\NN^w$ is contained in $\cZ([s_w, s_{w+1}])$. 

Next, for all $w \in [\sfW]$ consider $\sP^w_*(\bfz, t) = \sE^w_*(\sF(\sD_*^w(\bfz), t))$ which is called the oracle propagator. Since $\sP^w_*$ is Lipschitz as a composition of Lipschitz functions, by Lemma \ref{lemma:propdynamic}, there is a neural network $\sP^w_\NN$ with $O\left(\log\left( \frac{\sfS}{\epsilon} \right)\right)$ layers and width $O\left(\left(\frac{\sfS}{\epsilon}\right)^{d+1}\right)$ such that 
\begin{equation}
\label{eq:dynamicthmprop}
    \| \bigcirc_{\ell=1}^{\bm{T}_w} \sP_\NN^w - \bigcirc_{\ell=1}^{\bm{T}_w} \sP^w_*(\cdot, t^w_{\ell+1} - t^w_{\ell}) \|_{L^\infty(\cZ(s_w))} < \frac{\epsilon}{(2\sfS + 1) {\rm Lip}(\sD) {\rm Lip}(\sF)}.
\end{equation}
The composition above makes sense as the range of $\sP_\NN^w$ is contained in $\cZ([s_w, s_{w+1}])$ by construction.

Finally, for all $w \in [\sfS - 1]$, consider $\sE^{w+1}_* \circ \sD^w_*$. By Lemma \ref{lemma:transcoder}, there is a neural network $\sT^w_\NN$ with $O\left(\log\left(\frac{\sfS}{\epsilon}\right)\right)$ layers and width $O\left(\left(\frac{\sfS}{\epsilon}\right)^d \right)$ such that
\begin{equation} 
\label{eq:dynamicthmtrans}
\| \sT_\NN^w - \sE_*^{w+1} \circ \sD_*^w \|_{L^\infty(\cZ (s_{w+1}))} < \frac{\epsilon}{(2\sfS + 1){\rm Lip}(\sD) {\rm Lip}(\sF)}, \end{equation}
with the range of $\sT_\NN^w$ contained in $\cZ(s_{w+1}) = [0, 1]^d \times \{s_{w+1}\}$. Theorem \ref{thm:weldlatentode} then follows by applying Proposition \ref{prop:composition} to the expression (where we indicate matching terms with the same letter and subscript):
\[\|\underbrace{\sD^i_{\NN}}_{a} \circ \underbrace{\bigcirc_{\ell = 1}^j \sP_\NN^i}_{b} \circ \bigcirc_{w=1}^{i-1} (\underbrace{\sT^w_{\NN}}_{c_w} \circ \underbrace{\bigcirc_{\ell=1}^{\bm{T}_w} \sP^w_\NN}_{d_w}) \circ \underbrace{\sE^1_{\NN}}_{e} - \]
\[\underbrace{\sD^i_*}_{a} \circ \underbrace{\bigcirc_{\ell = 1}^j \sP^w_*(\cdot, t^i_{\ell+1} - t_\ell)}_{b} \circ \bigcirc_{w=1}^{i-1} \underbrace{(\sE_*^{w+1} \circ \sD_*^w)}_{c_w} \circ \underbrace{\bigcirc_{\ell=1}^{\bm{T}_w} \sP^w_*(\cdot, t^w_{\ell+1} - t^w_{\ell+1})}_{d_w}) \circ \underbrace{\sE^1_*}_{e} \|,\]
where the norm is taken over the space ${L^\infty(\cM(0))}$. Note that we can do this because the range of each neural network in the composition is contained in the domain of the subsequent neural network (by construction). To compute the bound given in Proposition \ref{prop:composition}, we compute Lipschitz constants of compositions of functions. Both terms in the difference involve $2 + 2(i - 1) + 1 = 2i + 1$ functions. Each term in Equation \eqref{eq:composition} is the product of the approximation error of a network (encoder, propagator, transcoder, or decoder), and the Lipschitz constant of the oracle maps that come later in the approximation. We will show that each term in the sum is less than $\frac{\epsilon}{2i+1}$, so that the total error is less than $\epsilon$. Recall $t^i_{\ell+1} = t_k$ and $a_1 = t_1$.

\begin{itemize}
    \item[(a)] \textbf{Decoder}. The decoder $\sD^i_*$ corresponds to the $(2i+1)$th and last term of the sum in Equation \eqref{eq:composition}. The Lipschitz factor is over an empty composition, which is defined to be $1$ by convention. Thus the decoder term contributes an error of $\frac{\epsilon}{2\sfS + 1}$ by Equation \eqref{eq:dynamicthmae}, which is less than $ \frac{\epsilon}{2 i + 1}$ since $i \leq \sfS$.

    \item[(b)] \textbf{Final Propagator}. The final propagator is $\sP_*^i$, and it corresponds to the second to last term of the sum in Equation \eqref{eq:composition}. The Lipschitz factor is the Lipschitz constant of the decoder $\sD_*^i$. Thus this term contributes an error of (by the approximation in \eqref{eq:dynamicthmprop})

    \[ \Lip_{\cZ^i(t_k)}(\sD_*^i) \cdot \frac{\epsilon}{(2\sfS + 1) \Lip(\sD) \Lip(\sF)}  \leq \frac{\epsilon}{(2\sfS + 1) \Lip(\sF)} \leq \frac{\epsilon}{2 i + 1}.\]

    \item[(c)] \textbf{Transcoder}. For any window $m \in [i-1]$, the $m$th transcoder corresponds to a middle term in the sum in Equation \eqref{eq:composition}. For each of these terms, the error is again given by the approximation error of the corresponding transcoder network, scaled by the Lipschitz constant of the neural networks that come after. The Lipschitz constant of the composition of all functions after the $m$th transcoder are (where $\Lip(F)$ is short for $\Lip_{\cZ^m(s_{m+1})}(F)$)

{\scriptsize
    \begin{align*} 
    &\Lip\left( \sD^i_* \circ \bigcirc_{\ell = 1}^j (\sP^i_*(\cdot, t^i_{\ell+1} - t^i_\ell)) \circ \bigcirc_{w=m+1}^{i-1} \left(\sE^{w+1}_* \circ \sD^w_* \circ \bigcirc_{\ell=1}^{\bm{T}_w} (\sP^w_*(\cdot, t^w_{\ell+1} - t^w_{\ell} )\right) \right) \\
    &=\Lip \Big( \sD^i_* \circ \bigcirc_{\ell = 1}^j (\sE^i_* \circ \sF(\cdot, t^i_{\ell+1} - t^i_\ell) \circ \sD^i_*) \circ \\
    &\quad \bigcirc_{w=m+1}^{i-1} \left(\sE^{w+1}_* \circ \sD^w_* \circ \bigcirc_{\ell=1}^{\bm{T}_w} (\sE^w_* \circ \sF(\cdot, t^w_{\ell+1} - t^w_{\ell} ) \circ \sD^w_*)\right)\Big) \\
    &= \Lip\Big(\sD^i_* \circ \sE_*^i \circ \sF(\cdot, t^i_{j+1} - s_i) \circ \sD_*^i \circ \\
    &\quad \bigcirc_{w=m+1}^{i-1} \left( \sE_*^{w+1} \circ \sD_*^w \circ \sE_*^w \circ \sF(\cdot, s_{w+1} - s_w ) \circ \sD_*^w \right)\Big) \\
    &= \Lip\left(\sF(\cdot, t_k - s_i) \circ \sD_*^i \circ \bigcirc_{w=m+1}^{i-1} \left( \sE_*^{w+1} \circ \sF(\cdot, s_{w+1} - s_w ) \circ \sD_*^w \right) \right) \\
    &= \Lip\left(\sF(\cdot, t_k - s_i) \circ \sD_*^i \circ \sE_*^i \circ \sF(\cdot, s_i - s_{m+1} ) \circ \sD_*^{m+1} \right) \\
    &= \Lip\left(\sF(\cdot, t_k - s_i) \circ \sF(\cdot, s_i - s_{m+1} ) \circ \sD_*^{m+1} \right) \\
    &= \Lip\left(\sF(\cdot, t_k - s_{m+1} ) \circ \sD_*^{m+1} \right) \leq \Lip_{\cM(s_{m+1})}(\sF(\cdot, t_k - s_{m+1} )) \Lip_{\cZ^{m+1}(s_{m+1})}(\sD_*^{m+1}). 
    \end{align*}
    }

     Thus, by scaling the approximation guarantee in Equation \eqref{eq:dynamicthmtrans}, we see that each transcoder term contributes error at most

    \begin{align*}
    \Lip&_{\cM(s_{m+1})}(\sF(\cdot, t_k - s_{m+1} )) \Lip_{\cZ^{m+1}(s_{m+1})}(\sD_*^{m+1}) \frac{  \epsilon}{(2\sfS + 1) \Lip(\sD) \Lip(\sF)}\\
    & \leq \frac{\epsilon}{2\sfS + 1} \leq \frac{\epsilon}{2i + 1}. 
    \end{align*}
    
    \item[(d)] \textbf{Intermediate Propagators}. For any window $m \in [i-1]$, the $m$th propagator corresponds to a middle term in the sum in Equation \eqref{eq:composition}, similar to the transcoder case. The Lipschitz constant of the composition of all functions after the $m$th oracle propagator are (where $\Lip(F)$ is short for $\Lip_{\cZ^m(s_{m+1})}(F)$)

    {\scriptsize
    \begin{align*} 
    &\Lip\Big( \sD^i_* \circ \bigcirc_{\ell = 1}^j (\sP^i_*(\cdot, t^i_{\ell+1} - t^i_\ell) \circ \sD^i_*) \circ \\
    &\quad \bigcirc_{w=m+1}^{i-1} \left(\sE^{w+1}_* \circ \sD^w_* \circ \bigcirc_{\ell=1}^{\bm{T}_w} (\sP^w_*(\cdot, t^w_{\ell+1} - t^w_{\ell} )\right) \circ \sE_*^{m+1} \circ \sD_*^{m}\Big) \\
    &= \Lip( \sD^i_* \circ \bigcirc_{\ell = 1}^j (\sE^i_* \circ \sF(\cdot, t^i_{\ell+1} - t^i_\ell) \circ \sD^i_*) \circ \\
    & \quad \bigcirc_{w=m+1}^{i-1} \left(\sE^{w+1}_* \circ \sD^w_* \circ \bigcirc_{\ell=1}^{\bm{T}_w} (\sE^w_* \circ \sF(\cdot, t^w_{\ell+1} - t^w_{\ell} ) \circ \sD^w_*)\right)  \circ \sE_*^{m+1} \circ \sD_*^{m} ) \\
    &= \Lip(\sD^i_* \circ \sE_*^i \circ \sF(\cdot, t^i_{j+1} - s_i) \circ \sD_*^i \circ \\
    &\quad \bigcirc_{w=m+1}^{i-1} \left( \sE_*^{w+1} \circ \sD_*^w \circ \sE_*^w \circ \sF(\cdot, s_{w+1} - s_w ) \circ \sD_*^w ) \circ \sE_*^{m+1} \circ \sD_*^{m}\right) \\
    &= \Lip\left(\sF(\cdot, t_k - s_i) \circ \sD_*^i \circ \bigcirc_{w=m+1}^{i-1} \left( \sE_*^{w+1} \circ \sF(\cdot, s_{w+1} - s_w ) \circ \sD_*^w \right) \circ \sE_*^{m+1} \circ \sD_*^{m} \right) \\
    &= \Lip\left(\sF(\cdot, t_k - s_i) \circ \sD_*^i \circ \sE_*^i \circ \sF(\cdot, s_i - s_{m+1} ) \circ \sD_*^{m+1} \circ \sE_*^{m+1} \circ \sD_*^{m} \right) \\
    &= \Lip\left(\sF(\cdot, t_k - s_i) \circ \sF(\cdot, s_i - s_{m+1} ) \circ \sD_*^m \right) \\
    &= \Lip\left(\sF(\cdot, t_k - s_{m+1} ) \circ \sD_*^{m} \right) \leq \Lip_{\cM(s_{m+1})}(\sF(\cdot, t_k - s_{m+1} )) \Lip_{\cZ^{m}(s_{m+1})}(\sD_*^{m}). 
    \end{align*}
    }%

    Thus, by scaling the approximation guarantee in Equation \eqref{eq:dynamicthmprop}, we see that each intermediate propagator term contributes error at most

    \begin{align*}
    \Lip&_{\cM(s_{m+1})}(\sF(\cdot, t_k - s_{m+1} )) \Lip_{\cZ^{m}(s_{m+1})}(\sD_*^{m}) \frac{  \epsilon}{(2\sfW + 1) \Lip(\sD) \Lip(\sF)} \\
    &\leq \frac{\epsilon}{2\sfW + 1} \leq \frac{\epsilon}{2i + 1}. 
    \end{align*}
    
    \item[(e)] \textbf{Encoder}. The encoder term $\sE^1$ corresponds to the first term of the sum in  Equation \eqref{eq:composition}. The Lipschitz factor can be computed as:

{\scriptsize
    \begin{align*} 
    &\Lip_{\cZ^1(t_1)}\left( \sD^i_* \circ \bigcirc_{\ell = 1}^j (\sP^i_*(\cdot, t^i_{\ell+1} - t^i_\ell) \circ \sD^i_*) \circ \bigcirc_{w=1}^{i-1} \left(\sE^{w+1}_* \circ \sD^w_* \circ \bigcirc_{\ell=1}^{\bm{T}_w} (\sP^w_*(\cdot, t^w_{\ell+1} - t^w_{\ell} )\right) \right) \\
    &=\Lip_{\cZ^1(t_1)}\Big( \sD^i_* \circ \bigcirc_{\ell = 1}^j (\sE^i_* \circ \sF(\cdot, t^i_{\ell+1} - t^i_\ell) \circ \sD^i_*) \circ \\
    &\quad \bigcirc_{w=1}^{i-1} \left(\sE^{w+1}_* \circ \sD^w_* \circ \bigcirc_{\ell=1}^{\bm{T}_w} (\sE^w_* \circ \sF(\cdot, t^w_{\ell+1} - t^w_{\ell} ) \circ \sD^w_*)\right) \Big) \\
    &= \Lip_{\cZ^1(t_1)}\Big(\sD^i_* \circ \sE_*^i \circ \sF(\cdot, t^i_{j+1} - s_i) \circ \sD_*^i \circ \\
    &\quad \bigcirc_{w=1}^{i-1} \left( \sE_*^{w+1} \circ \sD_*^w \circ \sE_*^w \circ \sF(\cdot, s_{w+1} - s_w ) \circ \sD_*^w \right)\Big) \\
    &= \Lip_{\cZ^1(t_1)}\left(\sF(\cdot, t_k - s_i) \circ \sD_*^i \circ \bigcirc_{w=1}^{i-1} \left( \sE_*^{w+1} \circ \sF(\cdot, s_{w+1} - s_w ) \circ \sD_*^w \right)\right) \\
    &= \Lip_{\cZ^1(t_1)}\left(\sF(\cdot, t_k - s_i) \circ \sD_*^i \circ \sE_*^i \circ \sF(\cdot, s_i - s_1 ) \circ \sD_*^1 \right) \\
    &= \Lip_{\cZ^1(t_1)}\left(\sF(\cdot, t_k - t_1) \circ \sF(\cdot, s_i - s_1 ) \circ \sD_*^1 \right) \\
    &= \Lip_{\cZ^1(t_1)}\left(\sF(\cdot, t_k - t_1 ) \circ \sD_*^1 \right) \leq \Lip_{\cM(t_1)}(\sF(\cdot, t_k - t_1 )) \Lip_{\cZ^1(t_1)}(\sD_*^1). 
    \end{align*}
}%

    Thus, by scaling the approximation guarantee in Equation \eqref{eq:dynamicthmae}, we see that the encoder term contributes error at most

    \begin{align*}
        \Lip_{\cM(t_1)}&(\sF(\cdot, t_k - t_1 )) \Lip_{\cZ^1(t_1)}(\sD_*^1)\frac{  \epsilon}{(2\sfS + 1) \Lip(\sD) \Lip(\sF)} 
        \\ &\leq \frac{\epsilon}{2\sfS + 1} \leq \frac{\epsilon}{2i + 1}.
    \end{align*} 
\end{itemize}

\end{proof}

\subsection{Proof of Theorem \ref{thm:weldlatentode}}
\label{app:proveweldlatentode}
In this section, we prove Theorem \ref{thm:weldlatentode}, which follows quickly from Lemma \ref{lemma:weldlatentode}. The key is to construct a WeldNet model with more windows that implements exactly the same function as the WeldNet constructed before.

\begin{proof}[Proof of Theorem \ref{thm:weldlatentode}]
Let $\pi:[\sfW]\rightarrow[\sfS]$ be the function that indicates (with the index) which segment each window falls inside. If $\sfW > \sfS$, this function is not one-to-one, but we can still define a (left) ``inverse'' $\pi^{[-1]}(s)=\min_i \{i: \pi(i)=s\}$. It is easy to see that for all $i\in[\sfS]$, we have that $w_{\pi^{[-1]}(i)} = s_i$.

the $\sfS$-window WeldNet from Lemma \ref{lemma:weldlatentode} denoted $\overline{\sW_\NN}$ with components (for all $s \in [\sfS]$) $\overline{\sE^s_\NN}$, $\overline{\sD^s_\NN}$, $\overline{\sP^s_\NN}$, and (for $s < \sfS$) $\overline{\sT^s_\NN}$ such that for any $k \in [\sfT]$ with $k$ being the $j$th element of  $\sfT \cap [s_i, s_{i+1}))$,

\[ \sup_{\bfx(0) \in \cM(0)} \left\|\overline{\sW_\NN}(\bfx(0), t_k) - \sF(\bfx(0), t_k)\right\|_{\bR^D} < \epsilon.\]

We now construct a $\sfW$-window WeldNet that implements the same function as the $\sfS$-window $\overline{\sW_\NN}$. For each $i \in [\sfW]$, consider the neural networks
\[ \sE_\NN^i = \overline{\sE_\NN^{\pi(i)}}, \sD_\NN^i = \overline{\sD_\NN^{\pi(i)}},  \sP_\NN^i = \overline{\sP_\NN^{\pi(i)}},\]
and for each $i \in [\sfW]$ such that $w_{i+1}=s_{j+1}$ for some $j \in [\sfS]$ (i.e  windows that end at the end of a segment), we define $\sT_\NN^i = \overline{\sT_\NN^j}$, and we construct every other transcoder to be the identity function in $\bR^{d+1}$ given by Proposition \ref{prop:idappx}.

Since for all $\bfx(0) \in \cM(0)$, we have $\|\overline{\sW_\NN}(\bfx(0), t_k) - \sF(\bfx(0), t_k)\|_{\bR^D} < \epsilon$, we can complete the proof by showing that $\sW_\NN(\bfx(0), t_k) = \overline{\sW_\NN}(\bfx(0), t_k)$. We denote $\bm{T}_i = |\bT \cap (w_i,w_{i+1}]|$ and $\overline{\bm{T}_i} = |\bT \cap (s_i,s_{i+1}]|$. Suppose that $t_k$ is the $j$th element of $\bT \cap (w_i, w_{i+1}]$ and also that $t_k$ is the $j'$-th element of $\bT \cap (s_{i'}, s_{i'+1}]$ (which means $\pi(i) = i'$). Then we have 

\begin{align*}
&\sW_\NN(\bfx(0), t_k) =  \sD^i_{\NN} \circ \bigcirc_{\ell = 1}^j \sP_\NN^i \circ \bigcirc_{w=1}^{i-1} \left(\sT^w_{\NN} \circ \bigcirc_{\ell=1}^{\bm{T}_w} \sP^w_\NN\right) \circ \sE^1_{\NN}(\bfx(0)) \text{ and } \\  
&\overline{\sW_\NN}(\bfx(0), t_k) = \overline{\sD^{i'}_{\NN}} \circ \bigcirc_{\ell = 1}^j \overline{\sP_\NN^{i'}} \circ \bigcirc_{s=1}^{i'-1} \left(\overline{\sT^s_{\NN}} \circ \bigcirc_{\ell=1}^{\overline{\bm{T}_s}} \overline{\sP^s_\NN}\right) \circ \overline{\sE^1_{\NN}}(\bfx(0)).
\end{align*} 

Note that $\sE_\NN^1 = \overline{\sE^{\pi(1)}_\NN}$ since the first window is inside of the first segment. Next, note that $\sD_\NN^i = \overline{\sD_\NN^{\pi(i)}} = \overline{\sD_\NN^{i'}}$ and $\sP_\NN^i = \overline{\sP_\NN^{\pi(i)}} = \overline{\sP_\NN^{i'}}$. Finally, we will decompose the composition of propagators and transcoders to be over each segment before being over each window. 

Let $n_s^m$ denote the index of the $m$th window in the $s$th segment, and let $\bm{n}_s$ denote the index of the final window in the $s$th segment. Note that all transcoders except for the last window in each segment implements the identity, i.e. $\sT_\NN^{n_s^m} = I_{d+1}$ if $m < |\pi^{-1}(\{s\})|$, where $\pi^{-1}(\{s\})$ is the pre-image of $s$ by $\pi$, which is the set of all window indices inside of Segment $s$; and the last transcoder within segment $s$ of $\sW_\NN$ is equal to the $s$th transcoder of $\overline{\sW_\NN}$, i.e. $\sT_\NN^{\bm{n}_s} = \overline{\sT_\NN^s}$. Also by construction $\sP^{n^m_s}_\NN = \overline{\sP^s_\NN}$. Then we decompose the non encoder/decoder terms of the above composition

\begin{align*} 
&\bigcirc_{\ell = 1}^j \sP_\NN^{i} \circ \bigcirc_{w=1}^{i-1} \left(\sT^w_{\NN} \circ \bigcirc_{\ell=1}^{\bm{T}_w} \sP^w_\NN\right) \\
& = \bigcirc_{\ell = 1}^j \overline{\sP_\NN^{i}} \circ \bigcirc_{w=1}^{i-1} \left(\sT^w_{\NN} \circ \bigcirc_{\ell=1}^{\bm{T}_w} \sP^w_\NN\right) \\
& = \bigcirc_{\ell = 1}^j \overline{\sP_\NN^{i'}} \circ \boldsymbol{\bigcirc}_{s=1}^{i'-1} \left[ \bigcirc_{m=1}^{|\pi^{-1}(s)|} \left(\sT^{n_s^m}_{\NN} \circ \bigcirc_{\ell=1}^{\bm{T}_{n_s^m}} \sP^{n_s^m}_\NN\right)\right] \\
& = \bigcirc_{\ell = 1}^j \overline{\sP_\NN^{i'}} \circ \boldsymbol{\bigcirc}_{s=1}^{i'-1} \left[ \bigcirc_{m=1}^{|\pi^{-1}(s)|} \left(\sT^{n_s^m}_{\NN} \circ \bigcirc_{\ell=1}^{\bm{T}_{n_s^m}} \overline{\sP^s_\NN}\right)\right] \\
& = \bigcirc_{\ell = 1}^j \overline{\sP_\NN^{i'}} \circ \boldsymbol{\bigcirc}_{s=1}^{i'-1} \left[ \left(\sT^{\bm{n}_s}_{\NN} \circ \bigcirc_{\ell=1}^{\bm{T}_{\bm{n}_s}} \overline{\sP^s_\NN}\right) \circ \bigcirc_{m=1}^{|\pi^{-1}(s)|-1} \left(\sT^{n_s^m}_{\NN} \circ \bigcirc_{\ell=1}^{\bm{T}_{n_s^m}} \overline{\sP^{s}_\NN}\right)\right] \\ 
& = \bigcirc_{\ell = 1}^j \overline{\sP_\NN^{i'}} \circ \boldsymbol{\bigcirc}_{s=1}^{i'-1} \left[ \overline{\sT^s_{\NN}} \circ \bigcirc_{\ell=1}^{\bm{T}_{\bm{n}_s}} \overline{\sP^s_\NN} \circ \bigcirc_{m=1}^{|\pi^{-1}(s)|-1} \left(I_{d+1} \circ \bigcirc_{\ell=1}^{\bm{T}_{n_s^m}} \overline{\sP^{s}_\NN}\right)\right] \\
& = \bigcirc_{\ell = 1}^j \overline{\sP_\NN^{i'}} \circ \boldsymbol{\bigcirc}_{s=1}^{i'-1} \left[ \overline{\sT^s_{\NN}} \circ \bigcirc_{\ell=1}^{\bm{T}_{\bm{n}_s}} \overline{\sP^s_\NN} \circ \bigcirc_{m=1}^{|\pi^{-1}(s)|-1}  \bigcirc_{\ell=1}^{\bm{T}_{n_s^m}} \overline{\sP^{s}_\NN}\right] \\
& = \bigcirc_{\ell = 1}^j \overline{\sP_\NN^{i'}} \circ \boldsymbol{\bigcirc}_{s=1}^{i'-1} \left[ \overline{\sT^s_{\NN}} \circ \bigcirc_{\ell=1}^{\overline{\bm{T}_s}} \overline{\sP^{s}_\NN}\right].
\end{align*}

Thus,

\begin{align*}
\sW_\NN(\bfx(0), t_k) 
    &=  \sD^i_{\NN} \circ  \bigcirc_{\ell = 1}^j \sP_\NN^i \circ \bigcirc_{w=1}^{i-1} \left(\sT^w_{\NN} \circ \bigcirc_{\ell=1}^{\bm{T}_w} \sP^w_\NN\right) \circ \sE^1_{\NN} \\ 
    &=  \sD^i_{\NN}  \circ \bigcirc_{\ell = 1}^j \overline{\sP_\NN^{i'}} \circ \boldsymbol{\bigcirc}_{s=1}^{i'-1} \left[ \overline{\sT^s_{\NN}} \circ \bigcirc_{\ell=1}^{\overline{\bm{T}_s}} \overline{\sP^{s}_\NN}\right] \circ  \sE^1_{\NN} \\
    &= \overline{\sD^{i'}_\NN} \circ \bigcirc_{\ell = 1}^j \overline{\sP_\NN^{i'}} \circ \boldsymbol{\bigcirc}_{s=1}^{i'-1} \left[ \overline{\sT^s_{\NN}} \circ \bigcirc_{\ell=1}^{\overline{\bm{T}_s}} \overline{\sP^{s}_\NN}\right] \circ  \overline{\sE^1_\NN} \\
    &= \overline{\sW_\NN}(\bfx(0), t_k).
\end{align*}

Now note that the size of all autoencoders, all propagators, and $\sfS - 1$ transcoders of $\sW_\NN$ are exactly equal to the size of (corresponding) components in $\overline{\sW_\NN}$. For the $\sfW - \sfS$ identity transcoders of $\sW_\NN$, we can use any network size as a result of Proposition \ref{prop:idappx}. This completes the proof.
\end{proof}

\subsection{Proof of Theorem \ref{thm:weldgeneral}}
\label{app:proveweldgeneral}
In this section, we prove Theorem \ref{thm:weldgeneral}. First, we prove a version of Lemma \ref{lemma:propdynamic} but without the dynamics assumption. 

\begin{lemma}[Propagator Approximation in General Case]
\label{lemma:propappx}
Let $\sP_* : \cZ([0, T]) \times [0, T] \rightarrow \cZ([0, T])$ be a Lipschitz function such that
\begin{itemize}
    \item For all $x \in \cZ([0, T])$, $\sP_*(x, 0) = x$,
    \item For all $x \in \cZ([0, T]), t \in [0, T], s \in [0, T - t]$,
    $\sP_*(\sP_*(x, t), s) = \sP_*(x, t + s).$  
\end{itemize}
Let $0 = t_1 < t_2 < \dots < t_\sfT = T$ be a grid for $[0, T]$ of $\sfT$ points, and $\epsilon > 0$. Then there is a function $\sP_\NN \in \cF_{\NN}(d+1, d+1, L, W)$ with parameters
\begin{equation*}
L = O\left(\sfT \log\left(\frac{\sfT}{\epsilon}\right)\right),\quad W = O\left(\left(\frac{\sfT }{\epsilon}\right)^d\right),
\end{equation*}
such that for all $k \in [\sfT-1]$ and $i \in [\sfT - k]$, we have
\[ \sup_{\bfz(t_k) \in \cZ(t_k)} \|\left(\bigcirc_{j=1}^i \sP_\NN \right)(\bfz(t_k)) - \sP_*(\bfz(t_k), t_{k+i} - t_k) \|_{\bR^{d+1}}\leq \varepsilon.  \]
Moreover, the range of $\sP_\NN$ is contained in $\cZ([0, 1])$. The constants hidden in $O$ depend on $d$ and ${\rm Lip}_*(\sP)=\sup_{j \in [\sfT - 1], i \le \sfT-j} \Lip_{\cZ(t_j)}(\sP_*(\cdot, t_{j+i} - t_{j}))$. 
\end{lemma}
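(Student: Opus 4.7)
The plan is to build $\sP_{\NN}$ as a ReLU-implementable time-indexed family of $d$-dimensional approximations, rather than a single $(d+1)$-dimensional approximation of $\sP_\ast(\cdot,\Delta t)$. The fact that the width bound scales as $(\sfT/\epsilon)^d$ instead of $(\sfT/\epsilon)^{d+1}$ signals exactly this split: on each slice $\cZ(t_k)=[0,1]^d\times\{t_k\}$ the restriction $P_k:=\sP_\ast(\cdot,t_{k+1}-t_k)|_{\cZ(t_k)}$ is effectively a Lipschitz map in $d$ variables, which can be approximated more economically than a joint function in $d+1$ variables.

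First I would apply Proposition~\ref{prop:lipappx} to construct, for each $k\in[\sfT-1]$, a ReLU network $\hat{P}_k$ of depth $O(\log(\sfT/\epsilon))$ and width $O((\sfT/\epsilon)^d)$ that approximates the spatial part of $P_k$ to accuracy $\delta=O(\epsilon/\sfT)$ on $[0,1]^d$, with output clipped to $[0,1]^d$ via Proposition~\ref{prop:appxtools}(Part~2). Next I would splice the $\hat{P}_k$'s into a single network using time-gating. Let $\psi_k$ be a ReLU hat centered at $t_k$ that equals $1$ at $t_k$ and vanishes at $t_j$ for $j\neq k$; the telescoping identity
\begin{equation*}
\sP_{\NN}(\bfu,t) \;=\; \Bigl(\bfu+\sum_{k=1}^{\sfT-1}\psi_k(t)\bigl[\hat{P}_k(\bfu)-\bfu\bigr],\; t+\Delta t\Bigr)
\end{equation*}
evaluates exactly to $(\hat{P}_k(\bfu),t_{k+1})$ whenever $t=t_k$. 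Using Proposition~\ref{prop:appxtools}(Part~3) to concatenate the $\sfT-1$ summands (each an approximate ReLU product of $\psi_k(t)$ with a small-dimensional Lipschitz block) gives total depth $O(\sfT\log(\sfT/\epsilon))$ and width $O((\sfT/\epsilon)^d)$, matching the claim.

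For the iteration error, I would let $\hat{\bfz}_j=(\bigcirc_{\ell=1}^j\sP_{\NN})(\bfz(t_k))$ and $\bfz_j^\ast=\sP_\ast(\bfz(t_k),t_{k+j}-t_k)$ and use the telescoping representation
\begin{equation*}
\hat{\bfz}_i-\bfz_i^\ast \;=\; \sum_{j=0}^{i-1}\bigl[(\sP_{\NN})^{i-j-1}\bigl(\sP_{\NN}(\bfz_j^\ast)\bigr)-(\sP_{\NN})^{i-j-1}\bigl(\sP_\ast(\bfz_j^\ast,t_{j+1}-t_j)\bigr)\bigr].
\end{equation*}
Each summand has norm at most $\mathrm{Lip}((\sP_{\NN})^{i-j-1})\cdot\delta \le \mathrm{Lip}_\ast(\sP)\cdot\delta$, after verifying that the iterated network remains uniformly Lipschitz—this follows from $\sP_\ast$ being uniformly Lipschitz over all multi-step horizons (precisely the content of $\mathrm{Lip}_\ast(\sP)$) together with the smallness of the per-step approximation error. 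Summing at most $\sfT$ such terms and choosing $\delta=O(\epsilon/\sfT)$ with $\mathrm{Lip}_\ast(\sP)$ absorbed into the hidden constants yields the required $\epsilon$ bound.

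The main obstacle is the gating construction: one must realize $\psi_k(t)[\hat{P}_k(\bfu)-\bfu]$ using only ReLU operations without inflating the width, since ReLU products are only approximate. The key observation that saves the argument is that at each grid point $t_k$ exactly one $\psi_k$ equals $1$ and the rest vanish, so the telescoping sum collapses to an exact evaluation on $\bT$---precisely the regime in which we iterate. A secondary subtlety is ensuring that the iterated outputs of $\sP_{\NN}$ stay inside $\cZ([0,T])$ so that subsequent evaluations fall in the domain of guaranteed accuracy; this is enforced by the clipping step applied to each $\hat{P}_k$ together with the explicit time-update $t\mapsto t+\Delta t$ taking grid points to grid points.
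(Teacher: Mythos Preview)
Your overall architecture matches the paper's: approximate the one-step map on each time slice separately (this is what buys the $d$-dimensional width scaling) and then gate by time to assemble a single network. However, two details contain genuine gaps.

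\textbf{The telescoping is in the wrong direction.} Your decomposition applies the \emph{network} iterates $(\sP_{\NN})^{i-j-1}$ to a one-step discrepancy, so each summand is bounded by $\mathrm{Lip}\bigl((\sP_{\NN})^{i-j-1}\bigr)\cdot\delta$, and you then assert this Lipschitz factor is at most $\mathrm{Lip}_*(\sP)$ because the network is sup-norm close to the oracle. That inference is invalid: sup-norm proximity gives no control whatsoever on Lipschitz constants (consider $f(x)=x$ versus $\hat f(x)=x+\epsilon\sin(x/\epsilon^2)$), and even if each $\hat P_k$ happened to inherit the oracle's one-step Lipschitz constant, the composition would accrue a \emph{product} of those constants, not the multi-step constant of $\sP_*$. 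The paper's telescoping (this is precisely Proposition~\ref{prop:composition}) runs the other way: in each summand the later factors are \emph{oracle} maps, so the relevant constants are $\mathrm{Lip}_{\cZ(t_{j+1})}\bigl(\sP_*(\cdot, t_{k+i}-t_{j+1})\bigr)\le\mathrm{Lip}_*(\sP)$ by definition, with no circularity. The range-clipping you mention is then exactly what guarantees that the earlier network iterates land in the domain where the next one-step error bound applies.

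\textbf{The gating cannot rely on approximate products.} You want $\psi_k(t)\bigl[\hat P_k(\bfv)-\bfv\bigr]$, a scalar--vector product, and propose implementing it by approximate ReLU multiplication. But then the sum does not ``collapse to an exact evaluation'' at grid points; it is only approximately correct, and this extra error is injected at every iterate. The paper avoids multiplication entirely by a subtractive gate: since each $p^k_{\NN}(\bfv)\in[0,1]^d$, one sets $\tilde p^k_{\NN}(\bfv,t)=\sigma\bigl((p^k_{\NN}(\bfv),t_{k+1}) - C\lvert t-t_k\rvert\,\mathbf{1}_{d+1}\bigr)$ with $C$ large enough that $C\lvert t_j-t_k\rvert\ge 1$ for $j\neq k$. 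This is \emph{exactly} zero off the $k$th grid point and \emph{exactly} $(p^k_{\NN}(\bfv),t_{k+1})$ on it, so summing over $k$ via Proposition~\ref{prop:appxtools} Part~3 yields a single network that is exact on $\bT$.
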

\begin{proof}[Proof of Lemma \ref{lemma:propappx}]
For all $j \in [\sfT - 1]$, define $p_*^j : [0, 1]^d \rightarrow [0,1]^d$ by $p_*^j(\bfv) = [\sP_*((\bfv, t_j), t_{j+1} - t_j)]_{1, \dots, d}$ for all $\bfv \in [0,1]^d$. In other words, $p_*^j$ returns the first $d$-components (i.e. without time) of the code propagated from time $t_j$ to $t_{j+1}$, and note that this is Lipschitz. 

By Proposition \ref{prop:lipappx}, there is a neural network $p_\NN^j \in \cF_{\NN}(d, d, L_j, W_j)$ such that $\|p_\NN^j - p_*^j\|_{L^\infty([0, 1]^d)} < \frac{\epsilon}{(\sfT - 1) {\rm Lip}(\sP_*)}$, with $O\left(\log\left(\frac{\sfT}{\epsilon} \right)\right)$ layers and width $O\left( \left(\frac{\sfT}{\epsilon}\right)^d \right)$. Moreover, the range of of $p^j_\NN$ can be restricted to $[0,1]^d$ by Proposition \ref{prop:appxtools}(Part 2).

For all $k \in [\sfT-1]$ and $i \in [\sfT - k]$, note that $\bigcirc_{j=k}^{k+i-1} p_*^j = \bigcirc_{j=k}^{k+i-1} (\sP_*(\cdot, t_{j+1} - t_j)) = \sP_*(\cdot, t_{k+i}-t_k)$. By Proposition \ref{prop:composition}, we have 
\begin{align*}
\sup_{\bfv \in [0,1]^d} & \left\| \left(\bigcirc_{j=k}^{k+i-1} p^j_\NN\right)(\bfv) - \sP_*((\bfv, t_k), t_{k+i} - t_k) \right\| \\
    &\leq \sum_{j=k}^{k+i-1} \Lip_{\cZ(t_{j+1})}\left(\bigcirc_{\ell=j+1}^{k+i-1} \sP_*(\cdot, t_{\ell + 1} - t_\ell) \right) \|p_\NN^j - p_*^j\|_{L^\infty([0, 1]^d)} \\
    &= \sum_{j=k}^{k+i-1} \Lip_{\cZ(t_{j+1})}\left(\sP_*(\cdot, t_{k+i} - t_{j+1}) \right) \|p_\NN^j - p_*^j\|_{L^\infty([0, 1]^d)} \\ 
   &\leq \sum_{j=k}^{k+i-1} \frac{\Lip_{\cZ(t_{j+1})}(\sP_*(\cdot, t_{k+i} - t_{j+1}))}{(\sfT - 1) {\rm Lip}(\sP_*)} \epsilon \leq \sum_{j=k}^{k+i-1} \frac{\epsilon}{\sfT} = \frac{i}{\sfT} \epsilon < \epsilon,
\end{align*}

We finish the proof by constructing a single neural network, denoted $\sP_\NN$, which will exactly represent each of the $p^j_\NN$, selecting the correct network based on a time parameter. For all $j \in [\sfT - 1]$, let $\delta_j = \min\{|t_{j+1} - t_j|, |t_j - t_{j-1}|\}$ (and $\delta_1 = |t_2-t_1|$) represent the distance between the time point $t_j$ and the closest other time point. 

We will use the notation $\bfv$ to represent the first $d$-dimensions of  latent code in $\cZ([a, b])$, and we define the function $\tilde{p}^j_\NN : [0,1]^d \times [0, T] \rightarrow [0,1]^d$ (where we recall that $\sigma$ is the ReLU function)

\[ \tilde{p}_\NN^j((\bfv, t)) := \sigma\left(\left( p^j_\NN((\bfv, t)), t_j\right) - \frac{\sqrt{d}}{\delta_j}  (t-t_j) \mathbf{1}_{d+1}  \right). \]

Here, $\mathbf{1}_{d+1} \in \bR^{d+1}$ is the vector of all $1$s. Note that $\tilde{p}_\NN^j \in \cF_\NN(d+1, d+1, 1+L_j, W_j)$, and for all $\bfv \in [0, 1]^d$ and $k \in [\sfT-1]$.

\[\tilde{p}^j_\NN((\bfv, t_k)) = \begin{cases} \left(p^k_\NN((\bfv, t_k)), t_{k+1} \right), & j = k, \\ \mathbf{0}_{d+1}, & j \neq k. \end{cases} \]

Finally, we define the propagator network for all $\bfz \in \bR^{d+1}$ as 
\[\sP_\NN(\bfz) = \sum_{j=1}^{\sfT} \tilde{p}^j_\NN(\bfz).\] 
Then for all $k \in [\sfT-1]$, we have that $\sP_\NN((\bfv, t_k)) = p^k_\NN(\bfz)$. In addition, we can see that for all $j \in [\sfT-1]$, we have
\begin{align*} 
\left(\bigcirc_{j=1}^k {\sP_\NN}\right) (\bfv, t_1) 
&= \left(\bigcirc_{j=1}^{k-1}{\sP_\NN}\right)(\tilde{p}_\NN^1(\bfv, t_1)) \\
&= \left(\bigcirc_{j=1}^{k-1}{\sP_\NN}\right)(p_\NN^1(\bfv), t_2) \\
&= \left(\bigcirc_{j=1}^{k-2}{\sP_\NN}\right)(\tilde{p}_\NN^2(p_\NN^1(\bfv), t_2)) \\
&= \left(\bigcirc_{j=1}^{k-2}{\sP_\NN}\right)(p_\NN^2(p_\NN^1(\bfv)), t_3) \\
&= \cdots \\
&= p_\NN^k(p_\NN^{k-1}(\cdots(p_\NN^1(\bfv))), t_{k+1}).
\end{align*}

We finish by computing the size of $\sP_\NN$. According to Proposition \ref{prop:appxtools} Part 3, the number of layers of $\sP_\NN$ is the sum of the number of layers of each $\tilde{p}_\NN$, so it is $\sum_{t=j}^{\sfT} \left(1 + O\left(\log\left(\frac{\sfT}{\epsilon} \right)\right)\right) = O\left(\sfT \log\left(\frac{ \sfT}{\epsilon}\right)\right)$, and the width is given by $d+1 + 2 \max_j O\left( \left(\frac{\sfT}{\epsilon}\right)^d \right) = O\left( \left(\frac{\sfT}{\epsilon}\right)^{d} \right)$.
\end{proof}

Now we prove Theorem \ref{thm:weldgeneral}. In the proof, we first derive a result analogous to Lemma \ref{lemma:weldlatentode}, and then apply the argument in Theorem \ref{thm:weldlatentode}.

\begin{proof}[Proof of Theorem \ref{thm:weldgeneral}]
We first assume that $\sfW = \sfS$, i.e. the windows are equal to the segments. This is done by following proof of Lemma \ref{lemma:weldlatentode}, but using Lemma \ref{lemma:propappx} instead of Lemma \ref{lemma:propdynamic} to construct $\sP_\NN$. We use the exact same construction as Lemma \ref{lemma:weldlatentode} for the autoencoders and transcoders.

For all $w \in [\sfW]$, consider the oracle propagator $\sP^w_*(x, t) = \sE^w_*(\sF(\sD_*^w(x), t))$. By Lemma \ref{lemma:propappx}, there is a neural network $\sP^w_\NN$ with $O\left(\mathbf{T}_w \log\left( \frac{\sfW \mathbf{T}_w }{\epsilon} \right)\right)$ layers and width $O\left(\left(\frac{\sfW \mathbf{T}_w }{\epsilon}\right)^d\right)$ such that 

\[ \| \bigcirc_{\ell=1}^{\bm{T}_w} \sP_\NN^w - \bigcirc_{\ell=1}^{\bm{T}_w} \sP^w_*(\cdot, t^w_{\ell+1} - t^w_{\ell}) \|_{L^\infty(\cZ( a_w))} < \frac{\epsilon}{(2\sfW + 1) {\rm Lip}(\sD) {\rm Lip}(\sF)}. \]

The composition above makes sense as the range of $\sP_\NN^w$ is contained in $\cZ([s_w, s_{w+1}])$ by construction. This establishes the result for the case that $\sfW = \sfS$. To handle the case that $\sfW > \sfS$, we can use the same argument as in Theorem \ref{thm:weldlatentode} to complete the proof.
\end{proof}

\section{Comparison Model Details}
\subsection{LDNet}
\label{app:ldnet}
We implemented LDNet which was proposed in \cite{regazzoni2024learning}, but we modified the architecture to be grid-dependent. Specifically, since WeldNet has inputs and outputs on a fixed grid, for a more direct comparison we implemented a grid-dependent LDNet such that the reconstruction network has to predict the values on the grid. The original LDNet implementation uses a reconstruction network that inputs the query location and outputs the value of the output field at that location. In other words, we would have to call the reconstruction network $D$ times to output values on a size $D$ grid. Training the grid based LDNet implementation is subsequently much faster and requires much lower computational resources (such as memory); the original LDNet implementation has significant memory requirements for our data. We found that the original model sizes used in \cite{regazzoni2024learning} were too small to perform well on our test problems, so we increased the network size to be comparable to the models in LDNet (e.g. width 500 networks), but this lead to a significant memory requirement.

We present a comparison between the original LDNet implementation and the grid based LDNet model. The original model was trained on an NVIDIA A100 GPU with 80 GBs of memory, while the grid based model was trained on an NVIDIA RTX6000 (i.e. we train with the same resources as WeldNet and other models). Due to memory limitations, we trained the original LDNet using data on a time grid that was \textbf{three} times as coarse as the time used for grid LDNet. With all of these changes, the total training time on the Burgers' scale dataset (with initial conditions from \eqref{eq:burgers}) for the original LDNet model is 1335.3s, while the grid LDNet took 1343.6s to train (on a dataset with 3x as many time steps). 

Table \ref{table:ldnetcompare} shows a comparison of the midway (i.e. middle of the time grid) and final time test errors between the original LDNet and the grid LDNet models. We used the tanh activation function for the reconstruction network for the Burgers' and KdV examples, but we used a ReLU activation for the transport examples. Superior performance of the original LDNet model is observed on the transport examples, but it is matched or outperformed with the grid LDNet. Recall that the original LDNet is trained to predict the evolution of 101 time steps but the grid LDNet is trained to predict 301 time steps. Clearly, a grid based LDNet performs roughly similarly or better to the original LDNet implementation for most of our problems using much less computational resources. We note that the original LDNet implementation has the advantage of being grid independent, so the output predictions can be queried at all locations regardless of the grid it was trained on. This is also advantageous for some datasets such as for the two-hat-shaped initial conditions we used for the transport equation. However, for our purpose of grid based surrogate modeling, a grid based LDNet is faster to train and performs better on lower resources (and on a more fine time grid), so we use that for comparison in the rest of the paper.

\begin{table}[]
\centering
\begin{tabular}{|c|c|c|}
\hline
       & Original-LDNet                            & Grid-LDNet      \\ \hline
bscale & 3.81\% / 5.12\%                           & 0.78\% / 1.36\% \\ \hline
bshift & 6.13\% / 6.87\%                           & 10.4\% / 6.38\% \\ \hline
tscale & 4.12\% / 4.90\%                           & 4.72\% / 8.29\% \\ \hline
tshift & 0.75\% / 0.89\%                           & 5.98\% / 10.3\% \\ \hline
kscale & \textgreater{}100\% / \textgreater{}100\% & 11.1\% / 22.7\% \\ \hline
kshift & 0.95\% / 1.89\%                           & 0.43\% / 0.72\% \\ \hline
\end{tabular}
\label{table:ldnetcompare}
\caption{Comparison of middle and final time relative test errors for original LDNet (with 101 time steps) and grid LDNet (with 301 time steps). Each cell is formatted as MiddleError / FinalError.}
\end{table}

\subsection{DeepONet}
\label{app:don}
Latent-DON consists of an autoencoder (which is identical to the autoencoder used in a one-window WeldNet model) and a DeepONet that predicts the evolution of a latent code from its initial time to a given future time. DeepONet is an operator learning architecture \cite{deeponet} that can be used in this case as time can be considered as the ``input'' to the operator. Specifically, a DeepONet is a function of the form (for some $p \in \bN$):
\begin{equation}
\label{eq:don}
\phi(\vec{z})(t) = \tau_\NN(t) \cdot B_\NN(\vec{z}) 
\end{equation}
Here, $B_\NN$ is a neural network with input dimension $k$ (i.e. the latent space dimension) and output dimension $pk$ which is then reshaped to be dimension $p \times k$. On the other hand, $\tau_\NN$ is a neural network with input dimension $1$ and output dimension $p$, so the product in \eqref{eq:don} outputs a vector of dimension $k$. 

This is the formulation used in \cite{kontolati2024learning}. We implement it in Pytorch such that all components of latent DeepONet (i.e. encoder, decoder, branch net, trunk net) are width 400 and depth 3 feedforward ReLU networks. We use $p=10$ for the Latent-DON comparison in this work.

\section{Error Tables}
\label{app:errortables}
For each dataset, we compute the relative test errors at different times and show them in Table \ref{table:bscale}-\ref{table:kshift}. The best model error in the final time is bolded. A dash ``--'' indicates a relative test over of 10 or higher.

We also display the total training time for each model on the Burgers' scale dataset in Table \ref{table:timesbscaleweld} for WeldNet models and Table \ref{table:timesbscaleall} for comparison models. The total training times for other models are similar. We use NVIDIA RTX 6000 GPUs. For WeldNet-2 and WeldNet-4, we use one GPU per window since the training of each window's models is completely independent from the other windows, but we use only one GPU for other models (there is no easy way to distribute the training among multiple GPUs for those models).

\begin{table}[]
\centering
\begin{tabular}{|c|c|c|c|c|c|}
\hline
       & Weld-1 & Weld-2 & Weld-2$^*$ & Weld-4 & Weld-4$^*$ \\ \hline
Time & 668.5s & 547.7s & 373.2s & 528.1s & 267.6s  \\ \hline
\end{tabular}
\label{table:timesbscaleweld}
\caption{Total training time for each model on $\cM_{bscale}$ data for WeldNet models. $*$ indicates training with multiple GPUs.}
\end{table}

\begin{table}[]
\centering
\begin{tabular}{|c|c|c|c|c|c|}
\hline
       & LDON & LDNet   & Time Input & HDP   & WLaSDI \\ \hline
Time & 65.2s & 1355.3s & 52.0s      & 56.1s & 2285s \\ \hline
\end{tabular}
\label{table:timesbscaleall}
\caption{Total training time for each model on $\cM_{bscale}$ data for comparison models.}
\end{table}

\begin{table}[]
\scriptsize\begin{tabular}{|c|c|c|c|c|c|c|c|c|c|c|}
\hline
Model  & 30 & 60 & 90 & 120 & 150 & 180 & 210 & 240 & 270 & 300 \\ \hline
FF-Weld-1 & 0.0091 & 0.0095 & 0.0111 & 0.0133 & 0.0145 & 0.0178 & 0.0220 & 0.0239 & 0.0241 & 0.0343 \\\hline
FF-Weld-2 & 0.0068 & 0.0045 & 0.0050 & 0.0053 & 0.0058 & 0.0122 & 0.0088 & 0.0089 & 0.0100 & 0.0101 \\ \hline
FF-Weld-4 & 0.0063 & 0.0071 & 0.0059 & 0.0071 & 0.0175 & 0.0073 & 0.0081 & 0.0082 & 0.0084 & 0.0137 \\ \hline
PCA-WeldNet-1 & 0.1875 & 0.1177 & 0.1078 & 0.1176 & 0.1307 & 0.1561 & 0.1798 & 0.1990 & 0.2174 & 0.2404 \\ \hline
PCA-WeldNet-2 & 0.1077 & 0.0420 & 0.0656 & 0.0578 & 0.0909 & 0.1675 & 0.1408 & 0.1204 & 0.1220 & 0.1551 \\ \hline
PCA-WeldNet-4 & 0.0445 & 0.0254 & 0.0292 & 0.0524 & 0.0440 & 0.1145 & 0.0885 & 0.1223 & 0.1213 & 0.1133 \\ \hline
Conv-Weld-1 & 0.0099 & 0.0070 & 0.0064 & 0.0065 & 0.0074 & 0.0072 & 0.0076 & 0.0076 & 0.0080 & 0.0091 \\ \hline
Conv-Weld-2 & 0.0090 & 0.0056 & 0.0055 & 0.0064 & 0.0066 & 0.0121 & 0.0082 & 0.0084 & 0.0090 & 0.0094 \\ \hline
Conv-Weld-4 & 0.0159 & 0.0119 & 0.0101 & 0.0067 & 0.0069 & 0.0089 & 0.0063 & 0.0074 & 0.0057 & \textbf{0.0060} \\ \hline
Latent-DON & 0.0059 & 0.0065 & 0.0069 & 0.0072 & 0.0078 & 0.0083 & 0.0092 & 0.0091 & 0.0092 & 0.0153 \\ \hline
LDNet & 0.0440 & 0.0373 & 0.0370 & 0.0375 & 0.0381 & 0.0374 & 0.0369 & 0.0360 & 0.0388 & 0.0512 \\ \hline
Grid-LDNet & 0.0063 & 0.0070 & 0.0072 & 0.0071 & 0.0078 & 0.0083 & 0.0097 & 0.0097 & 0.0095 & 0.0136 \\ \hline
TimeInput & 0.0258 & 0.0135 & 0.0116 & 0.0126 & 0.0123 & 0.0126 & 0.0145 & 0.0159 & 0.0178 & 0.0176 \\ \hline
HDP &  0.0209 & 0.0313 & 0.0403 & 0.0553 & 0.0825 & 0.0967 & 0.1116 & 0.1313 & 0.1323 & 0.1433 \\  \hline
WLaSDI & 1.045 & 1.042 & 1.055 & 1.072 & 1.097 & 1.137 & 1.192 & 1.263 & 1.359 & 1.486 \\ \hline
\end{tabular}
\caption{Test Errors for Burgers' Scale  about the trajectory manifold of the Burgers' equation \eqref{eq:burgers} with initial conditions in \eqref{eq:bscaleinitial}.}
\label{table:bscale}
\end{table}

\begin{table}[]
\scriptsize\begin{tabular}{|c|c|c|c|c|c|c|c|c|c|c|}
\hline
Model  & 30 & 60 & 90 & 120 & 150 & 180 & 210 & 240 & 270 & 300 \\ \hline
FF-Weld-2 & 0.0232 & 0.0181 & 0.0246 & 0.0427 & 0.0455 & 0.0484 & 0.0363 & 0.0287 & 0.0247 & 0.0242 \\ \hline
FF-Weld-4 & 0.0151 & 0.0364 & 0.0570 & 0.0669 & 0.0623 & 0.0557 & 0.0492 & 0.0536 & 0.0468 & 0.0472 \\ \hline
FF-AENet &  0.0218 & 0.0310 & 0.0468 & 0.0542 & 0.0444 & 0.0364 & 0.0312 & 0.0284 & 0.0289 & 0.0340 \\ \hline
Conv-Weld-2 & 0.0245 & 0.0147 & 0.0170 & 0.0202 & 0.0205 & 0.0252 & 0.0178 & 0.0155 & 0.0171 & \textbf{0.0201} \\ \hline
Conv-Weld-4 & 0.0199 & 0.0116 & 0.0153 & 0.0237 & 0.0278 & 0.0295 & 0.0229 & 0.0253 & 0.0227 & 0.0205 \\ \hline
Conv-AENet & 0.0328 & 0.0204 & 0.0291 & 0.0442 & 0.0554 & 0.0627 & 0.0738 & 0.0867 & 0.0998 & 0.1119 \\ \hline
PCA-WeldNet-2 & 0.1500 & 0.1628 & 0.3493 & 0.4804 & 0.4964 & 0.4950 & 0.4884 & 0.4806 & 0.4727 & 0.4651 \\ \hline
PCA-WeldNet-4 & 0.1484 & 0.1568 & 0.3466 & 0.4830 & 0.4980 & 0.4949 & 0.4882 & 0.4805 & 0.4736 & 0.4719 \\ \hline
PCA-AENet & 0.1521 & 0.2058 & 0.3729 & 0.4967 & 0.5127 & 0.5107 & 0.5060 & 0.5009 & 0.4983 & 0.5021 \\ \hline
Latent-DON & 0.0209 & 0.0291 & 0.0400 & 0.0460 & 0.0386 & 0.0320 & 0.0272 & 0.0248 & 0.0261 & 0.0318 \\ \hline
LDNet & 0.0254 & 0.0399 & 0.1043 & 0.1158 & 0.1041 & 0.0919 & 0.0811 & 0.0722 & 0.0657 & 0.0638 \\ \hline
TimeInput & 0.0291 & 0.0222 & 0.0323 & 0.0548 & 0.0550 & 0.0451 & 0.0366 & 0.0303 & 0.0267 & 0.0265 \\ \hline
HDP &  0.2155 & 0.4972 & 0.8060 & 1.2015 & 1.7128 & 2.3725 & 3.2132 & 4.2975 & 5.6602 & 7.3479 \\  \hline
WLaSDI & 0.782 & 0.839 & 0.854 & 0.835 & 0.844 & 0.910 & 0.978 & 1.045 & 1.199 & 1.380 \\ \hline
\end{tabular}
\caption{Test Errors for Burgers' Shift  about the trajectory manifold of the Burgers' equation \eqref{eq:burgers} with initial conditions in \eqref{eq:bshiftinitial}.}
\label{table:bshift}
\end{table}

\begin{table}[]
\scriptsize\begin{tabular}{|c|c|c|c|c|c|c|c|c|c|c|}
\hline
Model  & 30 & 60 & 90 & 120 & 150 & 180 & 210 & 240 & 270 & 300 \\ \hline
FF-Weld-2 & 0.0095 & 0.0125 & 0.0103 & 0.0139 & 0.0120 & 0.0085 & 0.0140 & 0.0118 & 0.0113 & 0.0142 \\ \hline
FF-Weld-4 & 0.0136 & 0.0112 & 0.0116 & 0.0141 & 0.0125 & 0.0160 & 0.0135 & 0.0094 & 0.0141 & \textbf{0.0102} \\ \hline
FF-AENet & 0.0215 & 0.0204 & 0.0223 & 0.0196 & 0.0195 & 0.0224 & 0.0245 & 0.0217 & 0.0249 & 0.0362 \\ \hline
Conv-Weld-2 & 0.0221 & 0.0234 & 0.0235 & 0.0248 & 0.0181 & 0.0198 & 0.0248 & 0.0250 & 0.0225 & 0.0268 \\ \hline
Conv-Weld-4 & 0.0168 & 0.0163 & 0.0155 & 0.0212 & 0.0202 & 0.0248 & 0.0311 & 0.0292 & 0.0190 & 0.0203 \\ \hline
Conv-AENet & 0.0298 & 0.0427 & 0.0346 & 0.0516 & 0.0445 & 0.0439 & 0.0271 & 0.0324 & 0.0300 & 0.0234 \\ \hline
PCA-WeldNet-2 & 0.5815 & 0.4339 & 0.4880 & 0.5128 & 0.5090 & 0.5813 & 0.4336 & 0.4875 & 0.5101 & 0.5010 \\ \hline
PCA-WeldNet-4 & 0.2867 & 0.1783 & 0.2379 & 0.2582 & 0.1697 & 0.2860 & 0.1790 & 0.2388 & 0.2571 & 0.1705 \\ \hline
PCA-AENet & 0.8214 & 0.5968 & 0.5885 & 0.6678 & 0.6158 & 0.6106 & 0.6205 & 0.6837 & 0.6580 & 0.5872 \\ \hline
Latent-DON & 0.0156 & 0.0155 & 0.0218 & 0.0217 & 0.0165 & 0.0193 & 0.0174 & 0.0204 & 0.0207 & 0.0318 \\ \hline
LDNet & 0.0359 & 0.0416 & 0.0492 & 0.0459 & 0.0472 & 0.0476 & 0.0453 & 0.0504 & 0.0463 & 0.0829 \\ \hline
TimeInput & 0.0580 & 0.0668 & 0.0591 & 0.0669 & 0.0679 & 0.0610 & 0.0640 & 0.0703 & 0.0661 & 0.0921 \\ \hline
HDP &  0.0189 & 0.0348 & 0.0767 & 0.1062 & 0.1366 & 0.1480 & 0.1629 & 0.1946 & 0.1881 & 0.0697 \\  \hline
WLaSDI & 1.210 & 1.671 & 1.521 & 3.540 & 9.388 & 5.028 & 7.136 & -- & -- & -- \\ \hline
\end{tabular}
\caption{Test Errors for Transport Scale  about the trajectory manifold of the transport equation \eqref{eq:transport1d} with initial conditions in \eqref{eq:tscale}.}
\label{table:tscale} 
\end{table}

\begin{table}[]
\scriptsize\begin{tabular}{|c|c|c|c|c|c|c|c|c|c|c|}
\hline
Model  & 30 & 60 & 90 & 120 & 150 & 180 & 210 & 240 & 270 & 300 \\ \hline
FF-Weld-2 & 0.0283 & 0.0295 & 0.0276 & 0.0308 & 0.0305 & 0.0405 & 0.1727 & 0.1273 & 0.0631 & 0.0599 \\ \hline
FF-Weld-4 & 0.0310 & 0.0297 & 0.0353 & 0.0372 & 0.0432 & 0.0357 & 0.0374 & 0.0457 & 0.0559 & 0.0587 \\ \hline
FF-AENet & 0.7640 & 0.8940 & 0.9187 & 0.9386 & 0.9492 & 0.9243 & 0.8009 & 0.8958 & 0.8003 & 1.1303 \\ \hline
Conv-Weld-2 & 0.0207 & 0.0186 & 0.0168 & 0.0189 & 0.0187 & 0.0208 & 0.0213 & 0.0221 & 0.0188 & 0.0210 \\ \hline
Conv-Weld-4 & 0.0173 & 0.0151 & 0.0154 & 0.0186 & 0.0183 & 0.0180 & 0.0213 & 0.0231 & 0.0200 & 0.0227 \\ \hline
Conv-AENet & 0.0312 & 0.0279 & 0.0330 & 0.0293 & 0.0276 & 0.0275 & 0.0254 & 0.0320 & 0.0321 & 0.0319 \\ \hline
PCA-WeldNet-2 & 0.7819 & 0.7177 & 0.6850 & 0.6988 & 0.7210 & 0.7889 & 0.7401 & 0.6998 & 0.7150 & 0.7326 \\ \hline
PCA-WeldNet-4 & 0.6773 & 0.6055 & 0.5905 & 0.6087 & 0.6159 & 0.6777 & 0.6055 & 0.5906 & 0.6095 & 0.6176 \\ \hline
PCA-AENet & 0.8650 & 0.8727 & 0.8775 & 0.9071 & 0.8685 & 0.8806 & 0.8619 & 0.8546 & 0.8281 & 0.8190 \\ \hline
Latent-DON & 0.1120 & 0.1088 & 0.1519 & 0.1396 & 0.1161 & 0.1438 & 0.1252 & 0.1412 & 0.1473 & 0.3016 \\ \hline
LDNet & 0.0503 & 0.0521 & 0.0565 & 0.0589 & 0.0598 & 0.0616 & 0.0609 & 0.0618 & 0.0664 & 0.1029 \\ \hline
TimeInput & 0.1435 & 0.1065 & 0.0966 & 0.1035 & 0.1085 & 0.1059 & 0.1188 & 0.1048 & 0.0955 & 0.1031 \\ \hline
HDP &  0.3034 & 2.3470 & 5.1519 & 8.4939 & -- & -- & -- & -- & -- & -- \\  \hline
WLaSDI & 6.216 & -- & -- & -- & -- & -- & -- & -- & -- & -- \\ \hline
\end{tabular}
\caption{Test Errors for Transport Shift about the trajectory manifold of the transport equation \eqref{eq:transport1d} with initial conditions in \eqref{eq:tshiftinitial}.}
\label{table:tshift}
\end{table}

\begin{table}[]
\scriptsize\begin{tabular}{|c|c|c|c|c|c|c|c|c|c|c|}
\hline
Model  & 30 & 60 & 90 & 120 & 150 & 180 & 210 & 240 & 270 & 300 \\ \hline
FF-Weld-2 & 0.0160 & 0.0249 & 0.0329 & 0.0273 & 0.0281 & 0.0277 & 0.0366 & 0.0332 & 0.0338 & 0.0371 \\ \hline
FF-Weld-4 & 0.0096 & 0.0101 & 0.0127 & 0.0128 & 0.0240 & 0.0166 & 0.0187 & 0.0216 & 0.0272 & 0.0395 \\ \hline
FF-AENet & 0.0795 & 0.1304 & 0.1255 & 0.1307 & 0.1696 & 0.1798 & 0.2025 & 0.2515 & 0.4120 & 0.6350  \\ \hline
Conv-Weld-2 & 0.0261 & 0.0231 & 0.0225 & 0.0235 & 0.0240 & 0.0232 & 0.0288 & 0.0283 & 0.0308 & 0.0316 \\ \hline
Conv-Weld-4 & 0.0106 & 0.0112 & 0.0128 & 0.0173 & 0.0186 & 0.0227 & 0.0206 & 0.0234 & 0.0283 & \textbf{0.0317} \\ \hline
Conv-AENet & 0.0187 & 0.0514 & 0.0864 & 0.1098 & 0.1423 & 0.1818 & 0.2392 & 0.2971 & 0.3628 & 0.3851 \\ \hline
PCA-WeldNet-2 & 0.5672 & 0.3454 & 0.3709 & 0.3762 & 0.4345 & 0.6621 & 0.6137 & 0.5912 & 0.5849 & 0.6024 \\ \hline
PCA-WeldNet-4 & 0.3723 & 0.2473 & 0.2717 & 0.5046 & 0.4217 & 0.5999 & 0.5589 & 0.5352 & 0.6763 & 0.6782 \\ \hline
PCA-AENet & 0.6709 & 0.4938 & 0.4819 & 0.4796 & 0.4769 & 0.4968 & 0.5399 & 0.5860 & 0.6234 & 0.6611 \\ \hline
Latent-DON & 0.2379 & 0.1528 & 0.1858 & 0.1949 & 0.1685 & 0.1378 & 0.1225 & 0.1438 & 0.1914 & 0.3889 \\ \hline
LDNet & 0.0735 & 0.0793 & 0.0890 & 0.1011 & 0.1105 & 0.1262 & 0.1393 & 0.1550 & 0.1734 & 0.2272 \\ \hline
TimeInput & 0.1673 & 0.1505 & 0.1362 & 0.1506 & 0.1570 & 0.1700 & 0.1767 & 0.1878 & 0.2060 & 0.2472 \\ \hline
HDP & 0.0770 & 0.1302 & 0.1688 & 0.2621 & 0.3905 & 0.6109 & 0.8120 & 0.9843 & 1.0881 & 1.1590 \\  \hline
WLaSDI &  0.993 & 1.417 & 2.495 & 5.950 & -- & -- & -- & -- & -- & -- \\ \hline
\end{tabular}
\caption{Test Errors for KdV Scale about the trajectory manifold of the KdV equation \eqref{eq:kdv} with initial conditions in \eqref{eq:kscaleinitial}.}
\label{table:kscale}
\end{table}

\begin{table}[]
\scriptsize\begin{tabular}{|c|c|c|c|c|c|c|c|c|c|c|}
\hline
Model  & 30 & 60 & 90 & 120 & 150 & 180 & 210 & 240 & 270 & 300 \\ \hline
FF-Weld-2 & 0.0028 & 0.0027 & 0.0028 & 0.0025 & 0.0027 & 0.0031 & 0.0030 & 0.0030 & 0.0030 & \textbf{0.0028} \\ \hline
FF-Weld-4 & 0.0026 & 0.0028 & 0.0028 & 0.0027 & 0.0041 & 0.0030 & 0.0032 & 0.0033 & 0.0032 & 0.0046 \\ \hline
FF-AENet & 0.0071 & 0.0069 & 0.0065 & 0.0069 & 0.0063 & 0.0061 & 0.0058 & 0.0051 & 0.0053 & 0.0087 \\ \hline
Conv-Weld-2 & 0.0044 & 0.0034 & 0.0036 & 0.0032 & 0.0036 & 0.0042 & 0.0034 & 0.0031 & 0.0031 & 0.0032 \\ \hline
Conv-Weld-4 & 0.0040 & 0.0030 & 0.0031 & 0.0034 & 0.0031 & 0.0038 & 0.0030 & 0.0030 & 0.0028 & \textbf{0.0028} \\ \hline
Conv-AENet & 0.0050 & 0.0053 & 0.0046 & 0.0042 & 0.0046 & 0.0042 & 0.0039 & 0.0046 & 0.0047 & 0.0053 \\ \hline
PCA-WeldNet-2 & 0.0444 & 0.0233 & 0.0192 & 0.0215 & 0.0230 & 0.0482 & 0.0210 & 0.0177 & 0.0233 & 0.0186 \\ \hline
PCA-WeldNet-4 & 0.0199 & 0.0119 & 0.0055 & 0.0051 & 0.0098 & 0.0168 & 0.0105 & 0.0057 & 0.0059 & 0.0090 \\ \hline
PCA-AENet & 0.1276 & 0.0832 & 0.0672 & 0.0551 & 0.0476 & 0.0542 & 0.0629 & 0.0587 & 0.0473 & 0.0628 \\ \hline
Latent-DON & 0.0047 & 0.0039 & 0.0044 & 0.0044 & 0.0035 & 0.0039 & 0.0038 & 0.0032 & 0.0032 & 0.0051 \\ \hline
LDNet & 0.0049 & 0.0041 & 0.0044 & 0.0045 & 0.0043 & 0.0043 & 0.0042 & 0.0038 & 0.0037 & 0.0072 \\ \hline
TimeInput & 0.0130 & 0.0071 & 0.0081 & 0.0099 & 0.0067 & 0.0074 & 0.0074 & 0.0064 & 0.0073 & 0.0071 \\ \hline
HDP & 0.0115 & 0.0226 & 0.0212 & 0.0253 & 0.0273 & 0.0321 & 0.0319 & 0.0332 & 0.0360 & 0.0426 \\  \hline
WLaSDI & 0.464 & 0.463 & 0.476 & 0.499 & 0.521 & 0.520 & 0.523 & 0.520 & 0.505 & 0.497  \\
\hline
\end{tabular}
\caption{Test Errors for KdV Shift about the trajectory manifold of the KdV equation \eqref{eq:kdv} with initial conditions in \eqref{eq:kshiftinitial}.}
\label{table:kshift}
\end{table}

\begin{table}[h]
\scriptsize\begin{tabular}{|c|c|c|c|c|c|c|c|c|c|c|}
\hline
Model  & 10 & 20 & 30 & 40 & 50 & 60 & 70 & 80 & 90 & 100 \\ \hline
FF-Weld-2 & 0.0051 & 0.0048 & 0.0043 & 0.0042 & 0.0041 & 0.0030 & 0.0030 & 0.0030 & 0.0032 & 0.0034 \\ \hline
FF-Weld-4 & 0.0051 & 0.0045 & 0.0037 & 0.0034 & 0.0031 & 0.0027 & 0.0026 & 0.0027 & 0.0026 & 0.0028 \\ \hline
FF-AENet & 0.0064 & 0.0056 & 0.0051 & 0.0051 & 0.0045 & 0.0047 & 0.0045 & 0.0042 & 0.0036 & 0.0043 \\ \hline
Latent-DON & 0.0080 & 0.0063 & 0.0052 & 0.0046 & 0.0044 & 0.0047 & 0.0047 & 0.0049 & 0.0045 & 0.0061 \\ \hline
LDNet & 0.0086 & 0.0074 & 0.0072 & 0.0072 & 0.0066 & 0.0065 & 0.0063 & 0.0063 & 0.0073 & 0.0095 \\ \hline
TimeInput & 0.0155 & 0.0129 & 0.0121 & 0.0120 & 0.0118 & 0.0119 & 0.0121 & 0.0115 & 0.0140 & 0.0194 \\ \hline
HDP & 0.0389 & 0.0312 & 0.0298 & 0.0294 & 0.0274 & 0.0262 & 0.0267 & 0.0269 & 0.0250 & 0.0316 \\ \hline
\end{tabular}
\caption{Test Errors for Shallow Water about the trajectory manifold of the KdV equation \eqref{eq:shallow} with initial conditions in \eqref{eq:shallowinitial}.}
\label{table:shallow}
\end{table}

\end{document}